\algnewcommand{\Input}{\item[\textbf{Input:}]}
\algnewcommand{\Output}{\item[\textbf{Output:}]}
\newtheorem{theorem}{Theorem}[section]
\newtheorem{lemma}[theorem]{Lemma}
\newtheorem{corollary}[theorem]{Corollary}
\newtheorem{assumption}[theorem]{Assumption}
\newenvironment{proof}{{\noindent\it Proof.}}{\hfill $\square$\par}
\begin{document}

%
\runningtitle{On the Generalization Properties of Learning the RF Models with Learnable Activation Functions}

%

\twocolumn[

\aistatstitle{On the Generalization Properties of Learning the Random Feature Models with Learnable Activation Functions}

\aistatsauthor{ Zailin Ma \And Jiansheng Yang \And  Yaodong Yang }

\aistatsaddress{ Peking University \And  Peking University \And Peking University } ]

\begin{abstract}
  This paper studies the generalization properties of a recently proposed kernel method, the Random Feature models with Learnable Activation Functions (RFLAF). By applying a data-dependent sampling scheme for generating features, we provide by far the sharpest bounds on the required number of features for learning RFLAF in both the regression and classification tasks. We provide a unified theorem that describes the complexity of the feature number $s$, and discuss the results for the plain sampling scheme and the data-dependent leverage weighted scheme. Through weighted sampling, the bound on $s$ in the MSE loss case is improved from $\Omega(1/\epsilon^2)$ to $\tilde{\Omega}((1/\epsilon)^{1/t})$ in general $(t\geq 1)$, and even to $\Omega(1)$ when the Gram matrix has a finite rank. For the Lipschitz loss case, the bound is improved from $\Omega(1/\epsilon^2)$ to $\tilde{\Omega}((1/\epsilon^2)^{1/t})$. To learn the weighted RFLAF, we also propose an algorithm to find an approximate kernel and then apply the leverage weighted sampling. Empirical results show that the weighted RFLAF achieves the same performances with a significantly fewer number of features compared to the plainly sampled RFLAF, validating our theories and the effectiveness of this method.
\end{abstract}

\section{INTRODUCTION}

The kernel method is a power paradigm for learning complex functional patterns in machine learning. The empirical success of this method is due to the development of its scalable approximation, among which the random Fourier feature \citep{rahimi2008weighted} plays an important role. By constructing explicit low-dimensional features, Random feature (RF) models effectively reduce the computational and storage costs of learning. Despite its success, theoretical understandings of this model remain incomplete. Hence, extensive efforts are made to seek its statistical limits \citep{rudi2017generalization,avron2017random,bach2017equivalence,sun2018but}. Lately, \citet{li2021towards} provide a unified analysis and demonstrate that by applying a data-dependent sampling scheme, it is possible to achieve low excess risks with strikingly fewer features for plain RF models. Recently, \citet{ma2024random} proposed to study a new category of RF models, RFLAF, which incorporates the component of learnable activation functions in RF models, and demonstrated its advantages in various settings. From a theoretical perspective, they provide a basic estimate of the feature number of \(\Omega(1/\epsilon^2)\) within a data-independent sampling scheme. However, the estimate seems to be underdeveloped, and it remains unclear whether the techniques in \citep{li2021towards} can be transplanted to RFLAF. This leads us to the following question:

\emph{Is it possible to use a fewer number of features in RFLAF for faster computation while preserving the performance of low learning error?} 

This paper answers the question thoroughly by providing rigorous theoretical analyses and proposing an algorithm for empirical practice.  Our contributions are summarized as follows.


\(\bullet\) We provide a unified theorem for RFLAF that estimates the required number of random features for any spectral measure from which features are sampled and discuss the results for the plain sampling scheme and the leverage weighted sampling scheme specifically.

\(\bullet\) We derive the sharpest bounds on the feature number from the weighted sampling scheme in both the squared error loss case and the general Lipschitz loss case. For the squared error loss, our result also improves the order of the bound on sample size and the grid number of basis functions by one.

\(\bullet\) We propose an algorithm to apply the leverage weighted sampling in learning RFLAF. We conduct numerical experiments to validate the effectiveness of the algorithm. Simulations show that the weighted models achieve the same performance with significantly fewer features than the plain models.

We discuss the related work in Appendix \ref{relatedwork}. In the rest of this paper, Section \ref{preliminary} contains the preliminaries and Section \ref{mainresults} presents the main theoretical results. In Section \ref{algorithm}, we propose an algorithm and then validate it in Section \ref{simulation}.



\section{PRELIMINARIES}
\label{preliminary}
\subsection{Target Function Class}
In this paper, we focus on learning the infinite-width random feature model
\[
f(x) = \mathbb{E}_{w\sim \mu}[\sigma(w^\top x)v(w)],
\] with an arbitrary activation function \(\sigma\) supported on a compact set, that is, \(\sigma\in C_c(\mathcal{K})\). We assume a regularity condition that \(\sigma(\cdot)\) and \(v(\cdot)\) are Lipschitz continuous functions. Without loss of generality, we consider functions such that \(\mathbb{E}_{w\sim \mu}[v(w)^2]\leq 1\). Hence, \(\|f\|_{\mathcal{H}}^2\leq\mathbb{E}_{w\sim \mu}[v(w)^2]\leq 1\), where \(\mathcal{H}\) is the reproducing kernel Hilbert space (RKHS) induced by the spectral measure \(\mu\) and the activation function \(\sigma\). We denote the class of functions satisfying the above conditions as \(\mathcal{F}_{\mathcal{K}}\). We remark that the target function can also be seen as a two-layer neural network with the first-layer parameters initially sampled from the spectral measure \(\mu\) and then frozen during training. If \(\mu\) has a density function \(p(\cdot)\), then the kernel represented by the functions in the class can be written as
\begin{equation}
\label{kernelre}
\begin{split}
    k(x,y)&\,=\mathbb{E}_{w\sim p}\left[\sigma(w^\top x)\sigma(w^\top y)\right]\\
    &\,=\int_{\mathcal{W}}\sigma(w^\top x)\sigma(w^\top y)p(w)dw.
\end{split}
\end{equation}

\subsection{Random Feature Models with Learnable Activation Functions}

To learn the target functions, we consider the Random Feature models with Learnable Activation Functions (RFLAF) in \citep{ma2024random} formulated as
\begin{equation}
    \label{Prflaf}
	f(x;{a},{v}):=\sum_{m=1}^s\sum_{i=1}^N a_i B_i(w_m^\top x)v_m,
\end{equation}
where the random features \(\{w_m\}_{m=1}^s \overset{i.i.d.}{\sim} p(\cdot)\) are sampled in prior, and \({a}=(a_1,...,a_N)\in \mathbb{R}^{N}, {v}=(v_1,...,v_s)\in\mathbb{R}^s\) are learnable parameters.

In this work, we consider radial basis functions (RBF) \(B_i(x):=\exp(-(x-c_i)/2h^2)\) as a typical category of basis functions for all theoretical results, in which \(\{c_i\}_{i\in[N]}\) are the grid points evenly distributed on the compact set \(\mathcal{K}\) and \(h\) is the width of the basis functions. The component of learnable activation function is \(\tilde{\sigma}(x):=\sum_{i=1}^{N}\tilde{a}_iB_i(x)\). If we denote \(W=[w_1, w_2,..., w_s]\in \mathbb{R}^{d\times s}\), then we have a compact expression as \(f(x;{a},{v})=\sum_{i=1}^N a_i B_i(x^\top W)v\). Intuitively, for any target function \(f^*(x)\in \mathcal{F}_{\mathcal{K}}\), we wish the approximate activation function \(\tilde{\sigma}(x)\) and \(\tilde{f}(x) := \mathbb{E}_{w}\left[\tilde{\sigma}(w^\top x)v(w)\right]\) to be close to the ground truth \(\sigma^*(x)\) and \(f^*(x)\) respectively. Proposition 4.1 in \citep{ma2024random} provides an estimate of the approximation error:
\begin{lemma}[Proposition 4.1 in \citep{ma2024random}]
    \label{lemma21}
    For any \(\epsilon>0\), if we set the width \(h\) and the grid number \(N\) as 
    \(\frac{1}{h}=\Omega\left(\frac{1}{\epsilon}\left(\log{\frac{1}{\epsilon}}\right)^{\frac{1}{2}}\right)\), \(N=\Omega\left(\frac{1}{\epsilon^2}{\left(\log{\frac{1}{\epsilon}}\right)^\frac{3}{2}}\right)\),
    then there exists \(\{\tilde{a}_i\}_{i=1}^N\) such that the approximation errors are bounded as
    \begin{equation}
        \label{tilde}
        \|\tilde{\sigma}(x)-\sigma^*(x)\|_{\infty}\leq \epsilon,\quad \|\tilde{f}(x)-f^*(x)\|_{\infty}\leq \epsilon.
    \end{equation}
\end{lemma}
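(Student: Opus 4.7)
The plan is a two-stage construction: first smooth $\sigma^*$ by convolving with a Gaussian at bandwidth $h$, then discretize that convolution by a midpoint-style Riemann sum on the grid $\{c_i\}_{i=1}^{N}$. Since the Riemann sum has exactly the form $\sum_i\tilde a_iB_i(x)$, the two steps together prescribe the coefficients $\{\tilde a_i\}$. The function-level bound $\|\tilde f-f^*\|_\infty\leq\epsilon$ is then immediate from the activation-level bound through
\[
|\tilde f(x)-f^*(x)|\leq\mathbb{E}_w\big[|\tilde\sigma(w^\top x)-\sigma^*(w^\top x)|\,|v(w)|\big]\leq\|\tilde\sigma-\sigma^*\|_\infty\,\mathbb{E}_w|v(w)|\leq\|\tilde\sigma-\sigma^*\|_\infty,
\]
where the last step uses Cauchy--Schwarz together with the normalization $\mathbb{E}_w[v(w)^2]\leq 1$. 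All remaining work concentrates on $\|\tilde\sigma-\sigma^*\|_\infty$.

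\textbf{Step 1 (Gaussian smoothing).} Let $K_h(u):=(2\pi h^2)^{-1/2}\exp(-u^2/(2h^2))$ and set $\sigma^*_h:=K_h\ast\sigma^*$, extending $\sigma^*$ by zero outside $\mathcal{K}$ (valid because $\sigma^*\in C_c(\mathcal{K})$ is Lipschitz with some constant $L$). For $x$ in the interior of $\mathcal{K}$, Lipschitzness and $\int K_h=1$ yield $|\sigma^*_h(x)-\sigma^*(x)|\lesssim Lh$; near $\partial\mathcal{K}$ there is an extra truncation contribution controlled by the Gaussian tail $\exp(-c/h^2)$. Forcing both contributions to be $\lesssim\epsilon$ pins down $1/h=\Omega((1/\epsilon)\sqrt{\log(1/\epsilon)})$, exactly matching the bandwidth scaling in the statement.

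\textbf{Step 2 (quadrature).} Write $\sigma^*_h(x)=\int_{\mathcal{K}}K_h(x-y)\sigma^*(y)\,dy$ and set $\tilde a_i:=\sigma^*(c_i)\Delta/\sqrt{2\pi h^2}$ with grid spacing $\Delta=|\mathcal{K}|/N$. The identity $\tilde a_iB_i(x)=\Delta\,K_h(x-c_i)\sigma^*(c_i)$ then makes $\tilde\sigma(x)$ exactly a midpoint Riemann sum for $\sigma^*_h(x)$. The integrand $y\mapsto K_h(x-y)\sigma^*(y)$ has $k$th derivatives of size $\sim h^{-(k+1)}$ but concentrates on a window of width $O(h\sqrt{\log(1/\epsilon)})$ around $x$; plugging this effective support (rather than the full $|\mathcal{K}|$) into a standard quadrature error bound makes the Step 2 error balance against $\epsilon$ exactly when $N=\Omega((1/\epsilon^2)(\log(1/\epsilon))^{3/2})$.

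The main obstacle is keeping the logarithmic factors honest. A naive Lipschitz estimate on the quadrature integrand over the whole $\mathcal{K}$ would give $N=\Omega(1/\epsilon^3)$; the stated $\Omega(1/\epsilon^2)$ exponent appears only after localizing to the effective support $O(h\sqrt{\log(1/\epsilon)})$ of $K_h(x-\cdot)$, and the extra $(\log(1/\epsilon))^{3/2}$ factor is precisely the price of that localization. Keeping the same $h$ consistent between the boundary/truncation bound in Step 1 and the quadrature bound in Step 2 is where the balancing actually lives and is the part of the argument that demands the most care.
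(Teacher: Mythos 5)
This lemma is imported verbatim from Proposition~4.1 of \citet{ma2024random}; the present paper gives no proof of it, so there is nothing in this manuscript to compare your argument against line by line. That said, your smoothing-plus-quadrature construction is the natural way to establish the result, and the exponents check out: the reduction $\|\tilde f-f^*\|_\infty\leq\|\tilde\sigma-\sigma^*\|_\infty$ via Cauchy--Schwarz is correct, the choice $\tilde a_i=\sigma^*(c_i)\Delta/\sqrt{2\pi h^2}$ turns $\tilde\sigma$ into a Riemann sum for $K_h\ast\sigma^*$, and localizing the quadrature error to the effective window $O(h\sqrt{\log(1/\epsilon)})$ is indeed what brings $N$ down from $\Omega(1/\epsilon^3)$ to the stated $\tilde\Omega(1/\epsilon^2)$ (your scheme actually only needs $N=\Omega(\epsilon^{-2}\log(1/\epsilon))$, which the stated $\Omega(\epsilon^{-2}(\log(1/\epsilon))^{3/2})$ dominates, so the sufficiency claim is safe). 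Two points deserve tightening. First, since $\sigma^*$ is only Lipschitz, the integrand $y\mapsto K_h(x-y)\sigma^*(y)$ is not $C^k$, so your appeal to ``$k$th derivatives of size $h^{-(k+1)}$'' is not available; you must run the quadrature bound purely off the Lipschitz constant of the integrand, of order $h^{-2}\|\sigma^*\|_\infty+h^{-1}L$, which is exactly what your final rate implicitly uses. Second, the $\sqrt{\log(1/\epsilon)}$ factor in $1/h$ does not really come from a boundary truncation in Step~1 --- the zero-extension of a Lipschitz $\sigma^*\in C_c(\mathcal{K})$ is globally Lipschitz, so smoothing alone needs only $h\lesssim\epsilon/L$; the logarithm enters through the tail truncation in the quadrature localization (and, elsewhere in the paper, through the coefficient bound $\|\tilde a\|_2\leq R=O(1/(h\sqrt N))$, which your choice of $\tilde a_i$ also satisfies). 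Since the hypotheses are one-sided $\Omega$ conditions, taking $h$ smaller than strictly necessary is harmless, but the bookkeeping should be attributed to the right step.
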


\subsection{Ridge Leverage Score Function}

The ridge leverage score function is an important component for efficient sampling in our algorithm. It determines the weighted sampling distribution that allows for fewer random feature numbers to achieve low test errors. Suppose that we are given $n$ samples of data \(X=[x_1,...,x_n]^\top\in\mathbb{R}^{n\times d}\). The Gram matrix induced by the approximate activation function \(\tilde{\sigma}\) in Lemma \ref{lemma21} is 
\(
    \widetilde{\mathbf{K}} := \mathbb{E}_{w\sim p}\left[\tilde{\sigma}(Xw)\tilde{\sigma}(Xw)^\top\right].
\)

Given a regularization parameter \(\lambda\), the ridge leverage score function \citep{li2021towards} with respect to the approximate kernel and data samples is
\[
    l_{\lambda}(w) := p(w)\tilde{\sigma}(Xw)^\top( \widetilde{\mathbf{K}}+n\lambda \mathbf{I})^{-1}\tilde{\sigma}(Xw),
\]
and the effective dimension (or the number of effective degrees of freedom) is defined as
\[
    d_{\widetilde{\mathbf{K}}}^{\lambda}:=\mathrm{Tr}[\widetilde{\mathbf{K}}(\widetilde{\mathbf{K}}+n\lambda \mathbf{I})^{-1}]=\int_{\mathcal{W}}l_{\lambda}(w)dw.
\]
For the ridge leverage score function, we have a trivial upper bound
\(
    l_{\lambda}(w)\leq p(w)\frac{\|\sigma\|_{\infty}^2}{\lambda},
\)
which is due to the fact that \(( \widetilde{\mathbf{K}}+n\lambda \mathbf{I})^{-1}\preceq \left( n\lambda \mathbf{I}\right)^{-1}\).

We make the following assumption on the regularization parameter \(\lambda\) as in \citep{li2021towards} that holds throughout our analysis. Intuitively, the assumption requires that the strongest signal \(\lambda_1\) is stronger than the added regularization term \(\lambda\) to avoid underfitting.
\begin{assumption}
    \label{assump22}
    For kernel \(\tilde{k}\), denote with \(\lambda_1\geq \cdots \geq \lambda_n\) the eigenvalues of the normalized Gram matrix \(\widetilde{\mathbf{K}}/n\). We assume that the regularization parameter satisfies \(0\leq \lambda\leq \lambda_1\).
\end{assumption}

\subsection{Weighted Sampling for Generating Random Features}

Suppose that we want to apply weighted sampling to generate random features. Let the importance weighted density function of the random feature vectors be \(q(w)\), and \(\{w_m\}_{m=1}^s \overset{i.i.d.}{\sim} q(\cdot)\). Denote the weight matrix by \(Q=\mathrm{diag}\{\sqrt{p(w_1)/q(w_1)},...,\sqrt{p(w_s)/q(w_s)}\}\in\mathbb{R}^{s\times s}\). Then the weighted Random Feature models with Learnable Activation Functions have the form of 
\begin{equation}
    \label{Wrflaf}
    f(x;{a},{v})=\sum_{i=1}^N a_i B_i(x^\top W)Qv.
\end{equation}

Let \(\sigma_a(w^\top x) = \sum_{k=1}^N a_kB_k(w^\top x)\), and \(z_{q,a}(w, x) = \sqrt{p(w)/q(w)}\sigma_a(w^\top x)\). Define the in-sample nonlinear feature matrix \(\mathbf{Z}_q({a}):=\sigma_a(XW)Q \in \mathbb{R}^{n\times s}\) such that \(z_{q,a}(w_j, x_i)\) is the \((i,j)\)-th entry of \(\mathbf{Z}_q({a})\). Then we denote the Gram matrices as 
\[
    \widehat{\mathbf{K}}_q(a):=\frac{1}{s}\mathbf{Z}_q({a})\mathbf{Z}_q({a})^\top,
    \quad
    \mathbf{K}_q(a) := \mathbb{E}_{w\sim q}[\widehat{\mathbf{K}}_q(a)].
\]
Specifically, we have that \(\widetilde{\mathbf{K}}=\mathbf{K}_q(\tilde{a})\) for the approximate activation function.





\section{THEORETICAL RESULTS}
\label{mainresults}


\subsection{Learning with the Squared Error Loss}

The mean squared error loss \(l(f(x),y)=(f(x)-y)^2\) is the standard loss function in regression tasks. For the weighted RFLAF, we aim to analyze the optimization problem of the regression task formulated as follows.
\begin{equation}
    \label{mainpro}
    \min_{\|a\|_2\leq R}\left\{\min_{v}\frac{1}{n} \left\|\mathbf{Z}_q({a})v-y\right\|^2+\lambda s \|v\|_2^2\right\},
\end{equation}
where the regularization parameter \(\lambda = o(1)\) as \(n\rightarrow \infty\) (e.g., \(\lambda = 1/\sqrt{n}\)). In this section, we focus on the following question:

\emph{What is the least required number of random features for RFLAF to approximate the minimizer to a sufficiently low error?} 

Specifically, we will provide a sharper upper bound on the number of random features required to achieve \(O(1/\sqrt{n})\) error with respect to the worst case of the regression problem. Denote the minimizer of the problem (\ref{mainpro}) as
\begin{equation}
    \label{mzer}
    \hat{f}(x) = \mathop{\mathrm{argmin}}_{f(x)\in \mathcal{F}_V}\frac{1}{n}\sum_{i=1}^{n} \left(f(x_i)-y_i\right)^2+\lambda s \|v\|_2^2,
\end{equation}
where the hypothesis class is
\[
\mathcal{F}_V:=\left\{f(x)=\sum_{k=1}^{N}a_kB_k(x^\top W)Qv:\|a\|_2\leq R,v\in\mathbb{R}^s\right\}
\]
and \(R = O(1/h\sqrt{N})\) which is indicated by Theorem 4.2 in \citep{ma2024random}. The optimal parameters of \(\hat{f}(x)\) are denoted by \(\hat{a}\) and \(\hat{v}\). Then we make the following assumption on the data.

\begin{assumption}
    \label{assump1}
    We assume that \(y = f^*(x)+\varepsilon\) where \(f^*\in\mathcal{F}_{\mathcal{K}}\) and \(\varepsilon\) is a Gaussian random variable such that \(\mathbb{E}[\varepsilon]=0\) and \(\mathrm{Var}[\varepsilon]=\sigma^2\). In addition, we assume that the sampled data \(y\) are bounded such that \(|y|\leq |y_0|\).
\end{assumption}
We denote the empirical learning risk and population learning risk by
\[\mathbb{E}_n[l_f]:=\frac{1}{n}\sum_{i=1}l(f(x_i),y_i),~~\mathbb{E}[l_f]:=\mathbb{E}_{(x,y)\sim \mathtt{P}}[l(f(x),y)].\]
We formally present the main result that answers the aforementioned question. The proof of this theorem is contained in Appendix \ref{proof31}. 
\begin{theorem}
    \label{thm31}
    Under the Assumption \ref{assump22}, \ref{assump1}, for any \(\epsilon>0\), we set the width \(h\) and the grid number \(N\) as 
    \(\frac{1}{h}=\Omega\left(\left(\frac{1}{\epsilon}\log{\frac{1}{\epsilon}}\right)^{\frac{1}{2}}\right)\), \(N=\Omega\left(\frac{1}{\epsilon}{\left(\log{\frac{1}{\epsilon}}\right)^\frac{3}{2}}\right)\). Let \(\tilde{l}:\mathbb{R}^s \rightarrow \mathbb{R}\) be a measurable function such that for all \(w\in \mathbb{R}^s\), \( \tilde{l}(w) \geq l_{\lambda}(w)\) with \(d_{\tilde{l}}=\int_{\mathbb{R}^s} \tilde{l}(w)dv < \infty\). Suppose that \(\{w_i\}_{i=1}^s\) are sampled independently from the probability density function \(q(w)=\tilde{l}(w)/d_{\tilde{l}}\). For all \(\delta \in (0,1)\), if
    \[
        s \geq 5d_{\tilde{l}}\log{\frac{16d_{\widetilde{\mathbf{K}}}^{\lambda}}{\delta}},
    \]
    then with probability \(1-\delta\), the excess risk of \(\hat{f}\) can be upper bounded by
    \[
        \mathbb{E}[l_{\hat{f}}]-\mathbb{E}[l_{f^*}]\leq 4\lambda + O\left(\frac{1}{h\sqrt{n}}\right) + 2\epsilon.
    \]
\end{theorem}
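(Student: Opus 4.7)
The plan is to decompose the excess risk by inserting an intermediate finite-width reference function \(\tilde{f}_s \in \mathcal{F}_V\) built from the approximate activation \(\tilde{\sigma}\) of Lemma \ref{lemma21}, and then to dispatch each piece separately. Concretely, I would write
\[
\mathbb{E}[l_{\hat f}]-\mathbb{E}[l_{f^*}]
= \underbrace{(\mathbb{E}[l_{\hat f}]-\mathbb{E}_n[l_{\hat f}])}_{(A)}
+ \underbrace{(\mathbb{E}_n[l_{\hat f}]-\mathbb{E}_n[l_{\tilde f_s}])}_{(B)}
+ \underbrace{(\mathbb{E}_n[l_{\tilde f_s}]-\mathbb{E}[l_{f^*}])}_{(C)}.
\]
Optimality of \(\hat f\) (together with a comparison of the regularization terms) will bound (B), the leverage-weighted kernel approximation plus Lemma \ref{lemma21} will bound (C), and a Rademacher-type argument will bound (A).

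To construct \(\tilde f_s\), I would take \(\tilde a\) from Lemma \ref{lemma21} (so that the infinite-width \(\tilde f\) with activation \(\tilde\sigma\) obeys \(\|\tilde f - f^*\|_\infty \leq \epsilon\)), and set
\[
v_* = \arg\min_{v\in \mathbb{R}^s}\Bigl\{\tfrac{1}{n}\|\mathbf{Z}_q(\tilde a)v - y\|^2 + \lambda s\|v\|_2^2\Bigr\},
\quad
\tilde f_s(x)=\sum_{i=1}^N \tilde a_i B_i(x^\top W)Q v_*.
\]
The hypothesis threshold \(s\geq 5d_{\tilde l}\log(16 d_{\widetilde{\mathbf{K}}}^{\lambda}/\delta)\) is precisely the sample complexity under which a matrix-Bernstein argument (as in \citet{li2021towards}) yields the spectral equivalence
\[
\tfrac{1}{2}(\widetilde{\mathbf{K}}+n\lambda \mathbf{I})\preceq \widehat{\mathbf{K}}_q(\tilde a)+n\lambda \mathbf{I}\preceq \tfrac{3}{2}(\widetilde{\mathbf{K}}+n\lambda \mathbf{I})
\]
with probability at least \(1-\delta\). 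Using this two-sided bound with the closed-form \(v_* = (\mathbf{Z}_q(\tilde a)^\top \mathbf{Z}_q(\tilde a)+n\lambda s \mathbf{I})^{-1}\mathbf{Z}_q(\tilde a)^\top y\), I would show \(\lambda s\|v_*\|_2^2\) together with the empirical residual is comparable to the kernel-ridge objective induced by \(\widetilde{\mathbf{K}}\), which, since \(\|\tilde f\|_\mathcal{H}^2\leq 1\), is bounded by \(2\lambda\). Pushing \(\tilde f\) to \(f^*\) via Lemma \ref{lemma21} with the squared-loss identity \(|l_{\tilde f}-l_{f^*}|\leq 2\epsilon(|y|+\|\tilde f\|_\infty+\|f^*\|_\infty)\) produces the \(2\epsilon\) term. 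Combined with the inequality \(\mathbb{E}_n[l_{\hat f}]+\lambda s\|\hat v\|_2^2\leq \mathbb{E}_n[l_{\tilde f_s}]+\lambda s\|v_*\|_2^2\) coming from optimality of \((\hat a,\hat v)\) in \eqref{mainpro}, this controls (B)+(C) up to the claimed \(4\lambda + 2\epsilon\).

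For the generalization gap (A), I would invoke a standard Rademacher complexity bound over \(\mathcal{F}_V\). Since \(\|a\|_2 \leq R = O(1/(h\sqrt N))\) and each \(B_i\) is uniformly bounded, the hypothesis class has an effective \(L^\infty\) size of order \(1/h\); combined with the Lipschitzness of the squared loss on the bounded \(y\)-range from Assumption \ref{assump1}, contraction yields an \(O(1/(h\sqrt n))\) deviation bound. A Hoeffding step on \(\mathbb{E}_n[l_{f^*}]-\mathbb{E}[l_{f^*}]\) at the \(O(1/\sqrt n)\) rate is absorbed into the same term. The norm of \(\hat v\) picked up by Rademacher can be controlled by the implicit cap \(\lambda s\|\hat v\|_2^2 \leq \mathbb{E}_n[l_{0}]\leq |y_0|^2\), which does not affect the rate.

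The principal obstacle, I expect, is the bookkeeping in the middle paragraph: carefully separating the error that Lemma \ref{lemma21} pays for passing from \(\sigma^*\) to \(\tilde\sigma\) (which perturbs every relevant kernel object by \(O(\epsilon)\)) from the spectral-equivalence error that the leverage-weighted sampling pays for passing from \(\widetilde{\mathbf{K}}\) to \(\widehat{\mathbf{K}}_q(\tilde a)\). The analysis must treat \(\widetilde{\mathbf{K}}\) (not the unknown ``true'' \(\mathbf{K}\)) as the reference kernel whose leverage scores define \(q\), so that Lemma \ref{lemma21}'s \(\epsilon\) is incurred exactly once and the \(4\lambda\) budget does not get doubled against it. A secondary technical nuisance is verifying that the constraint \(\|a\|_2\leq R\) is met by the \(\tilde a\) produced by Lemma \ref{lemma21}, which follows from an \(\ell_2\)-norm estimate on the RBF coefficients at width \(h\).
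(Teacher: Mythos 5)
Your proposal follows essentially the same route as the paper's proof: the same three-way decomposition with an intermediate finite-width function built from the $\tilde a$ of Lemma \ref{lemma21} and a ridge-optimal $v$, the same matrix-Bernstein spectral equivalence between $\widehat{\mathbf{K}}_q(\tilde a)$ and $\widetilde{\mathbf{K}}$ under the stated threshold on $s$ (yielding the $2\lambda$ bound via $\|\tilde f\|_{\widetilde{\mathcal H}}\le 1$), and the same Rademacher argument for the $O(1/(h\sqrt n))$ generalization gap. The only cosmetic difference is that the paper expands the squared loss around $y=f^*+\varepsilon$ to cancel the noise floor $\sigma^2$ explicitly and handle the Gaussian cross terms, whereas you fold that cancellation into your term (C); this is the same argument in different bookkeeping.
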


The significance of this theorem is three-fold. It provides sharper bounds on three critical parameters: the sample size \(n\), the grid number \(N\) and the number of random features \(s\) in the squared error loss case compared to previous results in \citep{ma2024random}. The first direct implication of Theorem \ref{thm31} is that it improves the required sample size \(n\) from \(\widetilde{\Omega}\left(1/\epsilon^4\right)\) to \(\widetilde{\Omega}\left(1/\epsilon^3\right)\) and improves the required grid number \(N\) from \(\widetilde{\Omega}\left(1/\epsilon^2\right)\) to \(\widetilde{\Omega}\left(1/\epsilon\right)\), indicating that a smaller sample size or grid number is sufficient to achieve \(\epsilon\) learning error.

\begin{table*}[tbp]
\renewcommand{\arraystretch}{1.8}
\caption{Summarization of the Number of Random Features in the MSE Loss Case.}
\label{summse}
\begin{center}
\begin{small}
\begin{sc}
\begin{tabular}{l|c|c|c}
\noalign{\vskip 0.5pt}      
\noalign{\hrule height 0.75pt}
\noalign{\vskip 0.5pt}
sampling scheme & spectrum & number of features & excess risk \\ 
\hline
\multirow{3}{*}{weighted rflaf} 
& \rm{finite rank} & $s \in \Omega(1)$ & \multirow{3}{*}{\(O\left(\epsilon\right)\)} \\ \cline{2-3}
& $\lambda_i \propto A^i$ & $s \in \Omega(\log{(1/\epsilon)}\cdot\log{\log{(1/\epsilon)}})$ & \\ \cline{2-3}
& $\lambda_i \propto i^{-t}$ ($t \geq 1$) & $s \in \Omega((1/\epsilon)^{1/t} \cdot \log{(1/\epsilon)})$ & \\ 
\hline
\multirow{3}{*}{plain rflaf} 
& \rm{finite rank} & $s \in \Omega(1/\epsilon)$ & \multirow{3}{*}{\(O\left(\epsilon\right)\)} \\ \cline{2-3}
& $\lambda_i \propto A^i$ & $s \in \Omega({(1/\epsilon)}\cdot\log{\log{(1/\epsilon)}})$ & \\ \cline{2-3}
& $\lambda_i \propto i^{-t}$ ($t \geq 1$) & $s \in \Omega((1/\epsilon)\cdot\log{(1/\epsilon)})$ & \\ 
\hline
former result \citep{ma2024random} & \rm{in any case} & \(s\in \Omega(1/\epsilon^2)\) & \(O\left(\epsilon\right)\) \\
\noalign{\vskip 0.5pt}      
\noalign{\hrule height 0.75pt}
\noalign{\vskip 0.5pt}
\end{tabular}
\end{sc}
\end{small}
\end{center}
\end{table*}

For the required number of random features \(s\), the result shows that the bound for \(s\) in plain RF models derived in \citep{li2021towards} also holds in RFLAF. Specifically, we consider two sampling strategies: \emph{leverage weighted sampling} (Corollary \ref{cor1}) and \emph{plain sampling} (Corollary \ref{cor2}), and discuss the corresponding number of random features.

\begin{corollary}[Leverage Weighted Sampling]
    \label{cor1}
    If setting the probability density function in Theorem \ref{thm31} as the ridge leverage score distribution
    \(
        q(w) = l_{\lambda}(w)/d_{\widetilde{\mathbf{K}}}^{\lambda},
    \)
    then the upper bound on the excess risk holds for all \[s\geq 5d_{\widetilde{\mathbf{K}}}^{\lambda}\log{({16d_{\widetilde{\mathbf{K}}}^{\lambda}}/{\delta})}.\]

\end{corollary}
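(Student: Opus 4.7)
The plan is to specialize Theorem \ref{thm31} by taking the dominating function $\tilde{l}$ to be the ridge leverage score $l_{\lambda}$ itself. This is the tightest admissible choice within the framework of Theorem \ref{thm31}: since any valid $\tilde{l}$ must pointwise dominate $l_{\lambda}$, its integral $d_{\tilde{l}}$ is minimized when $\tilde{l} = l_{\lambda}$, yielding the sharpest feature-number bound the theorem can deliver.

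First, I would verify that this choice satisfies the hypotheses of Theorem \ref{thm31}. The pointwise inequality $\tilde{l}(w) \geq l_{\lambda}(w)$ holds trivially with equality. The integrability condition follows from the identity $\int_{\mathcal{W}} l_{\lambda}(w)\,dw = d_{\widetilde{\mathbf{K}}}^{\lambda} = \mathrm{Tr}[\widetilde{\mathbf{K}}(\widetilde{\mathbf{K}}+n\lambda \mathbf{I})^{-1}]$ recorded in Section \ref{preliminary}. This quantity is finite (in fact bounded by $n$, since each eigenvalue of $\widetilde{\mathbf{K}}(\widetilde{\mathbf{K}}+n\lambda\mathbf{I})^{-1}$ lies in $[0,1]$), so the normalized density $q(w) = l_{\lambda}(w)/d_{\widetilde{\mathbf{K}}}^{\lambda}$ is a well-defined probability density on $\mathcal{W}$.

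Next, I would substitute $d_{\tilde{l}} = d_{\widetilde{\mathbf{K}}}^{\lambda}$ directly into the feature-number condition of Theorem \ref{thm31}, which produces the stated requirement $s \geq 5 d_{\widetilde{\mathbf{K}}}^{\lambda} \log(16 d_{\widetilde{\mathbf{K}}}^{\lambda}/\delta)$. The excess-risk guarantee on $\hat{f}$ is then inherited verbatim from Theorem \ref{thm31} on the same high-probability event.

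Since Corollary \ref{cor1} is essentially a direct invocation of Theorem \ref{thm31}, there is no substantive technical obstacle; the content is in recognizing that $l_{\lambda}$ is an admissible self-dominating weight and that its normalization constant is exactly the effective dimension $d_{\widetilde{\mathbf{K}}}^{\lambda}$. Any further improvement would require strengthening Theorem \ref{thm31} itself, for instance by moving beyond the pointwise-dominating-function framework underlying the importance-sampling argument.
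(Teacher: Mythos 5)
Your proposal is correct and follows exactly the route the paper intends: Corollary \ref{cor1} is a direct instantiation of Theorem \ref{thm31} with $\tilde{l} = l_{\lambda}$, so that $d_{\tilde{l}} = \int l_{\lambda}(w)\,dw = d_{\widetilde{\mathbf{K}}}^{\lambda}$ and the feature-count condition specializes immediately. Your additional remarks --- that this is the tightest admissible dominating function and that $d_{\widetilde{\mathbf{K}}}^{\lambda} \leq n$ guarantees integrability --- are accurate and consistent with the paper's definitions.
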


Corollary \ref{cor1} indicates that if we sample features according to the ridge leverage score function, the required number is \(s=\Omega(d_{\widetilde{\mathbf{K}}}\log{d_{\widetilde{\mathbf{K}}}})\). To achieve \(O(\epsilon)\) excess risk with sample complexity \(n=\Omega\left(1/\epsilon^2h^2\right)\), it suffices to have \(\lambda=O(\epsilon)\), which eventually determines the bounds on \(d_{\widetilde{\mathbf{K}}}^{\lambda}\) and \(s\). Detailed derivations of the upper bound of \(d_{\widetilde{\mathbf{K}}}^{\lambda}\) are provided in Appendix \ref{effdim}, which is estimated by \(\lambda\) and the eigenvalues of \(\widetilde{\mathbf{K}}/n\). Below, we provide results in four cases of the effective dimension. In the best case, if the number of positive eigenvalues is finite, then \(d_{\widetilde{\mathbf{K}}}^{\lambda}=O(1)\) is a constant that does not grow with \(n\). Hence, we simply have \(s=\Omega(1)\) which implies that only a finite number of random features is sufficient. Next, if the eigenvalues decay exponentially, that is, \(\lambda_i \propto A^i\), then we have \(d_{\widetilde{\mathbf{K}}}^{\lambda}\leq O(\log{(1/\lambda)})\), leading to the result of \(s=\Omega(\log{({1}/{\epsilon})}\log{\log{({1}/{\epsilon})}})\). In the case of slow decay with \(\lambda_i \propto i^{-t}~ (t> 1)\), we have \(d_{\widetilde{\mathbf{K}}}^{\lambda}\leq O((1/\lambda)^{1/t})\), and therefore \(s=\Omega(({1}/{\epsilon})^{{1}/{t}}\log{({1}/{\epsilon})})\). In the worst case that \(\lambda_i \propto i^{-1}\), we have \(d_{\widetilde{\mathbf{K}}}^{\lambda}\leq O(1/\lambda\cdot \log{n})\). Since \(n=\Omega(1/\epsilon^2h^2)\), we obtain \(s=\Omega(({1}/{\epsilon})\log{({1}/{\epsilon})})\).

\begin{corollary}[Plain Sampling]
    \label{cor2}
    If setting the density function in Theorem \ref{thm31} as the spectral measure
    \( 
        q(w) = p(w),
    \)
    then the upper bound on the excess risk holds for all \[s\geq \frac{5{\|\tilde{\sigma}\|_{\infty}^2}}{\lambda}\log{\frac{16d_{\widetilde{\mathbf{K}}}^{\lambda}}{\delta}}.\]
\end{corollary}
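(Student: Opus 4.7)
The plan is to derive Corollary \ref{cor2} as a direct specialization of Theorem \ref{thm31} by plugging in a specific choice of the majorizing function $\tilde{l}$.

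First, I would invoke the trivial pointwise upper bound on the ridge leverage score already recorded in the excerpt, namely $l_{\lambda}(w) \leq p(w)\,\|\tilde{\sigma}\|_{\infty}^2/\lambda$, which follows from the operator inequality $(\widetilde{\mathbf{K}} + n\lambda \mathbf{I})^{-1} \preceq (n\lambda \mathbf{I})^{-1}$ together with the uniform bound $\tilde{\sigma}(Xw)^\top \tilde{\sigma}(Xw) \leq n \|\tilde{\sigma}\|_{\infty}^2$. I would then set
\[
\tilde{l}(w) := p(w)\,\frac{\|\tilde{\sigma}\|_{\infty}^2}{\lambda},
\]
so that $\tilde{l}(w) \geq l_{\lambda}(w)$ holds pointwise and the hypotheses of Theorem \ref{thm31} are met.

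Next, I would compute the normalizing constant. Since $p$ is a probability density, integrating gives
\[
d_{\tilde{l}} = \int_{\mathcal{W}} \tilde{l}(w)\,dw = \frac{\|\tilde{\sigma}\|_{\infty}^2}{\lambda},
\]
and therefore the induced sampling density from Theorem \ref{thm31} is exactly
\[
q(w) = \frac{\tilde{l}(w)}{d_{\tilde{l}}} = p(w),
\]
which is the plain sampling scheme. Substituting this value of $d_{\tilde{l}}$ into the feature count requirement $s \geq 5 d_{\tilde{l}} \log(16 d_{\widetilde{\mathbf{K}}}^\lambda/\delta)$ of Theorem \ref{thm31} immediately yields the stated bound
\[
s \geq \frac{5\,\|\tilde{\sigma}\|_{\infty}^2}{\lambda}\,\log\frac{16\,d_{\widetilde{\mathbf{K}}}^{\lambda}}{\delta},
\]
and the excess-risk conclusion transfers verbatim from Theorem \ref{thm31}.

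There is essentially no genuine obstacle here: the only thing worth double-checking is that the trivial bound is stated for the \emph{approximate} activation $\tilde{\sigma}$ (as it appears in the definition of $l_{\lambda}$), not for the unknown target $\sigma^*$; since $\tilde{\sigma}$ is the Lipschitz RBF expansion guaranteed by Lemma \ref{lemma21}, its $L^\infty$ norm is finite and the bound applies. With that sanity check, the corollary is a one-line specialization of the main theorem.
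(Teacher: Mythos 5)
Your proposal is correct and is exactly the argument the paper intends: it specializes Theorem \ref{thm31} by taking $\tilde{l}(w) = p(w)\|\tilde{\sigma}\|_{\infty}^2/\lambda$, which majorizes $l_{\lambda}(w)$ via the trivial bound already recorded in Section \ref{preliminary}, yielding $d_{\tilde{l}} = \|\tilde{\sigma}\|_{\infty}^2/\lambda$ and $q = p$. No gaps.
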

In the plain sampling scheme, we sample features according to the spectral measure \(p(w)\). To achieve \(O(\epsilon)\) excess risk, let \(\lambda=O(\epsilon)\), the required number of random features is \(s=\Omega(1/\epsilon\cdot\log{d_{\widetilde{\mathbf{K}}}^\lambda})\). Similarly, we have four cases of the effective dimension \(d_{\widetilde{\mathbf{K}}}^{\lambda}\). In the best case that the number of positive eigenvalues is finite, we have \(d_{\widetilde{\mathbf{K}}}^{\lambda}=O(1)\) and hence \(s=\Omega(1/\epsilon)\). If \(\lambda_i \propto A^i\), then \(d_{\widetilde{\mathbf{K}}}^{\lambda}\leq O(\log{(1/\lambda)})\) and \(s=\Omega(({1}/{\epsilon})\log\log{({1}/{\epsilon})})\). If \(\lambda_i \propto i^{-t}~ (t> 1)\), then \(d_{\widetilde{\mathbf{K}}}^{\lambda}\leq O((1/\lambda)^{1/t})\) and \(s=\Omega(({1}/{\epsilon})\log{({1}/{\epsilon})})\). In the worst case that \(\lambda_i \propto i^{-1}\), we have \(d_{\widetilde{\mathbf{K}}}^{\lambda}\leq O(1/\lambda\cdot \log{n})\) and \(s=\Omega(({1}/{\epsilon})\log{({1}/{\epsilon})})\). Table \ref{summse} summarizes the above discussions and provides comparisons with previous results for the MSE loss case.




\subsection{Learning with a Lipschitz Continuous Loss}

\begin{table*}[tbp]
\renewcommand{\arraystretch}{1.8}
\caption{Summarization of the Number of Random Features in the Lipschitz Continuous Loss Case.}
\label{sumlip}
\begin{center}
\begin{small}
\begin{sc}
\begin{tabular}{l|c|c|c}
\noalign{\vskip 0.5pt}      
\noalign{\hrule height 0.75pt}
\noalign{\vskip 0.5pt}
sampling scheme & spectrum & number of features & excess risk \\ 
\hline
\multirow{3}{*}{weighted rflaf} 
& \rm{finite rank} & $s \in \Omega(1)$ & \multirow{3}{*}{\(O\left(\epsilon\right)\)} \\ \cline{2-3}
& $\lambda_i \propto A^i$ & $s \in \Omega(\log{(1/\epsilon)}\cdot\log{\log{(1/\epsilon)}})$ & \\ \cline{2-3}
& $\lambda_i \propto i^{-t}$ ($t \geq 1$) & $s \in \Omega((1/\epsilon^2)^{1/t} \cdot \log{(1/\epsilon)})$ & \\ 
\hline
\multirow{3}{*}{plain rflaf} 
& \rm{finite rank} & $s \in \Omega(1/\epsilon^2)$ & \multirow{3}{*}{\(O\left(\epsilon\right)\)} \\ \cline{2-3}
& $\lambda_i \propto A^i$ & $s \in \Omega({(1/\epsilon^2)}\cdot\log{\log{(1/\epsilon)}})$ & \\ \cline{2-3}
& $\lambda_i \propto i^{-t}$ ($t \geq 1$) & $s \in \Omega((1/\epsilon^2)\cdot\log{(1/\epsilon)})$ & \\ 
\hline
former result \citep{ma2024random} & \rm{in any case} & \(s\in \Omega(1/\epsilon^2)\) & \(O\left(\epsilon\right)\) \\
\noalign{\vskip 0.5pt}      
\noalign{\hrule height 0.75pt}
\noalign{\vskip 0.5pt}
\end{tabular}
\end{sc}
\end{small}
\end{center}
\end{table*}

In this section, we consider a more general case where the loss function is Lipschitz continuous with respect to the first variable, examples of which include kernel logistic or softmax regression in classification tasks. The general optimization problem is formulated as follows.
\begin{equation}
    \label{mzer2}
    \hat{g}(x) = \mathop{\mathrm{argmin}}_{g(x)\in\mathcal{F}_V}\frac{1}{n}\sum_{i=1}^{n} l\left(g(x_i),y_i\right)+\lambda s \|v\|_2^2.
\end{equation}

Specifically, we have the Lipschitz assumption on the loss function.
\begin{assumption}
    \label{lipsassump}
    Assume the loss function \(l(\cdot,\cdot)\) is Lipschitz continuous with respect to the first variable with Lipschitz constant \(L=1\), which means that \(\forall g,g^\prime\in\mathcal{H},\forall x\in\mathcal{X}\), it holds that \[  
        |l(g(x),y)-l(g^\prime(x),y)|\leq L|g(x)-g^\prime(x)|.
    \]
\end{assumption}

In the theorem, we provide a sharp bound that describes the relations between the learning risk and the random feature number \(s\).
\begin{theorem}
    \label{thm36}
    Under the Assumption \ref{assump22}, \ref{lipsassump}, for any \(\epsilon>0\), we set the width \(h\) and the grid number \(N\) as 
    \(\frac{1}{h}=\Omega\left(\frac{1}{\epsilon}\left(\log{\frac{1}{\epsilon}}\right)^{\frac{1}{2}}\right)\), \(N=\Omega\left(\frac{1}{\epsilon^2}{\left(\log{\frac{1}{\epsilon}}\right)^\frac{3}{2}}\right)\). Let \(\tilde{l}:\mathbb{R}^s \rightarrow \mathbb{R}\) be a measurable function such that for all \(w\in \mathbb{R}^s\), \( \tilde{l}(w) \geq l_{\lambda}(w)\) with \(d_{\tilde{l}}=\int_{\mathbb{R}^s} \tilde{l}(w)dv < \infty\). Suppose that \(\{w_i\}_{i=1}^s\) are sampled independently from the probability density function \(q(w)=\tilde{l}(w)/d_{\tilde{l}}\). For all \(\delta \in (0,1)\), if
    \[
        s \geq 5d_{\tilde{l}}\log{\frac{16d_{\widetilde{\mathbf{K}}}^{\lambda}}{\delta}},
    \]
    then with probability \(1-\delta\), the excess risk of \(\hat{g}\) can be upper bounded by
    \[
        \mathbb{E}[l_{\hat{g}}]-\mathbb{E}[l_{g^*}]\leq 4\sqrt{\lambda} + O\left(\frac{1}{h\sqrt{n}}\right) + \epsilon.
    \]
\end{theorem}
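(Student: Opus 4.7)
The proof should follow the same overall excess risk decomposition as Theorem \ref{thm31}, but with the Lipschitz property of $l(\cdot,y)$ replacing the quadratic structure of the MSE case. This swap is exactly what degrades the rate from $\lambda$ to $\sqrt{\lambda}$: MSE lets us translate a predictor $L^2$ error $O(\sqrt{\lambda})$ into an excess risk $O(\lambda)$ by squaring, whereas $1$-Lipschitz losses only give linear-in-error control. I would write
$$\mathbb{E}[l_{\hat{g}}] - \mathbb{E}[l_{f^*}] = \bigl(\mathbb{E}[l_{\hat{g}}] - \mathbb{E}_n[l_{\hat{g}}]\bigr) + \bigl(\mathbb{E}_n[l_{\hat{g}}] - \mathbb{E}_n[l_{\bar{g}}]\bigr) + \bigl(\mathbb{E}_n[l_{\bar{g}}] - \mathbb{E}[l_{\bar g}]\bigr) + \bigl(\mathbb{E}[l_{\bar{g}}] - \mathbb{E}[l_{f^*}]\bigr),$$
where $\bar{g}\in\mathcal{F}_V$ is a comparator that approximates the infinite-width RBF predictor $\tilde{f}$. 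The second term is $\le 0$ by the optimality of $\hat{g}$ (modulo an additive $\lambda s\|\bar v\|_2^2$ to be controlled); the first and third are uniform-convergence and concentration terms; the fourth is the approximation term supplying $\sqrt{\lambda}+\epsilon$.

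For the approximation step, I would first invoke Lemma \ref{lemma21} at the prescribed $h$ and $N$ to get $\tilde{a}$ and $\tilde{\sigma}$ with $\|\tilde{\sigma}-\sigma^*\|_\infty\le\epsilon$ and $\|\tilde{f}-f^*\|_\infty\le\epsilon$, so by Assumption \ref{lipsassump} the loss gap $|\mathbb{E}[l_{\tilde{f}}]-\mathbb{E}[l_{f^*}]|\le\epsilon$ is free. Next, the leverage-weighted sample $\{w_m\}_{m=1}^s$ is used through a Matrix-Bernstein style concentration (as in the argument underlying Theorem \ref{thm31}, parallel to Avron–Rudi–Rosasco for plain RF): under the stated lower bound on $s$, with probability $1-\delta/2$ one has the sandwich $\tfrac12(\widetilde{\mathbf{K}}+n\lambda I)\preceq\widehat{\mathbf{K}}_q(\tilde{a})+n\lambda I\preceq\tfrac32(\widetilde{\mathbf{K}}+n\lambda I)$. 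Projecting the infinite-width coefficient $v^*(w)$ of $\tilde{f}$ onto the sampled-feature span and using the spectral sandwich produces $\bar{v}\in\mathbb{R}^s$ with
$$\tfrac{1}{n}\bigl\|\sigma_{\tilde{a}}(XW)Q\bar{v}-\tilde{f}(X)\bigr\|_2^2 + \lambda s\|\bar{v}\|_2^2 \;\le\; O\bigl(\lambda\,\mathbb{E}_{w\sim p}[v^*(w)^2]\bigr) \;\le\; O(\lambda).$$
Taking square roots and applying Cauchy–Schwarz across the $n$ data points gives $\tfrac{1}{n}\sum_i|\bar{g}(x_i)-\tilde{f}(x_i)|\le O(\sqrt{\lambda})$, which the $1$-Lipschitz loss promotes to $|\mathbb{E}_n[l_{\bar g}]-\mathbb{E}_n[l_{\tilde f}]|\le O(\sqrt{\lambda})$; symbolically, this is the step where $\sqrt{\lambda}$ (rather than $\lambda$) enters.

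Finally I would control the remaining stochastic terms by a Rademacher complexity argument on $\mathcal{F}_V$. Talagrand's contraction reduces the Lipschitz-loss Rademacher average to $\hat{R}_n(\mathcal{F}_V)$, and the constraint $\|a\|_2\le R=O(1/(h\sqrt{N}))$ together with the boundedness of the RBF basis functions $B_i$ yields $\hat{R}_n(\mathcal{F}_V)=O(1/(h\sqrt{n}))$; a bounded-differences argument supplies the analogous bound for $|\mathbb{E}_n[l_{\bar g}]-\mathbb{E}[l_{\bar g}]|$. Chaining the four pieces through $f^*\to\tilde{f}\to\bar{g}\to\hat{g}$ produces the claimed bound $4\sqrt{\lambda}+O(1/(h\sqrt{n}))+\epsilon$. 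The principal obstacle is the construction of $\bar{v}$ in the middle step: one must simultaneously secure a small in-sample prediction residual \emph{and} a controlled ridge penalty $\lambda s\|\bar{v}\|_2^2$ while using only the spectral concentration of the weighted Gram matrix, and must do so in the RFLAF setting where the feature map itself depends on the learnable $\tilde{a}$. Transplanting the Avron–Rudi–Rosasco projection to this context requires care to decouple the choice of $\bar{v}$ from $\hat{a}$ (so that the norm comparison via $\widehat{\mathbf{K}}_q(\tilde{a})$ is usable), and this is also precisely where the loss of one square root relative to Theorem \ref{thm31} occurs.
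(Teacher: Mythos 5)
Your proposal follows essentially the same route as the paper's proof: the same four-term decomposition through the intermediate functions $\tilde g$ (infinite-width predictor with the approximate activation $\tilde\sigma$) and a finite-width comparator with fixed $\tilde a$, the same matrix-Bernstein spectral concentration of the weighted Gram matrix to bound the ridge objective by $O(\lambda)$ (the paper's Lemma \ref{lemma22}), the same Cauchy--Schwarz-plus-Lipschitz step that converts this into an $O(\sqrt{\lambda})$ empirical loss gap (which is exactly where the square root enters, as you note), and the same Rademacher/Hoeffding control of the remaining stochastic terms. The only cosmetic difference is that the paper places the empirical-to-population concentration on the fixed function $\tilde g$ rather than on the data-dependent comparator $\bar g$, which sidesteps having to justify Hoeffding for a function fit to the sample; otherwise the arguments coincide.
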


The proof of this theorem is contained in Appendix \ref{proof36}. For Lipschitz losses, the theorem contributes to an improvement only for \(s\) within the leverage weighted sampling scheme. The corollaries are parallel to the case of MSE loss. We present Corollary \ref{cor7} for \emph{leverage weighted sampling} and Corollary \ref{cor8} for \emph{plain sampling}.
\begin{corollary}[Leverage Weighted Sampling]
    \label{cor7}
    If setting the probability density function in Theorem \ref{thm36} as the ridge leverage score distribution
    \(
        q(w) = l_{\lambda}(w)/d_{\widetilde{\mathbf{K}}}^{\lambda},
    \)
    then the upper bound on the excess risk holds for all \[s\geq 5d_{\widetilde{\mathbf{K}}}^{\lambda}\log{({16d_{\widetilde{\mathbf{K}}}^{\lambda}}/{\delta})}.\]
\end{corollary}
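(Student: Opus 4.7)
The plan is to recognize Corollary~\ref{cor7} as a direct instantiation of Theorem~\ref{thm36} obtained by setting the envelope function $\tilde{l}$ equal to the leverage score $l_\lambda$ itself. No new probabilistic machinery is required; the bulk of the argument will be notation matching.

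First I would verify that $q(w) := l_\lambda(w)/d_{\widetilde{\mathbf{K}}}^\lambda$ is a bona fide probability density on $\mathcal{W}$. Nonnegativity follows because $l_\lambda(w) = p(w)\tilde{\sigma}(Xw)^\top(\widetilde{\mathbf{K}}+n\lambda\mathbf{I})^{-1}\tilde{\sigma}(Xw)$ is the product of the nonnegative spectral density $p(w)$ and a quadratic form in the positive semidefinite matrix $(\widetilde{\mathbf{K}}+n\lambda\mathbf{I})^{-1}$, while normalization is immediate from the definition $d_{\widetilde{\mathbf{K}}}^\lambda = \int_{\mathcal{W}} l_\lambda(w)\,dw$. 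Next I would take $\tilde{l}(w) := l_\lambda(w)$, so that the envelope inequality $\tilde{l}(w) \geq l_\lambda(w)$ required by Theorem~\ref{thm36} holds trivially with equality and $d_{\tilde{l}} = d_{\widetilde{\mathbf{K}}}^\lambda$. Finiteness of $d_{\tilde{l}}$ is guaranteed by the trivial bound $l_\lambda(w) \leq p(w)\|\sigma\|_\infty^2/\lambda$ recalled in the preliminaries, which integrates to $\|\sigma\|_\infty^2/\lambda < \infty$ under Assumption~\ref{assump22}. Substituting $d_{\tilde{l}} = d_{\widetilde{\mathbf{K}}}^\lambda$ into the feature-count condition of Theorem~\ref{thm36} reproduces exactly the stated bound $s \geq 5 d_{\widetilde{\mathbf{K}}}^\lambda \log(16 d_{\widetilde{\mathbf{K}}}^\lambda/\delta)$, and Theorem~\ref{thm36} then yields the excess risk estimate verbatim.

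The hard part is essentially nonexistent here; the corollary is stated separately only to emphasize that the choice $\tilde{l} = l_\lambda$ delivers the tightest feature-count in the family of bounds covered by Theorem~\ref{thm36}, since the required $s$ is monotone in $d_{\tilde{l}}$ and every admissible envelope must pointwise dominate $l_\lambda$. All substantive technical effort therefore resides in Theorem~\ref{thm36} itself, and this proof amounts to the one-line substitution sketched above.
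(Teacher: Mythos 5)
Your proposal is correct and matches the paper's intended derivation: the corollary is indeed the instantiation of Theorem~\ref{thm36} with $\tilde{l}=l_{\lambda}$, so that $d_{\tilde{l}}=d_{\widetilde{\mathbf{K}}}^{\lambda}$ and the feature-count condition specializes verbatim. The additional checks you include (that $q$ is a valid density and that $d_{\tilde{l}}<\infty$ via the trivial bound $l_{\lambda}(w)\leq p(w)\|\sigma\|_{\infty}^{2}/\lambda$) are consistent with the preliminaries and do not change the argument.
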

In the leverage weighted sampling scheme, to achieve \(O(\epsilon)\) excess risk with sample complexity \(n=\Omega\left(1/\epsilon^2h^2\right)\), it suffices to have \(\lambda=O(\epsilon^2)\). We consider four cases of \(d_{\widetilde{\mathbf{K}}}^{\lambda}\). In the best case, \(\widetilde{\mathbf{K}}\) has finite rank, then \(d_{\widetilde{\mathbf{K}}}^{\lambda}=O(1)\). Hence, we have \(s=\Omega(1)\). If the eigenvalues decay exponentially, that is, \(\lambda_i \propto A^i\), then we have \(d_{\widetilde{\mathbf{K}}}^{\lambda}\leq O(\log{(1/\lambda)})\), leading to the result of \(s=\Omega(\log{({1}/{\epsilon})}\log{\log{({1}/{\epsilon})}})\). In the case of slow decay with \(\lambda_i \propto i^{-t}~ (t> 1)\), we have \(d_{\widetilde{\mathbf{K}}}^{\lambda}\leq O((1/\lambda)^{1/t})\), and therefore \(s=\Omega(({1}/{\epsilon^2})^{{1}/{t}}\log{({1}/{\epsilon})})\). In the worst case that \(\lambda_i \propto i^{-1}\), we have \(d_{\widetilde{\mathbf{K}}}^{\lambda}\leq O(1/\lambda\cdot \log{n})\). Since \(n=\Omega(1/\epsilon^2h^2)\), we obtain \(s=\Omega(({1}/{\epsilon^2})\log{({1}/{\epsilon})})\).

\begin{algorithm*}[tbp]
\caption{Learning weighted RFLAF with Approximate Leverage Weighted Sampling}
\label{alg1}
\begin{algorithmic}[1]
\Input sample of examples $\{(x_i, y_i)\}_{i=1}^n$, regularization parameters $\lambda_0$, $\lambda$, $\lambda^*$, the spectral density function \(p(w)\) and the number of random feature pool \(s\) and the number of weighted features \(S\).
\Output feature matrix \(\widetilde{W}\), weight matrix \(Q\) and the model parameters \(a, v\).
\State sample a pool of $s$ random features $\{w_1, \ldots, w_s\}$ from $p(w)$ to form the initial feature matrix \(W\)
\State solve the optimization problem to obtain $\tilde{a}$ and $\tilde{v}$:
\begin{equation}
  \label{matrixsensing}
  \tilde{a}, \tilde{v} =\mathop{\mathrm{argmin}}_{a, v} \frac{1}{n} \|\mathbf{Z}_p(a) v - y\|_2^2 + \lambda_0 (\|a\|_2^2 - \|v\|_2^2)^2
\end{equation}
\State associate with each feature $w_i$ a positive real number $p_i$ such that $p_i$ is equal to the $i$-th diagonal element of matrix
\[
\mathbf{Z}_p(\tilde{a})^T \mathbf{Z}_p(\tilde{a}) \left( (1/s) \mathbf{Z}_p(\tilde{a})^T \mathbf{Z}_p(\tilde{a}) + n \lambda I \right)^{-1}
\]
\State $q(w_i) \gets p_i/\sum_{i=1}^s p_i$ and $M \gets \{(w_i, q(w_i))\}_{i=1}^s$
\State resample features \(\widetilde{W}=[w_1,...,w_S]\) from set $M$ using the multinomial distribution given by vector $(q(w_1), \cdots, q(w_s))$, and record the corresponding weights \(Q=\mathrm{diag}\{\sqrt{1/q(w_1)},...,\sqrt{1/q(w_S)}\}\)
\State solve the optimization problem (\ref{mzer}) with \(\lambda^*\), \(\widetilde{W}\) and \(Q\), and return the optimized \(a, v\)
\end{algorithmic}
\end{algorithm*}

\begin{corollary}[Plain Sampling]
    \label{cor8}
    If setting the density function in Theorem \ref{thm36} as the spectral measure
    \( 
        q(w) = p(w),
    \)
    then the upper bound on the excess risk holds for all \[s\geq \frac{5{\|\tilde{\sigma}\|_{\infty}^2}}{\lambda}\log{\frac{16d_{\widetilde{\mathbf{K}}}^{\lambda}}{\delta}}.\]
\end{corollary}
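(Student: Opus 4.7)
The plan is to derive Corollary \ref{cor8} directly from Theorem \ref{thm36} by choosing the dominating function $\tilde{l}$ so that the induced sampling density coincides with the spectral measure $p$. Concretely, I would take
\[
\tilde{l}(w) \;:=\; p(w)\,\frac{\|\tilde{\sigma}\|_{\infty}^{2}}{\lambda},
\]
which is a scalar multiple of $p$, so that the normalized density $q(w)=\tilde{l}(w)/d_{\tilde{l}}$ collapses to $p(w)$ as required by the plain-sampling hypothesis.

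Next I would verify the two conditions Theorem \ref{thm36} imposes on $\tilde{l}$. For the pointwise domination $\tilde{l}(w)\geq l_{\lambda}(w)$, I would recall the trivial ridge leverage score bound stated in Section \ref{preliminary}: using $(\widetilde{\mathbf{K}}+n\lambda\mathbf{I})^{-1}\preceq (n\lambda)^{-1}\mathbf{I}$ and $\tilde{\sigma}(w^\top x_i)^2\leq \|\tilde{\sigma}\|_\infty^2$ for each $i\in[n]$, one obtains
\[
l_{\lambda}(w) \;=\; p(w)\,\tilde{\sigma}(Xw)^{\top}(\widetilde{\mathbf{K}}+n\lambda\mathbf{I})^{-1}\tilde{\sigma}(Xw) \;\leq\; p(w)\,\frac{\|\tilde{\sigma}\|_{\infty}^{2}}{\lambda}.
\]
For the finite-total-mass condition, since $p$ is a probability density, $d_{\tilde{l}}=\int \tilde{l}(w)\,dw = \|\tilde{\sigma}\|_{\infty}^{2}/\lambda <\infty$.

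Finally, I would substitute $d_{\tilde{l}}=\|\tilde{\sigma}\|_{\infty}^{2}/\lambda$ into the feature-count requirement of Theorem \ref{thm36}, yielding exactly
\[
s \;\geq\; \frac{5\,\|\tilde{\sigma}\|_{\infty}^{2}}{\lambda}\log\frac{16\,d_{\widetilde{\mathbf{K}}}^{\lambda}}{\delta},
\]
and the excess-risk bound of Theorem \ref{thm36} then applies verbatim. There is no genuine obstacle here; the only minor point of care is the pointwise bound on $l_{\lambda}(w)$, which is already contained in the preliminaries, so the corollary follows by a direct specialization of the master theorem.
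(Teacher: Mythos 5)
Your proposal is correct and matches the paper's (implicit) argument exactly: the corollary is obtained by specializing Theorem \ref{thm36} with $\tilde{l}(w)=p(w)\|\tilde{\sigma}\|_{\infty}^{2}/\lambda$, whose domination of $l_{\lambda}(w)$ is precisely the trivial bound $l_{\lambda}(w)\leq p(w)\|\tilde{\sigma}\|_{\infty}^{2}/\lambda$ already recorded in the preliminaries, and whose total mass $d_{\tilde{l}}=\|\tilde{\sigma}\|_{\infty}^{2}/\lambda$ yields the stated feature count. No issues.
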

In the plain sampling scheme, to achieve \(O(\epsilon)\) excess risk, let \(\lambda=O(\epsilon^2)\), then the required number of random features is \(s=\Omega(1/\epsilon^2\cdot\log{d_{\widetilde{\mathbf{K}}}^\lambda})\). Similarly, we have four cases of \(d_{\widetilde{\mathbf{K}}}^{\lambda}\). In the best case that the number of positive eigenvalues is finite, we have \(d_{\widetilde{\mathbf{K}}}^{\lambda}=O(1)\) and hence \(s=\Omega(1/\epsilon^2)\). If \(\lambda_i \propto A^i\), then \(d_{\widetilde{\mathbf{K}}}^{\lambda}\leq O(\log{(1/\lambda)})\) and \(s=\Omega({({1}/{\epsilon^2})}\log{\log{({1}/{\epsilon})}})\). If \(\lambda_i \propto i^{-t}~ (t> 1)\), then \(d_{\widetilde{\mathbf{K}}}^{\lambda}\leq O((1/\lambda)^{1/t})\) and \(s=\Omega(({1}/{\epsilon^2})\log{({1}/{\epsilon})})\). In the worst case that \(\lambda_i \propto i^{-1}\), we have \(d_{\widetilde{\mathbf{K}}}^{\lambda}\leq O(1/\lambda\cdot \log{n})\) and \(s=\Omega(({1}/{\epsilon^2})\log{({1}/{\epsilon})})\). Table \ref{sumlip} summarizes the above discussions and provides comparisons with previous results for the Lipschitz loss case.

\subsection{Discussions of the Main Results}


\paragraph{Comparison with previous results}

The theoretical improvements can be summarized into four aspects. First, in the case of squared error loss with plain sampling scheme, the upper bound on \(s\) is lowered from \(\Omega(1/\epsilon^2)\) to \(\widetilde{\Omega}(1/\epsilon)\). Second, in the case of squared error loss with leverage weighted sampling scheme, the bound is further improved to \(\widetilde{\Omega}(1/\epsilon^{1/t}),~t\geq 1\) (Table \ref{summse}). Third, in the most general case where losses are Lipschitz continuous, by applying the leverage weighted sampling scheme, we obtain a sharper bound on the number of random features \(s\) from \(\Omega(1/\epsilon^2)\) to \(\Omega(1/\epsilon^{2/t}),~t\geq 1\) (Table \ref{sumlip}). To conclude, except from the case of Lipschitz continuous losses with plain sampling scheme, the estimated order of \(1/\epsilon\) in all other cases decreases. For both kinds of losses, if using the leverage weighted sampling scheme, the complexity becomes \(\widetilde{\Omega}(\log(1/\epsilon))\) if the eigenvalues of the Gram matrix decay exponentially, and is further reduced to \(\Omega(1)\) if the Gram matrix has finite rank. Both quantities are considerably low in practice, indicating that even a constant number of features is sufficient for low learning errors. The last improvement to highlight is that in the case of squared error loss, Theorem \ref{thm31} also lowers the bound of the required sample size \(n\) from \(\widetilde{\Omega}\left(1/\epsilon^4\right)\) to \(\widetilde{\Omega}\left(1/\epsilon^3\right)\) and improves the grid number \(N\) from \(\widetilde{\Omega}\left(1/\epsilon^2\right)\) to \(\widetilde{\Omega}\left(1/\epsilon\right)\), decreasing the order of \(1/\epsilon\) by one in both cases, and the results hold for whatever sampling scheme is applied. Furthermore, the result of \(N=\widetilde{\Omega}\left(1/\epsilon\right)\) successfully answers the question of why a small number such as \(N=16,32,64\) is sufficient for RFLAF to perform well in real data, a phenomenon substantiated in empirical validations of RFLAF in \citep{ma2024random}.

\paragraph{Interpretation of the bounds} 
The fact that the ordinary RF models and RFLAF share the same bounds on \(s\) seems to be surprising and counterfactual at first glance, since the hypothesis class represented by RFLAF is intrinsically larger than the plain RF models. RFLAF is a bilinear model with a learnable kernel, while the plain RF model is simply a linear model. However, our theory provides the intuition to answer this question. Theorems \ref{thm31} and \ref{thm36} decompose the error estimates into three terms. The first quantity with respect to \(\lambda\) controls the random feature number and the sampling error. The second and third terms \(O(1/h\sqrt{n})+\epsilon\) control the sample size and the approximation error of the learned activation. Consequently, the controls on \(s\) and \(n\) are disentangled. The increased complexity induced by the bilinear structure is reflected mainly in the term of Rademacher complexity \(O(1/h\sqrt{n})\), irrelavent of the first term. Hence, it is possible to apply the weighted sampling scheme for a fewer number of random features in RFLAF. The cost of the learnable activation component in RF models is transferred to the requirement of the sample size \(n\), which increases from \(\Omega(1/\epsilon^2)\) to \(\Omega(1/\epsilon^2h^2)\).

\begin{figure*}[tbp]
\centering
\vspace{.3in}
\includegraphics[width=\linewidth]{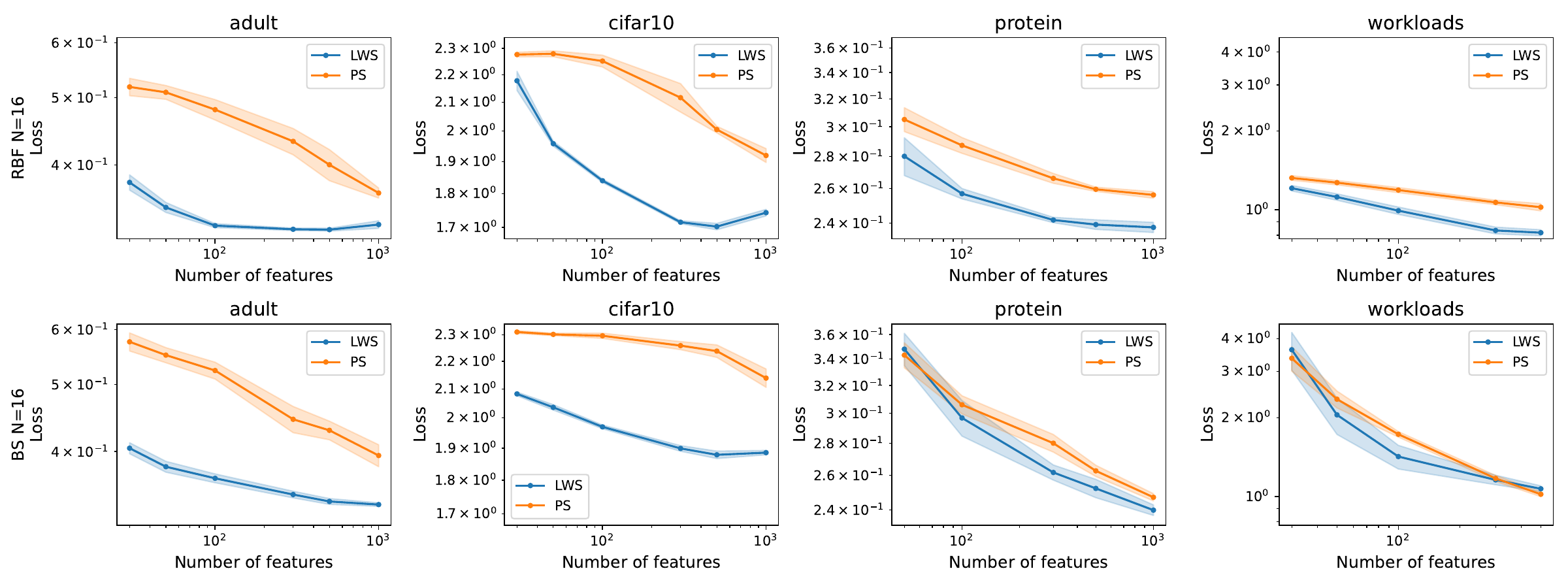}
\vspace{.3in}
\caption{\centering{Comparison of Leverage Weighted and Plain Sampling Schemes\\ for RFLAF of Different Basis Functions (RBF in Line 1, B-spline in Line 2)}}
\label{fig1}
\end{figure*}

\paragraph{Limitation of the results}

In \citep{li2021towards}, the author provides results for both the worst-case analysis and the refined analysis, where the latter provides a sharper bound on sample size \(n\) by utilizing the spectrum structure of the kernel. Although our result improves the bound on \(n\) in the squared error loss case, the refined analysis that might deduce a sharper bound on \(n\) remains to be explored. The major difficulty consists in the estimate of the local Rademacher complexity \citep{10.1214/009053605000000282} of RFLAF instead of the global Rademacher complexity in our work. Specifically, it requires that the hypothesis class \(\mathcal{F}_V\) to be convex with respect to the joint parameter \((a, v)\), which is generally not the case for a bilinear function class. Consequently, the derivation of the local Rademacher complexity of \(\mathcal{F}_V\) becomes intrinsically difficult. However, we suspect that the bound on \(n\) is not completely tight in this work, and theoretical techniques remain to be developed to address this problem.

\section{ALGORITHM}
\label{algorithm}

Previous results have shown that it is possible to reduce the width of RFLAF for faster computation by applying leverage weighted sampling. However, the problem is that the ridge leverage score function and the effective dimension of the approximate kernel are usually unknown beforehand. To address this problem, we propose a three-step procedure for learning the weighted RFLAF, presented in Algorithm \ref{alg1}. The algorithm first finds an approximate kernel by solving a low-rank matrix sensing problem (\ref{matrixsensing}) (line 2), which is studied in \citep{pmlr-v75-li18a,pmlr-v48-tu16}. And then it applies the approximate leverage weighted sampling in \citep{li2021towards} to obtain a fewer number of random features (line 3\(-\)5). Finally, it solves the main problem (\ref{mzer}) with the newly sampled features (line 6). We illustrate the algorithm for the squared error loss, and the case of Lipschitz continuous loss is obtained by substituting the losses in lines 2 and 6. 
Below we deduce the learning risks of the algorithm in both cases of losses. We denote \(\widehat{\mathbf{K}}=\frac{1}{s}\mathbf{Z}_p(\tilde{a})\mathbf{Z}_p(\tilde{a})^\top\), and detailed proofs are provided in Appendix \ref{algoproof}.

\begin{figure*}[tbp]
\vspace{.3in}
\centering
\includegraphics[width=\linewidth]{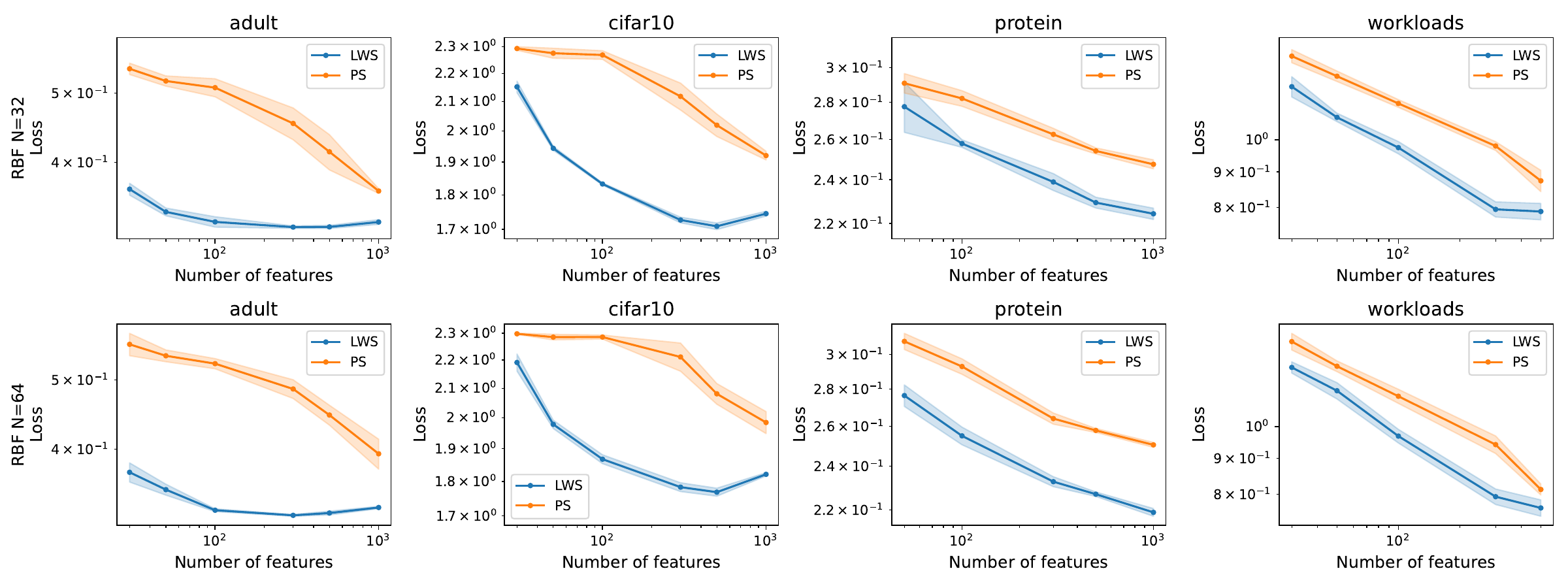}
\vspace{.3in}
\caption{\centering{Comparison of Leverage Weighted and Plain Sampling Schemes\\ for RFLAF of Different Grid Numbers ($N=32$ in Line 1, $N=64$ in Line 2)}}
\label{fig2}
\end{figure*}

\begin{theorem}[Squared Error Loss]
    \label{thm41}
    Under the Assumption \ref{assump22}, \ref{assump1}, for any \(\epsilon>0\), we set  
    \(\frac{1}{h}=\Omega\left(\left(\frac{1}{\epsilon}\log{\frac{1}{\epsilon}}\right)^{\frac{1}{2}}\right)\), \(N=\Omega\left(\frac{1}{\epsilon}{\left(\log{\frac{1}{\epsilon}}\right)^\frac{3}{2}}\right)\). Let the minimizer in line 6 of Algorithm \ref{alg1} with regularization parameter \(\lambda^*\) be \(\hat{f}\). For all \(\delta \in (0,1)\), if
    \[
        s \geq \frac{5{\|\tilde{\sigma}\|_{\infty}^2}}{\lambda}\log{\frac{16d_{\widetilde{\mathbf{K}}}^{\lambda}}{\delta}},\quad S\geq 5d_{\widehat{\mathbf{K}}}^{\lambda^*}\log{\frac{16d_{\widehat{\mathbf{K}}}^{\lambda^*}}{\delta}},
    \]
    then with probability \(1-\delta\), the excess risk of \(\hat{f}\) can be upper bounded by
    \[
        \mathbb{E}[l_{\hat{f}}]-\mathbb{E}[l_{f^*}]\leq 4\lambda + 4\lambda^* + O\left(\frac{1}{h\sqrt{n}}\right) + 4\epsilon.
    \]
\end{theorem}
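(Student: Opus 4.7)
The plan is to decompose the excess risk of $\hat f$ by inserting an intermediate estimator $\tilde f$ between $\hat f$ and $f^*$, and then apply Theorem \ref{thm31} twice: once in the plain-sampling regime to bound $\mathbb{E}[l_{\tilde f}]-\mathbb{E}[l_{f^*}]$, and once in the leverage-weighted regime to bound $\mathbb{E}[l_{\hat f}]-\mathbb{E}[l_{\tilde f}]$. I take $\tilde f$ to be the solution of problem (\ref{mzer}) at regularization $\lambda$ over the first-stage feature pool $\{w_1,\ldots,w_s\}$ with plain weights; by the recovery guarantees for nonconvex matrix sensing that underlie problem (\ref{matrixsensing}), the pair $(\tilde a,\tilde v)$ returned by line 2 reconstructs exactly this $\tilde f$, and the Gram matrix $\widehat{\mathbf{K}}=(1/s)\mathbf{Z}_p(\tilde a)\mathbf{Z}_p(\tilde a)^\top$ assembled in line 3 is its empirical kernel.

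The first invocation is in the plain-sampling regime of Corollary \ref{cor2}: since the threshold on $s$ matches the hypothesis there with regularization $\lambda$, on an event of probability at least $1-\delta/2$ we obtain
\[
\mathbb{E}[l_{\tilde f}] - \mathbb{E}[l_{f^*}] \leq 4\lambda + O\!\left(\frac{1}{h\sqrt{n}}\right) + 2\epsilon.
\]
The second invocation conditions on this first-stage randomness and treats $\widehat{\mathbf{K}}$ as a deterministic surrogate kernel and $\tilde f$ as a surrogate ground truth. The scores $p_i$ in line 3 are (up to normalization) precisely the ridge leverage scores of $\widehat{\mathbf{K}}$, so the resampling distribution $q$ in line 4 is a legitimate leverage-weighted distribution; if $\lambda\le\lambda^*$, these scores also dominate the $\lambda^*$-leverage scores pointwise, so $\tilde l = p_i/\sum_j p_j \cdot d_{\widehat{\mathbf{K}}}^{\lambda^*}$ satisfies the upper-bound hypothesis of Theorem \ref{thm31}. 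With the threshold $S\geq 5 d_{\widehat{\mathbf{K}}}^{\lambda^*}\log(16 d_{\widehat{\mathbf{K}}}^{\lambda^*}/\delta)$ matching Corollary \ref{cor1} at regularization $\lambda^*$, a second application of Theorem \ref{thm31} gives on an event of probability at least $1-\delta/2$
\[
\mathbb{E}[l_{\hat f}] - \mathbb{E}[l_{\tilde f}] \leq 4\lambda^* + O\!\left(\frac{1}{h\sqrt{n}}\right) + 2\epsilon.
\]

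The stated bound follows by a union bound over the two events and the triangle inequality applied to the loss, absorbing the two $O(1/(h\sqrt n))$ terms into one.

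The main obstacle is the second invocation, because Theorem \ref{thm31} is stated for a deterministic ground-truth function and a deterministic target kernel satisfying Assumption \ref{assump22}, whereas here $\widehat{\mathbf{K}}$ and $\tilde f$ are random and depend on the first-stage sample. I would address this by conditioning on the first-stage event and using spectral concentration of $\widehat{\mathbf{K}}$ around $\widetilde{\mathbf{K}}$ (a by-product of the proof of Theorem \ref{thm31} in the plain-sampling case) to transfer Assumption \ref{assump22} from $\widetilde{\mathbf{K}}$ to $\widehat{\mathbf{K}}$ up to constants. The remaining technical step is to verify that $\tilde f$ lies in the hypothesis class of bounded-RKHS functions to which Theorem \ref{thm31} is applied at the second stage; this follows from the construction of $\tilde f$ together with the choice of regularization $\lambda_0$ in the matrix-sensing step, which controls $\|\tilde a\|_2$. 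The overall double-sampling template mirrors the one carried out by \citet{li2021towards} for ordinary random feature models, so once these ingredients are in place the proof completes by plugging them into the two-stage risk decomposition.
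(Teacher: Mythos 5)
Your proposal is essentially the paper's own proof: a two-stage application of Theorem \ref{thm31}, first via Corollary \ref{cor2} (plain sampling over the pool of $s$ features) and then via Corollary \ref{cor1} (leverage-weighted resampling of $S$ features with respect to the empirical surrogate kernel $\widehat{\mathbf{K}}$), chained through the first-stage minimizer. The only cosmetic difference is that the paper routes the second stage through the population risk minimizer $f_{\hat{\mathcal{H}}}$ over the empirical RKHS and uses $\mathbb{E}[l_{f_{\hat{\mathcal{H}}}}]\leq\mathbb{E}[l_{\hat{f}_0}]$, whereas you treat $\hat{f}_0$ itself as the surrogate ground truth; both yield the stated bound.
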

\begin{theorem}[Lipschitz Continuous Loss]
  \label{thm42}
    Under the Assumption \ref{assump22}, \ref{lipsassump}, for any \(\epsilon>0\), we set  
    \(\frac{1}{h}=\Omega\left(\frac{1}{\epsilon}\left(\log{\frac{1}{\epsilon}}\right)^{\frac{1}{2}}\right)\), \(N=\Omega\left(\frac{1}{\epsilon^2}{\left(\log{\frac{1}{\epsilon}}\right)^\frac{3}{2}}\right)\). Let the minimizer in line 6 of Algorithm \ref{alg1} with regularization parameter \(\lambda^*\) be \(\hat{f}\). For all \(\delta \in (0,1)\), if
    \[
        s \geq \frac{5{\|\tilde{\sigma}\|_{\infty}^2}}{\lambda}\log{\frac{16d_{\widetilde{\mathbf{K}}}^{\lambda}}{\delta}},\quad S\geq 5d_{\widehat{\mathbf{K}}}^{\lambda^*}\log{\frac{16d_{\widehat{\mathbf{K}}}^{\lambda^*}}{\delta}},
    \]
    then with probability \(1-\delta\), the excess risk of \(\hat{f}\) can be upper bounded by
    \[
        \mathbb{E}[l_{\hat{f}}]-\mathbb{E}[l_{f^*}]\leq 4\sqrt{\lambda} + 4\sqrt{\lambda^*} + O\left(\frac{1}{h\sqrt{n}}\right) + 2\epsilon.
    \]
\end{theorem}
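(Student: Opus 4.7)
The plan is to mirror the two-stage structure of Algorithm~\ref{alg1} and invoke Theorem~\ref{thm36} twice, once for each stage, combining the resulting error budgets via a union bound and a triangle inequality. The argument closely parallels that of Theorem~\ref{thm41}, substituting the Lipschitz unified bound for the squared-error unified bound and replacing every occurrence of \(\lambda\) by \(\sqrt{\lambda}\) in the final error estimate.

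For the first stage I would use Corollary~\ref{cor8} (plain sampling, Lipschitz loss) with regularization \(\lambda\) and \(s \geq (5\|\tilde{\sigma}\|_{\infty}^2/\lambda)\log(16 d_{\widetilde{\mathbf{K}}}^{\lambda}/\delta)\) features drawn from \(p(w)\). This shows that the minimizer of the plain-sampled objective has excess risk at most \(4\sqrt{\lambda} + O(1/(h\sqrt{n})) + \epsilon\). The matrix sensing formulation on Line~2 of Algorithm~\ref{alg1} recovers essentially the same minimizer with high probability by the low-rank recovery results of \citep{pmlr-v75-li18a,pmlr-v48-tu16}, so this bound transfers to \(\tilde{a}\). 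The byproduct I actually need for Stage~2 is that the empirical Gram matrix \(\widehat{\mathbf{K}}=(1/s)\mathbf{Z}_p(\tilde{a})\mathbf{Z}_p(\tilde{a})^{\top}\) is spectrally close to \(\widetilde{\mathbf{K}}\), which is exactly the quantity controlled inside the proof of Corollary~\ref{cor8}.

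In Stage~2 the weights \(p_i\) on Line~3 are discrete evaluations of an approximate leverage score function based on \(\widehat{\mathbf{K}}\) rather than \(\widetilde{\mathbf{K}}\). Using the spectral closeness from Stage~1 together with a resolvent perturbation estimate of the form \((\widehat{\mathbf{K}}+n\lambda^*I)^{-1}\preceq c\,(\widetilde{\mathbf{K}}+n\lambda^*I)^{-1}\), I would argue that these approximate scores dominate the true scores \(l_{\lambda^*}(w)\) uniformly in \(w\) up to a universal constant (absorbed into the \(\log\) factor), and that their total mass is bounded by the empirical effective dimension \(d_{\widehat{\mathbf{K}}}^{\lambda^*}\). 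Consequently, the density \(q(w)\) on Line~4 is an admissible choice of \(\tilde{l}(w)/d_{\tilde{l}}\) in Theorem~\ref{thm36}, and the condition \(S \geq 5 d_{\widehat{\mathbf{K}}}^{\lambda^*}\log(16 d_{\widehat{\mathbf{K}}}^{\lambda^*}/\delta)\) in the statement of Theorem~\ref{thm42} is exactly the feature-count hypothesis of Theorem~\ref{thm36} with regularization \(\lambda^*\). Invoking that theorem for the optimization on Line~6 gives \(4\sqrt{\lambda^*}+O(1/(h\sqrt{n}))+\epsilon\) excess risk of \(\hat{f}\) against the Stage~1 approximation. Adding the two stage errors and taking a union bound to split \(\delta\) yields the claimed \(4\sqrt{\lambda}+4\sqrt{\lambda^*}+O(1/(h\sqrt{n}))+2\epsilon\) bound.

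The step I expect to be the main obstacle is the leverage-score perturbation in Stage~2: I need the finite-sample scores computed from \(\widehat{\mathbf{K}}\) to both (i) retain concentration strong enough for the multinomial resampling on Line~5 to behave like i.i.d.\ draws from an admissible density, and (ii) stay above \(l_{\lambda^*}\) up to a constant. Combining the matrix-sensing recovery guarantee, the concentration of \(\widehat{\mathbf{K}}\) around \(\widetilde{\mathbf{K}}\) produced by the \(s\)-feature calculation, and the resolvent bound above is the technical crux; once that estimate is in place the rest of the argument is a routine reuse of the machinery developed for Theorems~\ref{thm36} and~\ref{thm41}.
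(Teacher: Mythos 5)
Your two-stage architecture (apply the Lipschitz unified bound once per stage, chain the errors, union-bound over $\delta$) is the same as the paper's, which proves Theorem~\ref{thm41} and declares Theorem~\ref{thm42} parallel. However, your Stage~2 mechanism diverges from the paper's at exactly the point you flag as the technical crux, and there the proposal has a genuine gap. You propose to show that the scores computed from $\widehat{\mathbf{K}}$ dominate the true leverage scores $l_{\lambda^*}(w)$ defined via $\widetilde{\mathbf{K}}$, by a resolvent perturbation bound, so that the Line-4 density is an admissible $\tilde{l}/d_{\tilde{l}}$ in Theorem~\ref{thm36}. This step is not carried out, and it also creates two internal inconsistencies: (i) if the approximate scores only dominate $l_{\lambda^*}$ up to a multiplicative constant $c$, then $d_{\tilde{l}}$ picks up that constant and the hypothesis $S\geq 5d_{\widehat{\mathbf{K}}}^{\lambda^*}\log(16d_{\widehat{\mathbf{K}}}^{\lambda^*}/\delta)$ in the theorem statement would not follow --- the constant multiplies $d_{\tilde{l}}$, it is not ``absorbed into the $\log$''; (ii) if you really did verify the hypotheses of Theorem~\ref{thm36} with the original kernel $\widetilde{\mathbf{K}}$ and target $g^*$, that theorem would bound the excess risk of $\hat f$ against $g^*$ directly, so there would be nothing to add the Stage-1 error to --- your final ``add the two stage errors'' step is not licensed by the setup you describe.

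The paper avoids the perturbation argument entirely. It defines the empirical kernel $\hat k(x,y)=\frac{1}{s}\sum_{m}\tilde\sigma(w_m^\top x)\tilde\sigma(w_m^\top y)$ with RKHS $\hat{\mathcal H}$ and \emph{re-declares} $\hat k$ to be the kernel being learned in Stage~2. Under this reinterpretation the quantities $p_i$ computed in Line~3 are the \emph{exact} ridge leverage scores of $\hat k$ (no approximation is involved), so Corollary~\ref{cor7} applies verbatim with effective dimension $d_{\widehat{\mathbf{K}}}^{\lambda^*}$, yielding $\mathbb{E}[l_{\hat f}]\leq 4\sqrt{\lambda^*}+O(1/(h\sqrt n))+\epsilon+\mathbb{E}[l_{f_{\hat{\mathcal H}}}]$ where $f_{\hat{\mathcal H}}$ minimizes the risk over $\hat{\mathcal H}$. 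The chaining is then completed by observing that the Stage-1 (plain-sampled) minimizer $\hat f_0$ lies in $\hat{\mathcal H}$, hence $\mathbb{E}[l_{f_{\hat{\mathcal H}}}]\leq \mathbb{E}[l_{\hat f_0}]$, and Corollary~\ref{cor8} bounds $\mathbb{E}[l_{\hat f_0}]-\mathbb{E}[l_{g^*}]$ by $4\sqrt\lambda+O(1/(h\sqrt n))+\epsilon$. To repair your argument you should either adopt this reinterpretation (which makes the perturbation lemma unnecessary) or actually prove the resolvent-domination estimate together with a corrected accounting of $d_{\tilde l}$ and of which reference function each stage's excess risk is measured against. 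Your appeal to the matrix-sensing recovery guarantees of Line~2 is likewise extra machinery the paper does not use --- it simply takes $\tilde a$ to be the learned parameter and builds $\hat k$ from it.
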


\paragraph{Discussion of the time and space complexity} The theoretical results indicate that \(n=\Omega(1/\epsilon^2h^2)\) and \(s = \tilde{\Omega}(1/\epsilon^2)\) in the most general Lipschitz loss case. From Table \ref{summse} and \ref{sumlip} we conclude that \(S < s\ll n\). In the training phase, suppose the epoch number of step 1 or 3 is \(E\). Then for step 1 and 3, the computational complexity is \(O(En\cdot \max\{d,N\}\cdot s)\) and \(O(En\cdot \max\{d,N\}\cdot S)\) respectively. For step 2, the computational cost is dominated by line 3 of \(O(ns^2+s^3)\). The dominating space complexity is \(O(ds)\) for storing the initial feature matrix \(W\) and \(O(s^2)\) for computing the inverse matrix in line 3 of Algorithm \ref{alg1}. In the inference phase, the computational complexity of the model is reduced from \(O(\max\{d,N\}\cdot s)\) to \(O(\max\{d,N\}\cdot S)\). Our method mainly lowers the time complexity in the inference time by lowering the width of the model.

\section{NUMERICAL RESULTS}
\label{simulation}


In this section, we will compare leverage weighted sampling scheme (\texttt{LWS}) to the baseline method plain sampling scheme (\texttt{PS}) where \(p(w)\) is set to be the standard normal distribution \(\mathcal{N}(0,I_d)\). We test the algorithm in two different settings with four real datasets. We consider two regression tasks \texttt{protein} and \texttt{workloads} for the squared error loss case, and two classification tasks \texttt{adult} and CIFAR-10 for the Lipschitz loss case. For classification tasks, the values of cross-entropy losses are reported as evaluation metric in figures, and we also supplement the results of accuracies in the appendix. In the first experimental setting, we compare \texttt{LWS} to \texttt{PS} within RFLAFs of two different basis functions: RBF and B-spline \citep{fakhoury2022exsplinet}. In the second setting, we compare \texttt{LWS} to \texttt{PS} within RFLAFs of different grid numbers \(N \in \{16,32,64\}\). In each experiment, we test the algorithm with the number of features \(S\in\{30, 50, 100, 300, 500, 1000\}\), and the number of the larger pool of random features in \texttt{LWS} is set to be \(s\in\{3000,5000\}\). We conduct each experiment with 8 different random seeds and illustrate the curves of the mean values with confidence intervals. More information can be found in Appendix \ref{expdetail}.

Figure \ref{fig1} shows the test losses of RFLAFs with RBF and B-spline with \(N=16\). It shows that \texttt{LWS} effectively reduces the width of RFLAF while preserving performances as those of \texttt{PS}. Consequently, the inference time of RFLAF can be shortened significantly. For instance, in the task CIFAR-10, the weighted RFLAF of width \(S=100\) achieves better test loss than the plain RFLAF of width \(S=1000\), improving the computational time by \(90\%\). In all four tasks and in both cases of RBF and B-spline (BS), \texttt{LWS} shows consistent advantages over \texttt{PS}. Figure \ref{fig2} further substantiates the result with RFLAF of different grid numbers \(N\). We test the algorithm on models of \(N\in\{32,64\}\). The results are similar to those of \(N=16\). Models using \texttt{LWS} always achieve lower test losses than \texttt{PS} within the same feature number, providing further evidence that \texttt{LWS} is efficient in finding features of directions more aligned with the data geometry.





\section{CONCLUSION}

This paper combines the leverage weighted sampling to the model RFLAF and proposes an algorithm that successfully achieves superior performances with a substantially fewer number of random features. We provide rigorous analyses that deduce the sharpest bounds by far on the required number of features. Extensive experimental results validate our methods. Our work sheds light on the statistical limit of the high dimensional system. We highlight that a finer analysis of the sample size may be a potential future direction.


\bibliography{example_paper}
\bibliographystyle{apalike}

\section*{Checklist}



\begin{enumerate}

  \item For all models and algorithms presented, check if you include:
  \begin{enumerate}
    \item A clear description of the mathematical setting, assumptions, algorithm, and/or model. [Yes]
    \item An analysis of the properties and complexity (time, space, sample size) of any algorithm. [Yes]
    \item (Optional) Anonymized source code, with specification of all dependencies, including external libraries. [Yes]
  \end{enumerate}

  \item For any theoretical claim, check if you include:
  \begin{enumerate}
    \item Statements of the full set of assumptions of all theoretical results. [Yes]
    \item Complete proofs of all theoretical results. [Yes]
    \item Clear explanations of any assumptions. [Yes]     
  \end{enumerate}

  \item For all figures and tables that present empirical results, check if you include:
  \begin{enumerate}
    \item The code, data, and instructions needed to reproduce the main experimental results (either in the supplemental material or as a URL). [Yes]
    \item All the training details (e.g., data splits, hyperparameters, how they were chosen). [Yes]
    \item A clear definition of the specific measure or statistics and error bars (e.g., with respect to the random seed after running experiments multiple times). [Yes]
    \item A description of the computing infrastructure used. (e.g., type of GPUs, internal cluster, or cloud provider). [Yes]
  \end{enumerate}

  \item If you are using existing assets (e.g., code, data, models) or curating/releasing new assets, check if you include:
  \begin{enumerate}
    \item Citations of the creator If your work uses existing assets. [Not Applicable]
    \item The license information of the assets, if applicable. [Not Applicable]
    \item New assets either in the supplemental material or as a URL, if applicable. [Not Applicable]
    \item Information about consent from data providers/curators. [Not Applicable]
    \item Discussion of sensible content if applicable, e.g., personally identifiable information or offensive content. [Not Applicable]
  \end{enumerate}

  \item If you used crowdsourcing or conducted research with human subjects, check if you include:
  \begin{enumerate}
    \item The full text of instructions given to participants and screenshots. [Not Applicable]
    \item Descriptions of potential participant risks, with links to Institutional Review Board (IRB) approvals if applicable. [Not Applicable]
    \item The estimated hourly wage paid to participants and the total amount spent on participant compensation. [Not Applicable]
  \end{enumerate}

\end{enumerate}

\clearpage
\appendix
\thispagestyle{empty}

\onecolumn
\aistatstitle{Supplementary Materials}

\section{RELATED WORK}
\label{relatedwork}

\paragraph{Scalable Kernel Methods and Random Features} The kernel method is a powerful technique for complex pattern recognition in machine learning tasks, including classification, regression, and dimensionality reduction. A key limitation, however, is its computational complexity. The method inherently requires calculation and storage that scales quadratically with the dataset size. This bottleneck, which involves constructing, storing, and inverting the kernel matrix, becomes prohibitive for large-scale applications. This challenge has spurred considerable research into scalable approximations and their theoretical analysis \citep{10.5555/2789272.2912104,rudi2017falkon,rudi2015less,alaoui2015fast}. Among the scalable kernel methods, the random feature models \citep{rahimi2008weighted,rahimi2007random} are one of the significant pillars for fast kernel approximation. Its foundation lies in the duality between a kernel function and its Fourier spectral density, allowing it to act as a scalable surrogate for kernel methods \citep{sun2018but,jacot2018neural,arora2019exact,zandieh2021scaling,du2019graph,zambon2020graph,fu2024can,shen2019random}. This has proven highly effective in areas like Transformer architecture, where it approximates kernels in the attention blocks to improve efficiency \citep{choromanski2020rethinking,peng2021random, fu2024can,guo2025schoenbat}. From a theoretical standpoint, the model, conceptualized as a two-layer network with frozen initial weights, provides a foundational framework for interpreting deep learning phenomena \citep{cao2019generalization,arora2019fine,mei2022generalization,chizat2020implicit}, spurring considerable research into its theoretical properties.

\paragraph{Feature complexity of Random Features Models} Theoretical topics for random feature model research have been continuously explored in recent years, particularly regarding the number of features required to match exact kernel performance. Existing work follows two main threads. One studies kernel matrix approximation \citep{rahimi2007random,sriperumbudur2015optimal,10.5555/3020847.3020936}, but often requires $\Omega(n)$ features, offering no computational benefit ($n$ being the sample number). The other directly analyzes generalization, yet early results still need $\Omega(n)$ features or exclude key methods like kernel ridge regression (KRR) \citep{rahimi2008weighted,bach2017equivalence}. \citet{avron2017random} showed $o(n)$ features suffice for empirical risk in KRR using modified sampling, though only for Gaussian kernels and without expected risk rates. \citet{rudi2017generalization} established that $\Omega(\sqrt{n} \log n)$ features achieve minimax-optimal error in KRR, but their analysis relies on strong assumptions and does not extend to SVMs or logistic regression. \citet{sun2018but} gave SVM bounds under low-noise conditions, but these scale exponentially in dimension and require optimized features. \citet{li2021towards} establish the first unified risk analysis for learning with random Fourier features under both squared error and Lipschitz continuous loss functions, deriving the sharpest bounds on the required number of features by far. Their bounds reveal a problem-specific trade-off between computational cost and expected risk convergence rate, which is characterized by the regularization parameter and the number of effective degrees of freedom.

\paragraph{Learnable Activation Functions} Research on parameterizable learnable activation functions constitutes a major thread in this field. \citep{tavakoli2021splash} parameterized continuous piecewise linear activation functions, numerically learning their parameters to improve accuracy and robustness against adversarial perturbations, and compared the performance of their SPLASH framework with various activation functions such as ReLU, Leaky ReLU \citep{maas2013rectifier}, PReLU \citep{he2015delving}, tanh, and sigmoid. Similarly, in \citep{zhou2021learning}, the authors parameterized the activation function as a piecewise linear unit, learning its parameters to optimize performance across different tasks. \citep{banerjee2019empirical} proposed an empirical method for learning variants of ReLU. \citep{bubeck2021law} studied two-layer neural networks, proposing a condition on the Lipschitz constant for polynomial activation functions under which the network can perfectly fit the data. They linked this condition to model parameter scale and robustness, and numerically analyzed the relationship between the number of ReLUs in the model and its robustness. There have also been various attempts using basis functions for activation functions, such as spline functions \citep{liu2024kan, liu2024kan2, fakhoury2022exsplinet, bohra2020learning, aziznejad2019deep}, polynomials \citep{goyal2019learning, goyal2020improved}, or neural networks \citep{zhang2022neural}.

\section{REMARK ON THE EXISTENCE OF THE MINIMIZER}

The problem formulated in (\ref{mainpro}) is non-convex. The reason is that RFLAF is a bilinear model that is not convex regarding the joint parameters \((a,v)\). Moreover, the function class presented by RFLAF is not a convex set. Hence, analyses regarding this problem are non-trivial.

To start with, we consider the existence of the minimizer of this problem. The solution to this problem is well-defined. For a fixed $a$, 
the inner minimization problem is a standard ridge regression problem.
\begin{equation*}
    v_a = \mathop{\mathrm{argmin}}_{v} \frac{1}{n}\left\|\mathbf{Z}_q({a})v-y\right\|^2+\lambda s \|v\|_2^2.
\end{equation*}
Hence, the inner problem has a closed-form solution given by
\begin{equation*}
    v_a = \left(\mathbf{Z}_q({a})^\top\mathbf{Z}_q({a})+\lambda ns \mathbf{I}\right)^{-1}\mathbf{Z}_q({a})^\top y.
\end{equation*}

The outer minimization problem then reduces to
\begin{equation}
    \label{outp}
    \min_{\|a\|\leq R}\left\{\frac{1}{n}\left\|\sum_{k=1}^{N}a_kB_k(XW)Qv_a-y\right\|^2+\lambda s \|v_a\|_2^2\right\}.
\end{equation}

This is still a nonconvex optimization problem with respect to the parameter \(a\). However, we note that \(v_a\) is continuous with respect to the variable \(a\). Hence, the target function in Eq. \ref{outp} is continuous with respect to \(a\). Consequently, within the compact set \(\left\{a\in\mathbb{R}^N:\|a\|\leq R\right\}\), the minimizer of this problem exists. 

\section{UPPER BOUNDS ON \(d_{\widetilde{\mathbf{K}}}^{\lambda}\)}
\label{effdim}

Explicitly, we have
\begin{equation}
    \label{summat}
    d_{\widetilde{\mathbf{K}}}^{\lambda} = \sum_{m=1}^{n} \frac{\lambda_m}{\lambda_m+\lambda},
\end{equation}
where \(\lambda_1\geq \cdots \geq \lambda_n\) are eigenvalues of the normalized Gram matrix \(\widetilde{\mathbf{K}}/n\) by Assumption \ref{assump22}. The deduction in Section 4.2 of \citep{bach2017equivalence} shows that there exists a constant \(C\) independent of \(n\) such that
\begin{equation}
    \label{tailbound}
    d_{\widetilde{\mathbf{K}}}^{\lambda}\leq C\max\{m:\lambda_m\geq \lambda\}.
\end{equation}
We consider four cases of \(d_{\widetilde{\mathbf{K}}}^{\lambda}\). For the first three cases, we use (\ref{tailbound}) to derive an upper bound on \(d_{\widetilde{\mathbf{K}}}^{\lambda}\):

(1) \(\widetilde{\mathbf{K}}/n\) is of finite rank. In this case, there exists a constant \(r\) such that \(\lambda_r>0\) and \(\lambda_m=0\) for all \(m>r\). Hence, for all \(\lambda\), we have \(d_{\widetilde{\mathbf{K}}}^{\lambda}\leq Cr=O(1)\).

(2) The eigenvalues of \(\widetilde{\mathbf{K}}/n\) decay in an exponential way, i.e., there exists \(A\in(0,1)\) and \(c_1>0\) such that \(\lambda_m\leq c_1A^m\) for all \(m\geq 1\). In this case, we have \(d_{\widetilde{\mathbf{K}}}^{\lambda}\leq O(\log(1/\lambda))\).

(3) The eigenvalues of \(\widetilde{\mathbf{K}}/n\) decay in a polynomial way, i.e., there exists \(t>1\) and \(c_2>0\) such that \(\lambda_m\leq c_2m^{-t}\) for all \(m\geq 1\). In this case, we have \(d_{\widetilde{\mathbf{K}}}^{\lambda}\leq O((1/\lambda)^{1/t})\).

For the last case, we use (\ref{summat}) to derive an upper bound on \(d_{\widetilde{\mathbf{K}}}^{\lambda}\):

(4) The eigenvalues of \(\widetilde{\mathbf{K}}/n\) decay harmonically, i.e., there exists a constant \(c_3>0\) such that \(\lambda_m\leq c_3/m\) for all \(m\geq 1\). In this case, we have \[d_{\widetilde{\mathbf{K}}}^{\lambda}< \sum_{m=1}^{n} \frac{\lambda_m}{\lambda}\leq \sum_{m=1}^{n} \frac{c_3}{m\lambda}= O\left(\frac{1}{\lambda}\log{n}\right).\]

\section{PROOFS OF THE MAIN THEOREMS}
\subsection{Proof of Theorem \ref{thm31}}
\label{proof31}

\begin{proof}
    Let the target function be \(f^*(x) = \mathbb{E}_{w}\left[\sigma(w^\top x)v(w)\right]\). We decompose the excess risk as follows.
    \[
        \mathbb{E}[l_{\hat{f}}]-\mathbb{E}[l_{f^*}]=\mathbb{E}[l_{\hat{f}}]-\mathbb{E}_n[l_{\hat{f}}]+\mathbb{E}_n[l_{\hat{f}}]-\mathbb{E}[l_{f^*}].
    \]
    The first two terms together can be bounded by simply using the Rademacher complexity of the hypothesis class which eventually induces a \(O(1/(h\sqrt{n}))\) term as is shown in \ref{rade}.
    
    For the remaining terms, we introduce an intermediate term to tackle them. Let the approximations of the activation function and the target function introduced in (\ref{tilde}) be \(\tilde{\sigma}(z)=\sum_{k=1}^{N}\tilde{a}_kB_k(z)\) and \(\tilde{f}(x) := \mathbb{E}_{w}\left[\tilde{\sigma}(w^\top x)v(w)\right]\), then \(\|\tilde{f}(x)-f^*(x)\|_{\infty}\leq \epsilon\). Let 
    \[
        f(x;\tilde{a},v) = \sum_{k=1}^{N}\tilde{a}_kB_k(x^\top W)Qv,
    \]
    and the optimal parameter be
    \[
        \tilde{v} = \mathop{\mathrm{argmin}}_{v\in\mathbb{R}^s}\frac{1}{n}\sum_{i=1}^{n} \left(f(x_i;\tilde{a},v)-y_i\right)^2+\lambda s \|v\|_2^2,
    \]
    and denote \(f_{\tilde{a}}(x)=f(x;\tilde{a},\tilde{v})\).

    By the definition of (\ref{mzer}), we naturally have the following inequalities.
    \[
        \mathbb{E}_n[l_{\hat{f}}]\leq \mathbb{E}_n[l_{\hat{f}}]+\lambda s \|\hat{v}\|_2^2 \leq \mathbb{E}_n[l_{{f}_{\tilde{a}}}]+\lambda s \|\tilde{v}\|_2^2.
    \]
    Then we have
    \[
        \begin{split}
            &\,\mathbb{E}[l_{\hat{f}}]-\mathbb{E}[l_{f^*}]\leq(\mathbb{E}[l_{\hat{f}}]-\mathbb{E}_n[l_{\hat{f}}])+(\mathbb{E}_n[l_{{f}_{\tilde{a}}}]+\lambda s \|\tilde{v}\|_2^2-\mathbb{E}[l_{f^*}]).
        \end{split}
    \]
    
    For the second term, we have
    \[
        \begin{split}
            &\,\mathbb{E}_n[l_{{f}_{\tilde{a}}}]+\lambda s \|\tilde{v}\|_2^2-\mathbb{E}[l_{f^*}]\\
            =&\, \frac{1}{n}\sum_{i=1}^{n} \left(f_{\tilde{a}}(x_i)-y_i\right)^2+\lambda s \|\tilde{v}\|_2^2-\mathbb{E}[l_{f^*}]\\
            =&\, \frac{1}{n}\sum_{i=1}^{n} \left(f_{\tilde{a}}(x_i)-f^*(x_i)-\varepsilon_i\right)^2+\lambda s \|\tilde{v}\|_2^2-\mathbb{E}[l_{f^*}]\\
            =&\, \frac{1}{n}\sum_{i=1}^{n} \left[\left(f_{\tilde{a}}(x_i)-f^*(x_i)\right)^2-2\left(f_{\tilde{a}}(x_i)-f^*(x_i)\right)\varepsilon_i+\varepsilon_i^2\right]+\lambda s \|\tilde{v}\|_2^2-\mathbb{E}[l_{f^*}]\\
            =&\, \frac{1}{n}\sum_{i=1}^{n} \left(f_{\tilde{a}}(x_i)-f^*(x_i)\right)^2+\lambda s \|\tilde{v}\|_2^2-\frac{2}{n}\sum_{i=1}^{n}\left(f_{\tilde{a}}(x_i)-f^*(x_i)\right)\varepsilon_i+\frac{1}{n}\sum_{i=1}^{n}\varepsilon_i^2-\mathbb{E}[l_{f^*}]\\
        \end{split}
    \]
    Hence, we have
    \[
        \begin{split}
            &\,\mathbb{E}_n[l_{{f}_{\tilde{a}}}]+\lambda s \|\tilde{v}\|_2^2-\mathbb{E}[l_{f^*}]\\
            =&\, \min_{v\in\mathbb{R}^s}\frac{1}{n}\sum_{i=1}^{n} \left(f(x_i;\tilde{a},v)-y_i\right)^2+\lambda s \|{v}\|_2^2-\mathbb{E}[l_{f^*}]\\
            =&\, \min_{v\in\mathbb{R}^s}\frac{1}{n}\sum_{i=1}^{n} \left(f(x_i;\tilde{a},v)-f^*(x_i)\right)^2+\lambda s \|{v}\|_2^2-\frac{2}{n}\sum_{i=1}^{n}\left(f(x_i;\tilde{a},v)-f^*(x_i)\right)\varepsilon_i+\frac{1}{n}\sum_{i=1}^{n}\varepsilon_i^2-\mathbb{E}[l_{f^*}]\\
        \end{split}
    \]
    Moreover, we have that
    \[
        \begin{split}
            &\, \min_{v\in\mathbb{R}^s}\frac{1}{n}\sum_{i=1}^{n} \left(f(x_i;\tilde{a},v)-f^*(x_i)\right)^2+\lambda s \|{v}\|_2^2-\frac{2}{n}\sum_{i=1}^{n}\left(f(x_i;\tilde{a},v)-f^*(x_i)\right)\varepsilon_i+\frac{1}{n}\sum_{i=1}^{n}\varepsilon_i^2-\mathbb{E}[l_{f^*}]\\
            \leq&\, \min_{v\in\mathbb{R}^s}\Bigg\{\frac{1}{n}\sum_{i=1}^{n}\left[ 2(f(x_i;\tilde{a},v)-\tilde{f}(x_i))^2 + 2(\tilde{f}(x_i)-f^*(x_i))^2 \right]+2\lambda s \|{v}\|_2^2\\
            &\,\quad \quad ~~~-\frac{2}{n}\sum_{i=1}^{n}(f(x_i;\tilde{a},v)-\tilde{f}(x_i))\varepsilon_i-\frac{2}{n}\sum_{i=1}^{n}(\tilde{f}(x_i)-f^*(x_i))\varepsilon_i+\left(\frac{1}{n}\sum_{i=1}^{n}\varepsilon_i^2-\mathbb{E}[l_{f^*}]\right)\Bigg\}\\
            =&\, \min_{v\in\mathbb{R}^s}\Bigg\{2\left(\frac{1}{n}\sum_{i=1}^{n}(f(x_i;\tilde{a},v)-\tilde{f}(x_i))^2+\lambda s \|{v}\|_2^2\right)-\frac{2}{n}\sum_{i=1}^{n}(f(x_i;\tilde{a},v)-\tilde{f}(x_i))\varepsilon_i\Bigg\} \\
            &\,\quad \quad ~~~+ \frac{2}{n}\sum_{i=1}^{n}(\tilde{f}(x_i)-f^*(x_i))^2-\frac{2}{n}\sum_{i=1}^{n}(\tilde{f}(x_i)-f^*(x_i))\varepsilon_i+\left(\frac{1}{n}\sum_{i=1}^{n}\varepsilon_i^2-\mathbb{E}[l_{f^*}]\right),\\
        \end{split}
    \]
    where we used \((a+b)^2\leq 2a^2+2b^2\) in the inequality.
    
    From now on, we analyze the above four terms one by one. We start from the first term with minimum.

    Let
    \(
        \tilde{\mathbf{f}}=[\tilde{f}(x_1),...,\tilde{f}(x_n)]^\top,
    \)
    then
    \[
    \begin{split}
        \tilde{v}_0 =&\, \mathop{\mathrm{argmin}}_{v\in\mathbb{R}^s} \frac{1}{n}\sum_{i=1}^{n}(f(x_i;\tilde{a},v)-\tilde{f}(x_i))^2+\lambda s \|{v}\|_2^2\\
        =&\,\mathop{\mathrm{argmin}}_{v\in\mathbb{R}^s} \frac{1}{n}\|\mathbf{Z}_q(\tilde{a})v - \tilde{\mathbf{f}}\|_2^2+\lambda s \|{v}\|_2^2.
    \end{split}
    \]
    Therefore, we have 
    \[
        \begin{split}
            &\, \min_{v\in\mathbb{R}^s}\Bigg\{2\left(\frac{1}{n}\sum_{i=1}^{n}(f(x_i;\tilde{a},v)-\tilde{f}(x_i))^2+\lambda s \|\tilde{v}\|_2^2\right)-\frac{2}{n}\sum_{i=1}^{n}(f(x_i;\tilde{a},v)-\tilde{f}(x_i))\varepsilon_i\Bigg\} \\
            \leq &\, 2\left(\frac{1}{n}\sum_{i=1}^{n}(f(x_i;\tilde{a},\tilde{v}_0)-\tilde{f}(x_i))^2+\lambda s \|\tilde{v}_0\|_2^2\right)-\frac{2}{n}\sum_{i=1}^{n}(f(x_i;\tilde{a},\tilde{v}_0)-\tilde{f}(x_i))\varepsilon_i
        \end{split}
    \]
    For the first term, by applying Lemma \ref{lemma22}, we have that with high probability
    \[
        \frac{1}{n}\sum_{i=1}^{n}(f(x_i;\tilde{a},\tilde{v}_0)-\tilde{f}(x_i))^2+\lambda s \|\tilde{v}_0\|_2^2 \leq 2\lambda.
    \]
    For the second term, let \(\boldsymbol{\varepsilon}=(\varepsilon_1,...,\varepsilon_n)\) we have that
    \[
    \begin{split}
        &\,\frac{2}{n}\sum_{i=1}^{n}(f(x_i;\tilde{a},\tilde{v}_0)-\tilde{f}(x_i))\varepsilon_i=\frac{2}{n}\langle \mathbf{Z}_q(\tilde{a})\tilde{v}_0-\tilde{\mathbf{f}},\boldsymbol{\varepsilon} \rangle.
    \end{split}
    \]
    The minimizer has an analytic expression as
    \[
    \begin{split}
        \tilde{v}_0 =&\,\frac{1}{s}(\widehat{\mathbf{K}}_q(\tilde{a})+n\lambda\mathbf{I})^{-1}\mathbf{Z}_q(\tilde{a})^\top\tilde{\mathbf{f}}\\
        =&\,\frac{1}{s}\mathbf{Z}_q(\tilde{a})^\top(\widehat{\mathbf{K}}_q(\tilde{a})+n\lambda\mathbf{I})^{-1}\tilde{\mathbf{f}},
    \end{split}
    \]
    where the second equality follows from the Woodbury inversion lemma.
    As a result, we have
    \[
        \mathbf{Z}_q(\tilde{a})\tilde{v}_0=\frac{1}{s}\mathbf{Z}_q(\tilde{a})\mathbf{Z}_q(\tilde{a})^\top(\frac{1}{s}\mathbf{Z}_q(\tilde{a})\mathbf{Z}_q(\tilde{a})^\top+n\lambda\mathbf{I})^{-1}\tilde{\mathbf{f}}=\widehat{\mathbf{K}}_q(\tilde{a})(\widehat{\mathbf{K}}_q(\tilde{a})+n\lambda\mathbf{I})^{-1}\tilde{\mathbf{f}}.
    \]
    Hence, we have
    \[
    \begin{split}
        &\,-\frac{2}{n}\sum_{i=1}^{n}(f(x_i;\tilde{a},\tilde{v}_0)-\tilde{f}(x_i))\varepsilon_i\\
        =&\,-\frac{2}{n}\langle \widehat{\mathbf{K}}_q(\tilde{a})(\widehat{\mathbf{K}}_q(\tilde{a})+n\lambda\mathbf{I})^{-1}\tilde{\mathbf{f}}-\tilde{\mathbf{f}},\boldsymbol{\varepsilon} \rangle\\
        =&\,-\frac{2}{n}\langle \widehat{\mathbf{K}}_q(\tilde{a})(\widehat{\mathbf{K}}_q(\tilde{a})+n\lambda\mathbf{I})^{-1}\tilde{\mathbf{f}}-(\widehat{\mathbf{K}}_q(\tilde{a})+n\lambda\mathbf{I})(\widehat{\mathbf{K}}_q(\tilde{a})+n\lambda\mathbf{I})^{-1}\tilde{\mathbf{f}},\boldsymbol{\varepsilon} \rangle\\
        =&\,2\lambda \langle (\widehat{\mathbf{K}}_q(\tilde{a})+n\lambda\mathbf{I})^{-1}\tilde{\mathbf{f}},\boldsymbol{\varepsilon} \rangle
    \end{split}
    \]
    Note that \(\langle (\widehat{\mathbf{K}}_q(\tilde{a})+n\lambda\mathbf{I})^{-1}\tilde{\mathbf{f}},\boldsymbol{\varepsilon} \rangle\) is actually a Gaussian random variable whose variance is \[\langle (\widehat{\mathbf{K}}_q(\tilde{a})+n\lambda\mathbf{I})^{-1}\tilde{\mathbf{f}},(\widehat{\mathbf{K}}_q(\tilde{a})+n\lambda\mathbf{I})^{-1}\tilde{\mathbf{f}} \rangle = \tilde{\mathbf{f}}^\top(\widehat{\mathbf{K}}_q(\tilde{a})+n\lambda\mathbf{I})^{-2}\tilde{\mathbf{f}}\]
    The proof of Lemma \ref{lemma22} shows that 
    \[
        \tilde{\mathbf{f}}^\top(\widehat{\mathbf{K}}_q(\tilde{a})+n\lambda\mathbf{I})^{-2}\tilde{\mathbf{f}}=\frac{1}{n^2\lambda^2}\|\mathbf{Z}_q(\tilde{a})\tilde{v}_0 - \tilde{\mathbf{f}}\|_2^2\leq \frac{2\lambda}{n\lambda^2}=\frac{2}{n\lambda}.
    \]
    Hence, with high probability, we have that
    \[
        2\lambda \langle (\widehat{\mathbf{K}}_q(\tilde{a})+n\lambda\mathbf{I})^{-1}\tilde{\mathbf{f}},\boldsymbol{\varepsilon} \rangle = O\left(\sqrt{\frac{\lambda}{n}}\right)=O\left(\frac{1}{\sqrt{n}}\right).
    \]
    The last equality holds due to that fact that \(\lambda = o(1)\) always holds in practice.


    Now, we consider the rest of the terms. Because \(\|\tilde{f}(x)-f^*(x)\|_{\infty}\leq \epsilon\), we have
    \[
        \frac{2}{n}\sum_{i=1}^{n}(\tilde{f}(x_i)-f^*(x_i))^2\leq 2\epsilon^2.
    \]

    Since \(\varepsilon_i\) are Gaussian and \(\tilde{f}-f^*\) is bounded, we have that \((\tilde{f}(x_i)-f^*(x_i))\varepsilon_i\) are Gaussian random variables with bounded variance. Using Hoeffding's Lemma or Central Limit Theorem, we have that with high probability
    \[
        \frac{2}{n}\sum_{i=1}^{n}(\tilde{f}(x_i)-f^*(x_i))\varepsilon_i = O\left(\frac{1}{\sqrt{n}}\right).
    \]


    Given that \(y_i = f^*(x_i)+\varepsilon_i\) where \(\varepsilon_i\) are i.i.d. noises, we have 
    \[
        \mathbb{E}[l_{f^*}]=\mathbb{E}[\varepsilon_i^2]=\sigma^2.
    \]
    Since \(\mathbb{E}[\varepsilon_i^4]<\infty\), by Central Limit Theorem, with high probability we have that
    \[
        \frac{1}{n}\sum_{i=1}^{n}\varepsilon_i^2-\mathbb{E}[l_{f^*}] = O\left(\frac{1}{\sqrt{n}}\right).
    \]

    Combining all the estimates, we conclude that with high probability, it happens that
    \[
        \mathbb{E}[l_{\hat{f}}]-\mathbb{E}[l_{f^*}] \leq 4\lambda + O\left(\frac{1}{h\sqrt{n}}\right) + 2\epsilon^2.
    \]
    Substituting the \(\epsilon\) term with \(\sqrt{\epsilon}\), we obtain the desired results in the theorem.

\end{proof}

\subsection{Proof of Theorem \ref{thm36}}
\label{proof36}
\begin{proof}
    Similar to the proof of Theorem \ref{thm31}, we decompose the excess risk as follows.
    \[
        \begin{split}
            \mathbb{E}[l_{\hat{g}}]-\mathbb{E}[l_{g^*}]=\mathbb{E}[l_{\hat{g}}]-\mathbb{E}_n[l_{\hat{g}}]+\mathbb{E}_n[l_{\hat{g}}]-\mathbb{E}[l_{g^*}].
        \end{split}
    \]

    Let the approximations of the activation function and the target function introduced in (\ref{tilde}) be \(\tilde{\sigma}(z)=\sum_{k=1}^{N}\tilde{a}_kB_k(z)\) and \(\tilde{g}(x) := \mathbb{E}_{w}\left[\tilde{\sigma}(w^\top x)v(w)\right]\), then \(\left\|\tilde{g}(x)-g^*(x)\right\|_{\infty}\leq \epsilon\).

    Next, we introduce an intermediate term. Let 
    \[
        g(x;\tilde{a},v) = \sum_{k=1}^{N}\tilde{a}_kB_k(x^\top W)Qv,
    \]
    and the optimal parameter be
    \[
        \tilde{v} = \mathop{\mathrm{argmin}}_{v\in\mathbb{R}^s}\frac{1}{n}\sum_{i=1}^{n} l\left(g(x_i;\tilde{a},v),y_i\right)+\lambda s \|v\|_2^2,
    \]
    and denote \(g_{\tilde{a}}(x)=g(x;\tilde{a},\tilde{v})\).

    For the third term, by the definition of (\ref{mzer2}), we have that 
    \[
    \begin{split}
        \mathbb{E}_n[l_{\hat{g}}]&\leq \mathbb{E}_n[l_{\hat{g}}] + \lambda s \|\hat{v}\|_2^2\leq \mathbb{E}_n[l_{g_{\tilde{a}}}] + \lambda s \|\tilde{v}\|_2^2.
    \end{split}
    \]
    Plugging it back to the decomposition, we have that
    \[
    \begin{split}
        &\,\mathbb{E}[l_{\hat{g}}]-\mathbb{E}[l_{g^*}]\\
        =&\,\mathbb{E}[l_{\hat{g}}]-\mathbb{E}_n[l_{\hat{g}}]+\mathbb{E}_n[l_{\hat{g}}]-\mathbb{E}[l_{g^*}]\\
        \leq&\,\mathbb{E}[l_{\hat{g}}]-\mathbb{E}_n[l_{\hat{g}}]+\mathbb{E}_n[l_{g_{\tilde{a}}}] + \lambda s \|\tilde{v}\|_2^2-\mathbb{E}[l_{g^*}]\\
        =&\,(\mathbb{E}[l_{\hat{g}}]-\mathbb{E}_n[l_{\hat{g}}])+(\mathbb{E}_n[l_{g_{\tilde{a}}}]+\lambda s \|\tilde{v}\|_2^2-\mathbb{E}_n[l_{\tilde{g}}]) +(\mathbb{E}_n[l_{\tilde{g}}]-\mathbb{E}[l_{\tilde{g}}])+(\mathbb{E}[l_{\tilde{g}}]-\mathbb{E}[l_{g^*}]).
    \end{split}
    \]

    We analyze the four terms in the above expression separately.

    For the first term, by applying Lemma \ref{rade}, we have
    \[
        \mathbb{E}[l_{\hat{g}}]-\mathbb{E}_n[l_{\hat{g}}] = O\left(\frac{1}{h\sqrt{n}}\right).
    \]


    For the second term, we have
    \[
    \begin{split}
        &\,\mathbb{E}_n[l_{g_{\tilde{a}}}]+\lambda s \|\tilde{v}\|_2^2-\mathbb{E}_n[l_{\tilde{g}}]\\
        =&\, \min_{v\in \mathbb{R}^s}\frac{1}{n}\sum_{i=1}^{n} l\left(g(x_i;\tilde{a},v),y_i\right) - \frac{1}{n}\sum_{i=1}^{n} l\left(\tilde{g}(x_i),y_i\right)+\lambda s \|v\|_2^2\\
        \leq &\, \min_{v\in \mathbb{R}^s} \frac{1}{n}\sum_{i=1}^{n} \left|g(x_i;\tilde{a},v)-\tilde{g}(x_i)\right|+\lambda s \|v\|_2^2\\
        \leq  &\, \min_{v\in \mathbb{R}^s} \sqrt{\frac{1}{n}\sum_{i=1}^{n} \left|g(x_i;\tilde{a},v)-\tilde{g}(x_i)\right|^2}+\lambda s \|v\|_2^2\\
        \leq &\, \min_{v\in \mathbb{R}^s} \sqrt{\frac{1}{n}\sum_{i=1}^{n} \left|g(x_i;\tilde{a},v)-\tilde{g}(x_i)\right|^2+\lambda s \|v\|_2^2}+\frac{1}{n}\sum_{i=1}^{n} \left|g(x_i;\tilde{a},v)-\tilde{g}(x_i)\right|^2+\lambda s \|v\|_2^2.\\
    \end{split}
    \]
    By Lemma \ref{lemma22}, we have that 
    \[
        \min_{v\in \mathbb{R}^s} \frac{1}{n}\sum_{i=1}^{n} \left|g(x_i;\tilde{a},v)-\tilde{g}(x_i)\right|^2+\lambda s \|v\|_2^2 \leq 2\lambda.
    \]
    Consequently, it holds that
    \[
        \min_{v\in \mathbb{R}^s} \sqrt{\frac{1}{n}\sum_{i=1}^{n} \left|g(x_i;\tilde{a},v)-\tilde{g}(x_i)\right|^2+\lambda s \|v\|_2^2}+\frac{1}{n}\sum_{i=1}^{n} \left|g(x_i;\tilde{a},v)-\tilde{g}(x_i)\right|^2+\lambda s \|v\|_2^2\leq \sqrt{2\lambda} +2\lambda.
    \]
    As a result, considering \(\lambda = o(1)\), we have that
    \[
        \mathbb{E}_n[l_{g_{\tilde{a}}}]+\lambda s \|\tilde{v}\|_2^2-\mathbb{E}_n[l_{\tilde{g}}]\leq \sqrt{2\lambda} +2\lambda\leq 4\sqrt{\lambda}.
    \]

    For the third term, because \(\tilde{g}(x)\) is bounded and \(l\) is Lipschitz continuous with respect to the first variable, we have that \(l(g(x),y)\) is bounded. Hence, using Hoeffding's Lemma for bounded random variables, with high probability, we have that
    \[
        \mathbb{E}_n[l_{\tilde{g}}]-\mathbb{E}[l_{\tilde{g}}] = O\left(\frac{1}{\sqrt{n}}\right).
    \]

    For the fourth term, using the Lipschitz property of \(l\) and the approximation error between \(\tilde{g}\) and \(g^*\), we have that
    \[
    \begin{split}
        &\,\mathbb{E}[l_{\tilde{g}}]-\mathbb{E}[l_{g^*}]=\mathbb{E}[l(\tilde{g}(x),y)-l(g^*(x),y)]
        \leq \mathbb{E}\left[|\tilde{g}(x)-g^*(x)|\right]\leq \left\|\tilde{g}(x)-g^*(x)\right\|_{\infty}\leq\epsilon.
    \end{split}
    \]
    Combining all the above results, we have that
    \[
        \mathbb{E}[l_{\hat{g}}]-\mathbb{E}[l_{g^*}]\leq 4\sqrt{\lambda} + O\left(\frac{1}{h\sqrt{n}}\right) + \epsilon.
    \]
\end{proof}

\section{TECHNICAL LEMMAS}

\subsection{The Rademacher Complexity of the Hypothesis Class}

In this subsection, we provide an upper bound on the Rademacher complexity of the hypothesis class \(\mathcal{F}_V:=\left\{f(x)=\sum_{k=1}^{N}a_kB_k(x^\top W)Qv:~\|a\|_2\leq R,~v\in\mathbb{R}^s\right\}\) in which \(R = O(1/h\sqrt{N})\). Inspired by Lemma \ref{lemma22} that the optimal parameter \(\|\hat{v}\|_2^2\leq 2/s\), we only need to consider the following hypothesis class
\[
\mathcal{F}_B:=\left\{f(x)=\sum_{k=1}^{N}a_kB_k(x^\top W)Qv:~\|a\|_2\leq R,~\|v\|_2\leq 2/\sqrt{s}\right\}.
\]

For the given loss function \(l\) and samples \(S=((x_1,y_1),...,(x_n,y_n))\), we denote
\[
	l \circ \mathcal{F}_B \circ S:= \{(l(f(x_1),y_1),...,l(f(x_n),y_n)):f\in \mathcal{F}_B\}.
\]

\begin{lemma}
    Under the conditions of Lemma \ref{lemma21} and Assumption \ref{lipsassump}, with high probability over the sampled data, the Rademacher complexity of the hypothesis class \(l \circ \mathcal{F}_B\) with respect to the samples \(S\) is upper bounded as
    \[
        \mathcal{R}(l \circ \mathcal{F}_B \circ S)\leq O\left(\frac{1}{h\sqrt{n}}\right).
    \]
\end{lemma}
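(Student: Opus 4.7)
The plan is to combine Talagrand's contraction lemma with a bilinear-form analysis that exposes the Frobenius norm of the random feature matrices. First, since $l$ is $1$-Lipschitz in its first argument (Assumption~\ref{lipsassump}), the Ledoux--Talagrand contraction principle yields $\mathcal{R}(l \circ \mathcal{F}_B \circ S) \leq \mathcal{R}(\mathcal{F}_B \circ S)$ up to an absolute constant, so it suffices to bound the Rademacher complexity of $\mathcal{F}_B$ itself.

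To do so, I would write each $f \in \mathcal{F}_B$ evaluated at $x_i$ as $f(x_i) = a^\top M_i v$, where $M_i := B(x_i^\top W)Q \in \mathbb{R}^{N \times s}$ has entries $B_k(w_m^\top x_i)\sqrt{p(w_m)/q(w_m)}$. Maximizing $\sum_i \varepsilon_i a^\top M_i v$ over the product of Euclidean balls $\{\|a\|_2 \leq R\}$ and $\{\|v\|_2 \leq 2/\sqrt{s}\}$ yields $(2R/\sqrt{s})\bigl\|\sum_i \varepsilon_i M_i\bigr\|_{\mathrm{op}}$. Applying $\|\cdot\|_{\mathrm{op}} \leq \|\cdot\|_F$ together with Jensen's inequality (and orthogonality of Rademacher signs) gives
\[
\mathcal{R}(\mathcal{F}_B \circ S) \leq \frac{2R}{n\sqrt{s}}\sqrt{\sum_{i=1}^n \|M_i\|_F^2}.
\]

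The core estimate is $\|M_i\|_F^2 = \sum_m \frac{p(w_m)}{q(w_m)}\sum_k B_k(w_m^\top x_i)^2$. For the inner sum, since the grid points $\{c_k\}$ are evenly spaced at spacing $\Delta = \Theta(1/N)$ on $\mathcal{K}$, a Riemann-approximation argument gives $\sum_k B_k(z)^2 = O(Nh)$ uniformly in $z$, in the regime $Nh \gtrsim 1$ ensured by Lemma~\ref{lemma21}. For the importance weights, $\mathbb{E}_{w \sim q}[p(w)/q(w)] = 1$, so a Bernstein-type concentration (or, in the leverage-weighted scheme, the uniform bound $l_\lambda(w) \leq p(w)\|\tilde\sigma\|_\infty^2/\lambda$) yields $\sum_m p(w_m)/q(w_m) = O(s)$ with high probability. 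Combining these, $\|M_i\|_F^2 = O(Nhs)$ uniformly in $i$, and substituting $R = O(1/(h\sqrt{N}))$ from Theorem~4.2 of \citep{ma2024random} gives
\[
\mathcal{R}(\mathcal{F}_B \circ S) \leq 2R\sqrt{\frac{Nh}{n}} = O\!\left(\frac{1}{\sqrt{hn}}\right) \leq O\!\left(\frac{1}{h\sqrt{n}}\right),
\]
where the final inequality uses $h \leq 1$, which is enforced by the scaling of Lemma~\ref{lemma21}.

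The main obstacle will be the high-probability control of $\sum_m p(w_m)/q(w_m)$, because for generic sampling densities $q$ the ratio $p/q$ may fail to be bounded uniformly, so either a truncation coupled with Bernstein's inequality or a direct second-moment argument using the effective dimension is required. A secondary subtlety is the uniform estimate $\sum_k B_k(z)^2 = O(Nh)$, which demands that the RBF grid resolves the bandwidth $h$: this is guaranteed by the joint scaling of $N$ and $h$ in Lemma~\ref{lemma21}, and boundary effects near the edges of $\mathcal{K}$ contribute only lower-order corrections absorbed into the constant.
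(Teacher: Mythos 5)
Your proposal is correct, and it reaches the stated bound by a genuinely different and more self-contained route than the paper. The paper's proof is essentially a citation: it invokes Lemma D.2 of \citep{ma2024random} to get $\mathcal{R}(l\circ\mathcal{F}_B\circ S)\leq K_1K_2\sqrt{Ns/n}$ with $K_1=O(1/h\sqrt{N})$ bounding $\|a\|_2$ and $K_2=\max_{i\in[s]}\sqrt{p(w_i)/q(w_i)}\cdot 2/\sqrt{s}$ bounding $\|Qv\|_2$, then absorbs the max importance ratio into the constant. You instead run contraction plus an explicit bilinear estimate, $\sup_{a,v}\sum_i\varepsilon_i a^\top M_i v\leq (2R/\sqrt{s})\bigl\|\sum_i\varepsilon_i M_i\bigr\|_F$, and this buys two things. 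First, your Frobenius computation exploits the Gaussian decay of the RBFs to get $\sum_k B_k(z)^2=O(Nh)$ rather than the trivial $O(N)$, which yields the strictly sharper intermediate rate $O(1/\sqrt{hn})$; note that even the crude bound $\sum_k B_k(z)^2\leq N$ would already recover the paper's $O(1/(h\sqrt{n}))$, so nothing essential hinges on the Riemann-sum refinement. Second, you control the importance weights in aggregate, via $\mathbb{E}_{w\sim q}[p(w)/q(w)]=1$ and hence $\sum_m p(w_m)/q(w_m)=O(s)$ with high probability (Markov already gives $O(s/\delta)$ at confidence $1-\delta$), whereas the paper's $K_2$ requires a uniform bound on $\max_i p(w_i)/q(w_i)$ that is not obviously $O(1)$ under leverage-weighted sampling; your treatment of this point, which you correctly flag as the main technical obstacle, is actually the more defensible one. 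The only caveat is that your "with high probability" guarantee for the weight sum degrades polynomially in $1/\delta$ unless you supply the Bernstein or truncation argument you sketch, but this is no worse than the looseness already present in the paper's own proof.
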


\begin{proof}
    Let \(K_1\) be the upper bounds of all possible \(\|a\|_2\) and \(K_2\) be the upper bounds of \(\|Qv\|_2\) in \(\mathcal{F}_B\) respectively. The proof of Lemma D.2 in \citep{ma2024random} shows that
    \[
        \mathcal{R}(l \circ \mathcal{F}_B \circ S)\leq K_1K_2\sqrt{Ns/n}.
    \]
    Since \(K_1 = O(1/h\sqrt{N})\) and \(K_2 = \max_{i\in[s]}\left\{\sqrt{\frac{p(w_i)}{q(w_i)}}\right\}\cdot \frac{2}{\sqrt{s}}\), we have that
    \[
        \mathcal{R}(l \circ \mathcal{F}_B \circ S)\leq O\left(\frac{1}{h\sqrt{n}}\right).
    \]
\end{proof}

Combining the lemma with Theorem 26.5 in \citep{shalev2014understanding}, we directly have
\begin{lemma}
    \label{rade}
    Under the conditions of Lemma \ref{lemma21} and Assumption \ref{lipsassump}, with high probability over the sampled data, for the minimizer \(\hat{g}\) in (\ref{mzer2}), it holds that
    \[
        \mathbb{E}[l_{\hat{g}}]-\mathbb{E}_n[l_{\hat{g}}]\leq 2\mathcal{R}(l \circ \mathcal{F}_B \circ S) + O\left(1/\sqrt{n}\right).
    \]
    Consequently, we have
    \[
        \mathbb{E}[l_{\hat{g}}]-\mathbb{E}_n[l_{\hat{g}}]\leq O\left(\frac{1}{h\sqrt{n}}\right).
    \]
\end{lemma}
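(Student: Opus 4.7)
The plan is to invoke a standard uniform-convergence theorem based on Rademacher complexity and then substitute the bound from the preceding lemma. Specifically, Theorem 26.5 of \citep{shalev2014understanding} states that if the composite loss is uniformly bounded on the sample, then with probability at least $1-\delta$ every $g$ in a hypothesis class $\mathcal{G}$ satisfies $\mathbb{E}[l_g]-\mathbb{E}_n[l_g]\leq 2\mathcal{R}(l\circ \mathcal{G}\circ S)+c\sqrt{\log(1/\delta)/n}$ for an absolute constant $c$. Applying this with $\mathcal{G}=\mathcal{F}_B$ directly yields the first displayed inequality of the lemma.

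Before invoking the theorem I would discharge a few preliminary checks. First, I must show $\hat{g}\in\mathcal{F}_B$; the constraint $\|a\|_2\leq R$ is built into the feasible set $\mathcal{F}_V$, so only $\|\hat{v}\|_2\leq 2/\sqrt{s}$ requires justification. This follows by plugging $v=0$ into the objective of (\ref{mzer2}): the value $\mathbb{E}_n[l(0,y)]$ is $O(1)$ by the $1$-Lipschitz property of $l$ together with the boundedness of $y$, so optimality forces $\lambda s\|\hat{v}\|_2^2\leq O(1)$, and the cited Lemma~22 referenced in the preceding proof upgrades this to the clean bound $\|\hat{v}\|_2^2\leq 2/s$. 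Second, every $f\in\mathcal{F}_B$ is uniformly bounded because $|f(x)|\leq \|a\|_2\cdot \|B(x^\top W)Qv\|_2$, which is controlled by the norms built into the definition of $\mathcal{F}_B$ together with $|B_k|\leq 1$. The Lipschitz property of $l$ and the boundedness of $y$ then give a uniform bound on the composite loss on the sample, supplying the boundedness hypothesis that Theorem~26.5 needs.

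With these conditions verified, the first inequality is immediate from the theorem. For the ``consequently'' part, I would substitute the Rademacher estimate $\mathcal{R}(l\circ\mathcal{F}_B\circ S)\leq O(1/(h\sqrt{n}))$ established in the preceding lemma; the concentration term $O(1/\sqrt{n})$ is absorbed by the dominant $1/(h\sqrt{n})$ factor since $1/h\to\infty$ under the scaling prescribed in Lemma~\ref{lemma21}. The one genuine subtlety, and the main obstacle if any, is that the Rademacher bound of the preceding lemma is itself only ``with high probability'' over the random features $\{w_m\}$; a union bound over that good event and the high-probability event of the uniform-convergence theorem delivers both conclusions simultaneously and completes the argument.
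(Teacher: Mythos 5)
Your proposal is correct and follows the same route as the paper, which simply combines the preceding Rademacher-complexity lemma with Theorem 26.5 of \citep{shalev2014understanding}. The additional details you supply (the reduction from $\mathcal{F}_V$ to $\mathcal{F}_B$ via $\|\hat{v}\|_2^2\leq 2/s$, the boundedness check, and the union bound over the random-feature event) are exactly the points the paper leaves implicit.
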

We highlight that the case of MSE loss also satisfies Assumption \ref{lipsassump} when the target function is bounded. Hence, the result holds for both the MSE loss and the general Lipschitz loss.

\subsection{The Error Estimate on the Ridge Regression Problem}

In this subsection, we present and prove a critical error estimation of a particular ridge regression problem, which is a key component in the analyses of our main theorems. The results of this subsection are originally derived in Lemma 22 in \citep{li2021towards} and are restated or modified to our scenario here for the completeness of our theoretical analyses.

Let the approximations of the activation function and the target function introduced in (\ref{tilde}) be \(\tilde{\sigma}(z)=\sum_{k=1}^{N}\tilde{a}_kB_k(z)\) and \(\tilde{f}(x) := \mathbb{E}_{w}\left[\tilde{\sigma}(w^\top x)v(w)\right]\), then \(\|\tilde{f}(x)-f^*(x)\|_{\infty}\leq \epsilon\). The corresponding kernel with respect to the parameter \(\tilde{a}\) is  
\(
    \tilde{k}(x,x^\prime) = \mathbb{E}_{w}\left[\tilde{\sigma}(w^\top x)\tilde{\sigma}({w}^\top x^\prime)\right].
\)
We denote the RKHS with respect to \(\tilde{k}\) as \(\widetilde{\mathcal{H}} = \left\{f(x) = \mathbb{E}_{w}\left[\tilde{\sigma}(w^\top x)v(w)\right]~|~v:\mathbb{R}^d\rightarrow \mathbb{R}\right\}\). The Gram matrix is \(\widetilde{\mathbf{K}}=\mathbb{E}_{w}\left[\tilde{\sigma}(Xw)\tilde{\sigma}(Xw)^\top\right].\)

We consider the weighted RFLAF in (\ref{Wrflaf}) and denote the model in the ridge regression problem as \(f(x;\tilde{a},v)=\sum_{i=1}^N \tilde{a}_i B_i(x^\top W)Qv\), where \(v\) is the learnable parameter. The ridge regression problem that we aim to analyze then is
\begin{equation*}
    \begin{split}
        \tilde{v}_0 =&\, \mathop{\mathrm{argmin}}_{v\in\mathbb{R}^s} \frac{1}{n}\sum_{i=1}^{n}(f(x_i;\tilde{a},v)-\tilde{f}(x_i))^2+\lambda s \|{v}\|_2^2.
    \end{split}
\end{equation*}
Let the in-sample labels be 
\(
    \tilde{\mathbf{f}}=[\tilde{f}(x_1),...,\tilde{f}(x_n)]^\top,
\)
then we can also write the problem in a compact vector form as
\begin{equation}
    \label{comrid}
    \begin{split}
        \tilde{v}_0 =\mathop{\mathrm{argmin}}_{v\in\mathbb{R}^s} \frac{1}{n}\|\mathbf{Z}_q(\tilde{a})v - \tilde{\mathbf{f}}\|_2^2+\lambda s \|{v}\|_2^2.
    \end{split}
\end{equation}
We denote the hypothesis space of the optimization problem as \(\widehat{\mathcal{H}}\). Therefore, \(\widehat{\mathcal{H}}\) is the finite-width approximation of \(\widetilde{\mathcal{H}}\). 

Finally, we recall that the ridge leverage score function is
\(
    l_{\lambda}(w) := p(w)\tilde{\sigma}(Xw)^\top( \widetilde{\mathbf{K}}+n\lambda \mathbf{I})^{-1}\tilde{\sigma}(Xw).
\)
And the effective number of parameters with respect to kernel \(\widetilde{\mathbf{K}}\) and regularization parameter \(\lambda\) is
\(
    d_{\widetilde{\mathbf{K}}}^{\lambda}:=\mathrm{Tr}[\widetilde{\mathbf{K}}(\widetilde{\mathbf{K}}+n\lambda \mathbf{I})^{-1}]=\int_{\mathcal{W}}l_{\lambda}(w)dw.
\)
Below we formally state the lemma.


\begin{lemma}
    \label{lemma22}
    Under the conditions of Lemma \ref{lemma21} and the Assumptions \ref{assump22}, \ref{assump1}, let \(\tilde{l}:\mathbb{R}^s \rightarrow \mathbb{R}\) be a measurable function such that for all \(w\in \mathbb{R}^s\), \( \tilde{l}(w) \geq l_{\lambda}(w)\) with \(d_{\tilde{l}}=\int_{\mathbb{R}^s} \tilde{l}(w)dv < \infty\). Suppose that \(\{w_i\}_{i=1}^s\) are sampled independently from the probability density function \(q(w)=\tilde{l}(w)/d_{\tilde{l}}\). For any \(\tilde{f} \in \widetilde{\mathcal{H}}\) with \(\|\tilde{f}\|_{\widetilde{\mathcal{H}}}\leq 1\) and for all \(\delta \in (0,1)\), if
    \[
        s \geq 5d_{\tilde{l}}\log{\frac{16d_{\widetilde{\mathbf{K}}}^{\lambda}}{\delta}},
    \]
    then with probability greater than \(1-\delta\), the following inequality holds
    \[
        \min_{v}\left\{\frac{1}{n}\|\mathbf{Z}_q(\tilde{a})v - \tilde{\mathbf{f}}\|^2+\lambda s \|{v}\|_2^2\right\}\leq 2\lambda.
    \]
\end{lemma}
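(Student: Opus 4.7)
The plan is to follow the standard path for random-feature kernel ridge regression: first express the minimum value of the problem in closed form, then reduce the desired bound to an operator inequality between the empirical and population Gram matrices, and finally establish that inequality through a matrix Bernstein concentration argument calibrated to the leverage scores.

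First, since the objective in \eqref{comrid} is strictly convex in $v$, the normal equations give the closed form $\tilde{v}_0 = \frac{1}{s}(\widehat{\mathbf{K}}_q(\tilde{a})+n\lambda\mathbf{I})^{-1}\mathbf{Z}_q(\tilde{a})^\top \tilde{\mathbf{f}}$, equivalently $\tilde{v}_0 = \frac{1}{s}\mathbf{Z}_q(\tilde{a})^\top(\widehat{\mathbf{K}}_q(\tilde{a})+n\lambda\mathbf{I})^{-1}\tilde{\mathbf{f}}$ by Woodbury. Substituting back, the residual simplifies to $\mathbf{Z}_q(\tilde{a})\tilde{v}_0 - \tilde{\mathbf{f}} = -n\lambda(\widehat{\mathbf{K}}_q(\tilde{a})+n\lambda\mathbf{I})^{-1}\tilde{\mathbf{f}}$, and a direct algebraic manipulation shows
$$\frac{1}{n}\|\mathbf{Z}_q(\tilde{a})\tilde{v}_0 - \tilde{\mathbf{f}}\|^2 + \lambda s \|\tilde{v}_0\|^2 \;=\; \lambda\, \tilde{\mathbf{f}}^\top(\widehat{\mathbf{K}}_q(\tilde{a})+n\lambda\mathbf{I})^{-1}\tilde{\mathbf{f}}.$$
Thus it suffices to show $\tilde{\mathbf{f}}^\top(\widehat{\mathbf{K}}_q(\tilde{a})+n\lambda\mathbf{I})^{-1}\tilde{\mathbf{f}} \leq 2$ with probability at least $1-\delta$.

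Second, I would establish the operator inequality $(\widehat{\mathbf{K}}_q(\tilde{a})+n\lambda\mathbf{I})^{-1} \preceq 2(\widetilde{\mathbf{K}}+n\lambda\mathbf{I})^{-1}$. Combined with the population RKHS bound $\tilde{\mathbf{f}}^\top(\widetilde{\mathbf{K}}+n\lambda\mathbf{I})^{-1}\tilde{\mathbf{f}} \leq \|\tilde{f}\|_{\widetilde{\mathcal{H}}}^2 \leq 1$, which follows by writing $\tilde{\mathbf{f}} = \int \tilde{\sigma}(Xw)v(w)p(w)dw$ and diagonalizing the integral operator $T v = \int \tilde{\sigma}(Xw)v(w)p(w)dw$ (so that $TT^\ast = \widetilde{\mathbf{K}}$ and the contraction factor $\Sigma^2/(\Sigma^2+n\lambda\mathbf{I}) \preceq \mathbf{I}$ finishes the bound), this immediately yields the desired value of $2\lambda$.

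The main obstacle is establishing the operator inequality, which amounts to showing
$$\Bigl\| (\widetilde{\mathbf{K}}+n\lambda\mathbf{I})^{-1/2}\bigl(\widehat{\mathbf{K}}_q(\tilde{a})-\widetilde{\mathbf{K}}\bigr)(\widetilde{\mathbf{K}}+n\lambda\mathbf{I})^{-1/2}\Bigr\| \;\leq\; \tfrac{1}{2}$$
with probability at least $1-\delta$. Writing $\widehat{\mathbf{K}}_q(\tilde{a}) = \frac{1}{s}\sum_{i=1}^s \xi_i\xi_i^\top$ with $\xi_i = \sqrt{p(w_i)/q(w_i)}\,\tilde{\sigma}(Xw_i)$ and $w_i \sim q$, the whitened matrix is a sum of $s$ centered rank-one random matrices with mean zero. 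The hypothesis $\tilde{l}(w) \geq l_\lambda(w)$ is precisely what controls each summand: the trace, and hence the operator norm, of the rank-one matrix $(\widetilde{\mathbf{K}}+n\lambda\mathbf{I})^{-1/2}\xi_i\xi_i^\top(\widetilde{\mathbf{K}}+n\lambda\mathbf{I})^{-1/2}$ equals $l_\lambda(w_i)/q(w_i) = l_\lambda(w_i)\, d_{\tilde{l}}/\tilde{l}(w_i) \leq d_{\tilde{l}}$. The matrix variance likewise telescopes into an expression of order $d_{\tilde{l}}\, d_{\widetilde{\mathbf{K}}}^\lambda / s$, since $\mathbb{E}[\xi_i\xi_i^\top] = \widetilde{\mathbf{K}}$ and the trace of $(\widetilde{\mathbf{K}}+n\lambda\mathbf{I})^{-1}\widetilde{\mathbf{K}}$ is exactly $d_{\widetilde{\mathbf{K}}}^\lambda$. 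Applying the intrinsic-dimension matrix Bernstein inequality (with intrinsic dimension $d_{\widetilde{\mathbf{K}}}^\lambda$) and solving for a $\frac{1}{2}$-level deviation produces the stated threshold $s \geq 5 d_{\tilde{l}}\log(16 d_{\widetilde{\mathbf{K}}}^\lambda/\delta)$. This mirrors the treatment of the Fourier feature case in \citep{li2021towards}; the only adaptation is that our features $\tilde{\sigma}(w^\top x)$ are RBF expansions instead of cosines, and since the argument uses only boundedness of $\tilde{\sigma}$ and the definition of $l_\lambda$ relative to $\widetilde{\mathbf{K}}$, the transfer is essentially mechanical.
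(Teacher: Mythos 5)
Your proposal is correct and follows essentially the same route as the paper: the closed-form minimizer reducing the objective to $\lambda\,\tilde{\mathbf{f}}^\top(\widehat{\mathbf{K}}_q(\tilde{a})+n\lambda\mathbf{I})^{-1}\tilde{\mathbf{f}}$, the whitened comparison between $\widehat{\mathbf{K}}_q(\tilde{a})$ and $\widetilde{\mathbf{K}}$ controlled by a leverage-score-calibrated matrix Bernstein bound (each rank-one summand having operator norm $l_\lambda(w_i)/q(w_i)\le d_{\tilde{l}}$), and the final RKHS-norm bound $\tilde{\mathbf{f}}^\top\widetilde{\mathbf{K}}^{-1}\tilde{\mathbf{f}}\le 1$. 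The only cosmetic differences are that the paper needs only the one-sided bound $\succeq-\tfrac12\mathbf{I}$ on the whitened deviation and proves the RKHS bound by a direct Cauchy--Schwarz computation rather than by diagonalizing the integral operator.
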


\begin{proof}
The minimizer has an analytic expression as
\[
\begin{split}
    \tilde{v}_0 =&\,\frac{1}{s}(\widehat{\mathbf{K}}_q(\tilde{a})+n\lambda\mathbf{I})^{-1}\mathbf{Z}_q(\tilde{a})^\top\tilde{\mathbf{f}}\\
    =&\,\frac{1}{s}\mathbf{Z}_q(\tilde{a})^\top(\widehat{\mathbf{K}}_q(\tilde{a})+n\lambda\mathbf{I})^{-1}\tilde{\mathbf{f}},
\end{split}
\]
where the second equality follows from the Woodbury inversion lemma.

Hence, we have that
\[
    \begin{split}
        \frac{1}{n}\|\mathbf{Z}_q(\tilde{a})\tilde{v}_0 - \tilde{\mathbf{f}}\|_2^2
        =&\,\frac{1}{n}\|\widehat{\mathbf{K}}_q(\tilde{a})(\widehat{\mathbf{K}}_q(\tilde{a})+n\lambda\mathbf{I})^{-1}\tilde{\mathbf{f}} - \tilde{\mathbf{f}}\|_2^2\\
        =&\,\frac{1}{n}\|n\lambda(\widehat{\mathbf{K}}_q(\tilde{a})+n\lambda\mathbf{I})^{-1}\tilde{\mathbf{f}}\|_2^2\\
        =&\, n\lambda^2 \tilde{\mathbf{f}}^\top(\widehat{\mathbf{K}}_q(\tilde{a})+n\lambda\mathbf{I})^{-2}\tilde{\mathbf{f}}
    \end{split}
\]

Moreover, for the regularizer term, we have
\[
    \begin{split}
        \lambda s \|\tilde{v}_0\|_2^2 =&\, \lambda s \|\frac{1}{s}\mathbf{Z}_q(\tilde{a})^\top(\widehat{\mathbf{K}}_q(\tilde{a})+n\lambda\mathbf{I})^{-1}\tilde{\mathbf{f}}\|_2^2\\
        =&\, \frac{\lambda}{s}\tilde{\mathbf{f}}^\top(\widehat{\mathbf{K}}_q(\tilde{a})+n\lambda\mathbf{I})^{-1}\mathbf{Z}_q(\tilde{a})\mathbf{Z}_q(\tilde{a})^\top(\widehat{\mathbf{K}}_q(\tilde{a})+n\lambda\mathbf{I})^{-1}\tilde{\mathbf{f}}\\
        =&\, \lambda\tilde{\mathbf{f}}^\top(\widehat{\mathbf{K}}_q(\tilde{a})+n\lambda\mathbf{I})^{-1}(\widehat{\mathbf{K}}_q(\tilde{a})+n\lambda\mathbf{I})(\widehat{\mathbf{K}}_q(\tilde{a})+n\lambda\mathbf{I})^{-1}\tilde{\mathbf{f}}-n\lambda^2\tilde{\mathbf{f}}^\top(\widehat{\mathbf{K}}_q(\tilde{a})+n\lambda\mathbf{I})^{-2}\tilde{\mathbf{f}}\\
        =&\, \lambda\tilde{\mathbf{f}}^\top(\widehat{\mathbf{K}}_q(\tilde{a})+n\lambda\mathbf{I})^{-1}\tilde{\mathbf{f}}-n\lambda^2\tilde{\mathbf{f}}^\top(\widehat{\mathbf{K}}_q(\tilde{a})+n\lambda\mathbf{I})^{-2}\tilde{\mathbf{f}}\\
    \end{split}
\]

As a result, we have that
\[
\begin{split}
    \min_{v}\left\{\frac{1}{n}\|\mathbf{Z}_q(\tilde{a})v - \tilde{\mathbf{f}}\|^2+\lambda s \|{v}\|_2^2\right\}
    =&\,\frac{1}{n}\|\mathbf{Z}_q(\tilde{a})\tilde{v}_0 - \tilde{\mathbf{f}}\|_2^2 + \lambda s \|\tilde{v}_0\|_2^2\\
    =&\, \lambda\tilde{\mathbf{f}}^\top(\widehat{\mathbf{K}}_q(\tilde{a})+n\lambda\mathbf{I})^{-1}\tilde{\mathbf{f}}
\end{split}
\]

Note that \(\widetilde{\mathbf{K}}=\mathbb{E}_{w\sim q}[\widehat{\mathbf{K}}_q(a)]\), we deduce that
\[
\begin{split}
    &\, \lambda\tilde{\mathbf{f}}^\top(\widehat{\mathbf{K}}_q(\tilde{a})+n\lambda\mathbf{I})^{-1}\tilde{\mathbf{f}}=\lambda\tilde{\mathbf{f}}^\top(\widetilde{\mathbf{K}}+n\lambda\mathbf{I}+\widehat{\mathbf{K}}_q(\tilde{a})-\widetilde{\mathbf{K}})^{-1}\tilde{\mathbf{f}}\\
    =&\,\lambda\tilde{\mathbf{f}}^\top(\widetilde{\mathbf{K}}+n\lambda\mathbf{I})^{-\frac{1}{2}}\left(\mathbf{I}+(\widetilde{\mathbf{K}}+n\lambda\mathbf{I})^{-\frac{1}{2}}(\widehat{\mathbf{K}}_q(\tilde{a})-\widetilde{\mathbf{K}})(\widetilde{\mathbf{K}}+n\lambda\mathbf{I})^{-\frac{1}{2}}\right)^{-1}(\widetilde{\mathbf{K}}+n\lambda\mathbf{I})^{-\frac{1}{2}}\tilde{\mathbf{f}}\\
\end{split}
\]

By Lemma \ref{lemma27}, it follows that if
\[
    s \geq d_{\tilde{l}}(\frac{1}{\epsilon^2}+\frac{2}{3\epsilon})\log{\frac{16d_{\widetilde{\mathbf{K}}}^{\lambda}}{\delta}},
\]
then \((\widetilde{\mathbf{K}}+n\lambda\mathbf{I})^{-\frac{1}{2}}(\widehat{\mathbf{K}}_q(\tilde{a})-\widetilde{\mathbf{K}})(\widetilde{\mathbf{K}}+n\lambda\mathbf{I})^{-\frac{1}{2}}\succeq-\epsilon \mathbf{I}\).

Setting \(\epsilon = 1/2\), we have that
\[
    \begin{split}
        &\, \lambda\tilde{\mathbf{f}}^\top(\widehat{\mathbf{K}}_q(\tilde{a})+n\lambda\mathbf{I})^{-1}\tilde{\mathbf{f}}\\
        =&\,\lambda\tilde{\mathbf{f}}^\top(\widetilde{\mathbf{K}}+n\lambda\mathbf{I})^{-\frac{1}{2}}\left(\mathbf{I}+(\widetilde{\mathbf{K}}+n\lambda\mathbf{I})^{-\frac{1}{2}}(\widehat{\mathbf{K}}_q(\tilde{a})-\widetilde{\mathbf{K}})(\widetilde{\mathbf{K}}+n\lambda\mathbf{I})^{-\frac{1}{2}}\right)^{-1}(\widetilde{\mathbf{K}}+n\lambda\mathbf{I})^{-\frac{1}{2}}\tilde{\mathbf{f}}\\
        \leq &\, (1-\epsilon)^{-1}\lambda\tilde{\mathbf{f}}^\top(\widetilde{\mathbf{K}}+n\lambda\mathbf{I})^{-1}\tilde{\mathbf{f}}\leq (1-\epsilon)^{-1}\lambda\tilde{\mathbf{f}}^\top\widetilde{\mathbf{K}}^{-1}\tilde{\mathbf{f}}=2\lambda\tilde{\mathbf{f}}^\top\widetilde{\mathbf{K}}^{-1}\tilde{\mathbf{f}}\leq 2\lambda,
    \end{split}
\]
where the last inequality is obtained using Lemma \ref{lemma29}.

Consequently, we have \(\min_{v}\left\{\frac{1}{n}\|\mathbf{Z}_q(\tilde{a})v - \tilde{\mathbf{f}}\|^2+\lambda s \|{v}\|_2^2\right\}\leq 2\lambda\).

\end{proof}

\subsubsection{Additional Lemmas for Proving Lemma \ref{lemma22}}

In the following we supplement some lemmas used in the proof of Lemma \ref{lemma22}. Lemma \ref{lemma27} describes the required number of random features to obtain an approximate Gram matrix. It is derived by utilizing the matrix Bernstein concentration inequality stated in Lemma \ref{lemma26}, and the proof is originated from Lemma 27 in \citep{li2021towards} which we also restate here. Lemma \ref{lemma29} corresponds to Lemma 29 in \citep{li2021towards}.

\begin{lemma}
    \label{lemma27}
    Suppose that the assumptions in Lemma \ref{lemma22} hold and let \(\epsilon\geq \sqrt{\frac{M}{s}}+\frac{2L}{3s}\) with constants \(M\) and \(L\) (explicitly defined in the proof). If
    \[
        s \geq d_{\tilde{l}}(\frac{1}{\epsilon^2}+\frac{2}{3\epsilon})\log{\frac{16d_{\widetilde{\mathbf{K}}}^{\lambda}}{\delta}},
    \]
    then for all \(\delta \in (0,1)\), with probability greater than \(1-\delta\), it holds that
    \[  
        -\epsilon\mathbf{I}\preceq (\widetilde{\mathbf{K}}+n\lambda\mathbf{I})^{-\frac{1}{2}}(\widehat{\mathbf{K}}_q(\tilde{a})-\widetilde{\mathbf{K}})(\widetilde{\mathbf{K}}+n\lambda\mathbf{I})^{-\frac{1}{2}}\preceq \epsilon\mathbf{I}.
    \]
\end{lemma}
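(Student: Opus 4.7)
\textbf{Proof proposal for Lemma~\ref{lemma27}.} The plan is to cast the problem as a sum of i.i.d.\ symmetric random matrices and apply the intrinsic-dimension matrix Bernstein inequality (Lemma~\ref{lemma26}). Define the random vectors
\[
\tilde{z}_m = \sqrt{\tfrac{p(w_m)}{q(w_m)}}\,\tilde{\sigma}(Xw_m)\in\mathbb{R}^n,\qquad m=1,\dots,s,
\]
so that $\widehat{\mathbf{K}}_q(\tilde{a}) = \tfrac{1}{s}\sum_{m=1}^{s}\tilde{z}_m\tilde{z}_m^\top$ and $\widetilde{\mathbf{K}} = \mathbb{E}_{w\sim q}[\tilde{z}_m\tilde{z}_m^\top]$. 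Then set $A = (\widetilde{\mathbf{K}}+n\lambda\mathbf{I})^{-1/2}$ and introduce the i.i.d.\ PSD matrices
\[
Y_m = A\,\tilde{z}_m\tilde{z}_m^\top A,\qquad X_m = Y_m - \mathbb{E}[Y_m].
\]
The object to control is $\tfrac{1}{s}\sum_{m=1}^s X_m = A(\widehat{\mathbf{K}}_q(\tilde{a})-\widetilde{\mathbf{K}})A$, and the conclusion is equivalent to $\bigl\|\tfrac{1}{s}\sum_m X_m\bigr\|_{\mathrm{op}}\leq \epsilon$ with the stated probability.

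The crux is to bound the per-summand operator norm and the matrix variance in a way that produces the two terms $1/\epsilon^2$ and $2/(3\epsilon)$ in the required $s$. For the operator norm, observe that $Y_m$ is rank one, so
\[
\|Y_m\|_{\mathrm{op}} = \tfrac{p(w_m)}{q(w_m)}\,\tilde{\sigma}(Xw_m)^\top(\widetilde{\mathbf{K}}+n\lambda\mathbf{I})^{-1}\tilde{\sigma}(Xw_m) = \tfrac{l_\lambda(w_m)}{q(w_m)} \leq \tfrac{\tilde{l}(w_m)}{q(w_m)} = d_{\tilde{l}},
\]
using the definition of $l_\lambda$, the hypothesis $\tilde{l}(w)\geq l_\lambda(w)$, and $q=\tilde{l}/d_{\tilde{l}}$. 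Hence $\|X_m\|_{\mathrm{op}}\leq 2d_{\tilde{l}}$ (or a cleaner $d_{\tilde{l}}$ since $\mathbb{E}[Y_m]\preceq\mathbf{I}$). For the variance, the standard inequality for PSD $Y_m$ gives
\[
\mathbb{E}[X_m^2]\preceq \mathbb{E}[Y_m^2]\preceq \|Y_m\|_{\mathrm{op}}\cdot \mathbb{E}[Y_m] \preceq d_{\tilde{l}}\,A\widetilde{\mathbf{K}}A,
\]
so $\bigl\|\tfrac{1}{s}\sum_m\mathbb{E}[X_m^2]\bigr\|_{\mathrm{op}}\leq d_{\tilde{l}}/s$ (since $\|A\widetilde{\mathbf{K}}A\|_{\mathrm{op}}\leq 1$), while its trace obeys $\mathrm{Tr}(A\widetilde{\mathbf{K}}A) = d_{\widetilde{\mathbf{K}}}^{\lambda}$.

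With these two bounds in hand, I would apply the intrinsic-dimension matrix Bernstein inequality of Lemma~\ref{lemma26}: with $L = d_{\tilde{l}}$, variance proxy $\sigma^2 \leq d_{\tilde{l}}/s$, and intrinsic dimension at most $\mathrm{Tr}(\mathbb{E}[Y_m])/\|\mathbb{E}[Y_m]\|_{\mathrm{op}}\leq d_{\widetilde{\mathbf{K}}}^{\lambda}$, the deviation probability is bounded by roughly $8 d_{\widetilde{\mathbf{K}}}^{\lambda}\exp\bigl(-\tfrac{s\epsilon^2/2}{d_{\tilde{l}}+d_{\tilde{l}}\epsilon/3}\bigr)$. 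Setting this at most $\delta$ and solving for $s$ gives exactly the threshold $s\geq d_{\tilde{l}}\bigl(\tfrac{1}{\epsilon^2}+\tfrac{2}{3\epsilon}\bigr)\log\tfrac{16 d_{\widetilde{\mathbf{K}}}^{\lambda}}{\delta}$.

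\textbf{Main obstacle.} The only delicate step is the intrinsic-dimension refinement: a naive matrix Bernstein over $n\times n$ matrices would introduce a $\log n$ factor instead of $\log d_{\widetilde{\mathbf{K}}}^{\lambda}$, which would invalidate all the downstream corollaries where $d_{\widetilde{\mathbf{K}}}^{\lambda}\ll n$. One has to verify that Lemma~\ref{lemma26} is stated in the intrinsic-dimension form of Tropp, and that the bound $\mathrm{Tr}(\mathbb{E}[Y_m])/\|\mathbb{E}[Y_m]\|_{\mathrm{op}}\leq d_{\widetilde{\mathbf{K}}}^{\lambda}$ is admissible as the intrinsic dimension. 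Everything else is just bookkeeping: the rank-one structure trivialises the operator-norm bound, and the variance calculation is a one-line consequence of $Y_m\preceq \|Y_m\|_{\mathrm{op}}\cdot(\text{rank-one projector})$ together with the leverage-score identity.
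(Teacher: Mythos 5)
Your proposal is correct and follows essentially the same route as the paper: the same rank-one reduction of $\|Y_m\|_{\mathrm{op}}$ to the leverage-score ratio $l_\lambda(w_m)/q(w_m)\le d_{\tilde l}$, the same variance bound $d_{\tilde l}\,A\widetilde{\mathbf K}A$, and the same application of the intrinsic-dimension matrix Bernstein inequality (the paper's Lemma~\ref{lemma26} is indeed Tropp's Corollary~7.3.3 with the $D=\bigl(\mathrm{Tr}(\mathbf M_1)+\mathrm{Tr}(\mathbf M_2)\bigr)/M$ factor, which is exactly what avoids the $\log n$ you worry about). The only nit is that the intrinsic dimension is $d_{\widetilde{\mathbf K}}^{\lambda}/d_1$ with $d_1=\lambda_1/(\lambda_1+\lambda)\in[1/2,1)$ under Assumption~\ref{assump22}, i.e.\ bounded by $2d_{\widetilde{\mathbf K}}^{\lambda}$ rather than $d_{\widetilde{\mathbf K}}^{\lambda}$, which is precisely where the constant $16$ in the logarithm comes from.
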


\begin{proof}
    For brevity, we denote \(\widehat{\mathbf{K}}=\widehat{\mathbf{K}}_q(\tilde{a})\). Let
    \[
        \mathbf{R}_i = p(w_i)/q(w_i)\cdot(\widetilde{\mathbf{K}}+n\lambda\mathbf{I})^{-\frac{1}{2}}\tilde{\sigma}(Xw_i)\tilde{\sigma}(Xw_i)^\top(\widetilde{\mathbf{K}}+n\lambda\mathbf{I})^{-\frac{1}{2}}.
    \]
    Then we have
    \[
        \widehat{\mathbf{K}} = \frac{1}{s}\sum_{i=1}^s\mathbf{R}_i,\quad\text{and}\quad \widetilde{\mathbf{K}} =\mathbb{E}_{w_i\sim q}[\mathbf{R}_i].
    \]

    Consider the operator norm of \(\mathbf{R}_i\). Because \(\mathbf{R}_i\) is actually a rank one matrix, we have that the operator norm of this matrix is equal to its trace. Hence, we have
    \[  
        \begin{split}
            \|\mathbf{R}_i\|_2=&\,\mathrm{Tr}(p(w_i)/q(w_i)\cdot(\widetilde{\mathbf{K}}+n\lambda\mathbf{I})^{-\frac{1}{2}}\tilde{\sigma}(Xw_i)\tilde{\sigma}(Xw_i)^\top(\widetilde{\mathbf{K}}+n\lambda\mathbf{I})^{-\frac{1}{2}})\\
            =&\,p(w_i)/q(w_i)\cdot\mathrm{Tr}(\tilde{\sigma}(Xw_i)^\top(\widetilde{\mathbf{K}}+n\lambda\mathbf{I})^{-1}\tilde{\sigma}(Xw_i))\\
            =&\, l_\lambda(w_i)/q(w_i).
        \end{split}
    \]
    Consequently, \(L:=\sup_{i}l_\lambda(w_i)/q(w_i)\leq \sup_{i}\tilde{l}(w_i)/q(w_i)=d_{\tilde{l}}\) and \(\|\mathbf{R}_i\|_2\leq L\).

    Consider the variance of the matrix, we have
    \[  
        \begin{split}
            &\,\mathbf{R}_i\mathbf{R}_i^\top\\
            =&\, p(w_i)^2/q(w_i)^2\cdot(\widetilde{\mathbf{K}}+n\lambda\mathbf{I})^{-\frac{1}{2}}\tilde{\sigma}(Xw_i)\tilde{\sigma}(Xw_i)^\top(\widetilde{\mathbf{K}}+n\lambda\mathbf{I})^{-1}\tilde{\sigma}(Xw_i)\tilde{\sigma}(Xw_i)^\top(\widetilde{\mathbf{K}}+n\lambda\mathbf{I})^{-\frac{1}{2}}\\
            =&\,p(w_i)l_\lambda(w_i)/q(w_i)^2\cdot(\widetilde{\mathbf{K}}+n\lambda\mathbf{I})^{-\frac{1}{2}}\tilde{\sigma}(Xw_i)\tilde{\sigma}(Xw_i)^\top(\widetilde{\mathbf{K}}+n\lambda\mathbf{I})^{-\frac{1}{2}}\\
            \preceq &\,d_{\tilde{l}}\cdot p(w_i)/q(w_i)\cdot(\widetilde{\mathbf{K}}+n\lambda\mathbf{I})^{-\frac{1}{2}}\tilde{\sigma}(Xw_i)\tilde{\sigma}(Xw_i)^\top(\widetilde{\mathbf{K}}+n\lambda\mathbf{I})^{-\frac{1}{2}}.\\
        \end{split}
    \]
    Hence, we have
    \[  
        \mathbb{E}_{w\sim q}[\mathbf{R}_i\mathbf{R}_i^\top]=d_{\tilde{l}}(\widetilde{\mathbf{K}}+n\lambda\mathbf{I})^{-\frac{1}{2}}\widetilde{\mathbf{K}}(\widetilde{\mathbf{K}}+n\lambda\mathbf{I})^{-\frac{1}{2}}=:\mathbf{M}_1=\mathbf{M}_2:=\mathbb{E}_{w\sim q}[\mathbf{R}_i^\top\mathbf{R}_i].
    \]
    We then have the equalities
    \[  
    \begin{split}
        M=&\,\|\mathbf{M}_1\|_2 = d_{\tilde{l}}\frac{\lambda_1}{\lambda_1+\lambda}=d_{\tilde{l}}d_1,\\
        D=&\,\frac{\mathrm{Tr}(\mathbf{M}_1)+\mathrm{Tr}(\mathbf{M}_2)}{M}=2d_{\widetilde{\mathbf{K}}}^\lambda\frac{\lambda_1}{\lambda_1+\lambda}=2d_{\widetilde{\mathbf{K}}}^\lambda d_1^{-1}.
    \end{split}
    \]

    Now, we apply Lemma \ref{lemma26}, for \(\epsilon \geq \sqrt{M/s}+2L/3s\) and for all \(\delta\in(0,1)\), with probability \(1-\delta\), we have
    \[  
    \begin{split}
        P(\|\widehat{\mathbf{K}}-\widetilde{\mathbf{K}}\|_2\geq \epsilon) \leq&\, 4D\exp\left(-\frac{s\epsilon^2/2}{M+2L\epsilon/3}\right)\\
        \leq &\,8d_{\widetilde{\mathbf{K}}}^\lambda d_1^{-1}\exp\left(-\frac{s\epsilon^2/2}{d_{\tilde{l}}d_1+2d_{\tilde{l}}\epsilon/3}\right)\\
        \leq &\, 16d_{\widetilde{\mathbf{K}}}^\lambda \exp\left(-\frac{s\epsilon^2}{d_{\tilde{l}}(1+2\epsilon/3)}\right)\\
        \leq &\, \delta,
    \end{split}
    \]
    where we assume that \(\lambda \leq \lambda_1\) and consequently \(d_1\in[1/2,1)\).
    
\end{proof}

\begin{lemma}
    \label{lemma29}
    Let \(f \in \mathcal{H}\), where \(\mathcal{H}\) is the reproducing kernel Hilbert space (RKHS) with respect to a kernel \(k\). Let \(\mathbf{f}=[f(x_1),...,f(x_n)]^\top\) and \(\mathbf{K}\) be the Gram matrix with respect to the kernel \(k\) and the samples. If \(\|f\|_{\mathcal{H}}\leq 1\), then we have
    \[\mathbf{f}^\top\mathbf{K}^{-1}\mathbf{f}\leq 1.\]
\end{lemma}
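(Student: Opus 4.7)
The plan is to exploit the reproducing property of $\mathcal{H}$ together with an orthogonal projection argument. Let $V := \mathrm{span}\{k(\cdot,x_1),\dots,k(\cdot,x_n)\} \subseteq \mathcal{H}$ denote the finite-dimensional subspace spanned by the kernel sections at the sample points, and let $P_V f$ be the orthogonal projection of $f$ onto $V$. Because each $k(\cdot,x_i) \in V$, the reproducing property gives
\[
(P_V f)(x_i) = \langle P_V f, k(\cdot,x_i)\rangle_{\mathcal{H}} = \langle f, k(\cdot,x_i)\rangle_{\mathcal{H}} = f(x_i),
\]
so $P_V f$ interpolates $f$ on the sample.

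Next I would expand $P_V f = \sum_{j=1}^{n} \alpha_j k(\cdot,x_j)$. The interpolation identity above, written in matrix form, yields $\mathbf{K}\alpha = \mathbf{f}$, so $\alpha = \mathbf{K}^{-1}\mathbf{f}$ whenever $\mathbf{K}$ is invertible. A direct computation of the RKHS norm then gives
\[
\|P_V f\|_{\mathcal{H}}^2 = \sum_{i,j} \alpha_i \alpha_j \langle k(\cdot,x_i), k(\cdot,x_j)\rangle_{\mathcal{H}} = \alpha^\top \mathbf{K}\alpha = \mathbf{f}^\top \mathbf{K}^{-1}\mathbf{f}.
\]
Since orthogonal projections are non-expansive, $\|P_V f\|_{\mathcal{H}} \le \|f\|_{\mathcal{H}} \le 1$, which delivers the desired bound $\mathbf{f}^\top \mathbf{K}^{-1}\mathbf{f} \le 1$.

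The only subtlety I anticipate is the invertibility of $\mathbf{K}$: if the sample points produce a rank-deficient Gram matrix, $\mathbf{K}^{-1}$ must be interpreted as the Moore–Penrose pseudoinverse. In that case I would replace $\mathbf{K}$ by $\mathbf{K}+\varepsilon \mathbf{I}$, run the same argument with the regularized interpolant $\alpha_\varepsilon = (\mathbf{K}+\varepsilon \mathbf{I})^{-1}\mathbf{f}$, and pass to the limit $\varepsilon\to 0^+$; alternatively, one can restrict attention to a maximal linearly independent subset of $\{k(\cdot,x_i)\}$ and observe that $\mathbf{f}$ lies in the column space of $\mathbf{K}$, so $\mathbf{f}^\top \mathbf{K}^{\dagger}\mathbf{f}$ equals the squared norm of the minimum-norm interpolant. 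Beyond this technicality, the argument is essentially a one-line application of Pythagoras in the RKHS, so no genuine obstacle should arise.
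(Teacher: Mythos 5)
Your proof is correct, but it takes a genuinely different route from the paper's. You give the classical abstract-RKHS argument: the orthogonal projection \(P_V f\) onto \(\mathrm{span}\{k(\cdot,x_i)\}\) interpolates \(f\) at the sample, its coefficients solve \(\mathbf{K}\alpha=\mathbf{f}\), its squared norm equals \(\mathbf{f}^\top\mathbf{K}^{-1}\mathbf{f}\), and non-expansiveness of the projection finishes the job; your remark that \(\mathbf{f}\) lies in the column space of \(\mathbf{K}\) correctly disposes of the rank-deficient case. The paper instead works with the explicit feature-map representation specific to this setting, \(f(x)=\mathbb{E}_w[\sigma(w^\top x)v(w)]\) with \(\mathbb{E}_w[v(w)^2]\le 1\): it shows the rank-one inequality \(\mathbf{f}\mathbf{f}^\top\preceq\mathbf{K}\) by applying Cauchy--Schwarz to \(\bigl(\mathbb{E}_w[\sum_i e_i\sigma(w^\top x_i)v(w)]\bigr)^2\) for an arbitrary test vector \(\boldsymbol{e}\), and then deduces \(\mathbf{f}^\top\mathbf{K}^{-1}\mathbf{f}\le 1\) from that Loewner-order statement via a trace manipulation. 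The two arguments buy different things: the paper's proof is self-contained given the integral representation and makes transparent exactly where the hypothesis \(\mathbb{E}_w[v(w)^2]\le 1\) enters (it is an upper bound on \(\|f\|_{\mathcal{H}}^2\), so the paper proves the lemma under a representation-level premise), whereas yours is the textbook proof valid for any RKHS and any \(f\) with \(\|f\|_{\mathcal{H}}\le 1\), independent of whether the kernel admits such a representation, and it identifies \(\mathbf{f}^\top\mathbf{K}^{-1}\mathbf{f}\) with the squared norm of the minimum-norm interpolant, which is conceptually cleaner. Either proof is acceptable here.
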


\begin{proof}
    If we prove that \(\mathbf{f}\cdot\mathbf{f}^\top\preceq \mathbf{K}\), then \[\mathbf{f}^\top\mathbf{K}^{-1}\mathbf{f}=\mathrm{Tr}(\mathbf{f}^\top\mathbf{K}^{-1}\mathbf{f})=\mathrm{Tr}(\mathbf{f}\cdot\mathbf{f}^\top\mathbf{K}^{-1})\leq \mathrm{Tr}(\mathbf{f}\cdot\mathbf{f}^\top(\mathbf{f}\cdot\mathbf{f}^\top)^{-1})\leq 1,\]
    where we use the notation \((\mathbf{f}\cdot\mathbf{f}^\top)^{-1}\) to represent the pseudo-inverse of the rank one matrix \(\mathbf{f}\cdot\mathbf{f}^\top\), and the last inequality holds because \(\mathbf{f}\cdot\mathbf{f}^\top\) has rank one.

    Now we prove \(\mathbf{f}\cdot\mathbf{f}^\top\preceq \mathbf{K}\). For all \(\boldsymbol{e} = (e_1,...,e_n)\), we have that
    \[
        \begin{split}
            \boldsymbol{e}^\top\mathbf{f}\cdot\mathbf{f}^\top \boldsymbol{e}=&\,(\mathbf{f}^\top \boldsymbol{e})^2
            =\left(\sum_{i=1}^{n}e_if(x_i)\right)^2\\
            =&\,\left(\sum_{i=1}^{n}e_i\mathbb{E}_{w}[\sigma(w^\top x_i)v(w)]\right)^2\\
            =&\,\left(\mathbb{E}_{w}\left[\sum_{i=1}^{n}e_i\sigma(w^\top x_i)v(w)\right]\right)^2\\
            \leq&\,\mathbb{E}_{w}\left[\sum_{i=1}^{n}e_i\sigma(w^\top x_i)\right]^2\mathbb{E}_{w}\left[v(w)^2\right]\\
            \leq&\,\mathbb{E}_{w}\left[\sum_{i=1}^{n}e_i\sigma(w^\top x_i)\right]^2\\
            =&\,\boldsymbol{e}^\top\mathbf{K} \boldsymbol{e},
        \end{split}
    \]
    where the first inequality follows from the Cauchy-Schwarz inequality, and the second inequality holds because of the assumption that \(\mathbb{E}_{w}\left[v(w)\right]^2\leq 1\).
    
\end{proof}

\begin{lemma}[Matrix Bernstein inequality (e.g., Corollary 7.3.3 in \citep{tropp2015introduction})]
\label{lemma26}
Let \(\mathbf{R}\) be a fixed \(d_1\times d_2\) matrix over the set of real numbers. Suppose that \(\{\mathbf{R}_1,...\mathbf{R}_n\}\) is an independent and identically distributed sample of \(d_1\times d_2\) matrices such that
\[
    \mathbb{E}[\mathbf{R}_i]=\mathbf{R},\quad \|\mathbf{R}_i\|_2\leq L,
\]
where \(L > 0\) is a constant independent of the sample. Furthermore, let \(\mathbf{M}_1,\mathbf{M}_2\) be semi-definite upper bounds for the matrix-valued variances
\[
\begin{split}
    &\mathrm{Var}_1[\mathbf{R}_i]\preceq\mathbb{E}[\mathbf{R}_i\mathbf{R}_i^\top]\preceq \mathbf{M}_1,\\
    &\mathrm{Var}_2[\mathbf{R}_i]\preceq\mathbb{E}[\mathbf{R}_i^\top\mathbf{R}_i]\preceq \mathbf{M}_2.\\
\end{split}
\]
Let \(M=\max(\|\mathbf{M}_1\|_2,\|\mathbf{M}_2\|_2)\) and \(D=\frac{\mathrm{Tr}(\mathbf{M}_1)+\mathrm{Tr}(\mathbf{M}_2)}{M}\). Then, for \(\epsilon\geq \sqrt{M/n}+2L/3n\), we can bound
\[
    \bar{\mathbf{R}}_n=\frac{1}{n}\sum_{i=1}^n\mathbf{R}_i
\]
around its mean using the concentration inequality
\[
    P(\|\bar{\mathbf{R}}_n-\mathbf{R}\|_2\geq \epsilon)\leq 4D\exp\left(-\frac{n\epsilon^2/2}{M+2L\epsilon/3}\right).
\]
\end{lemma}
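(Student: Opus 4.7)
The plan is to follow the standard matrix Laplace transform proof due to Tropp, combined with Lieb's concavity and the intrinsic-dimension refinement that replaces the ambient dimension factor $d_1+d_2$ by $D$. First I would reduce to the Hermitian case by symmetric dilation: set
\[
\tilde{\mathbf{S}}_i = \begin{pmatrix} 0 & \mathbf{R}_i - \mathbf{R} \\ (\mathbf{R}_i - \mathbf{R})^\top & 0 \end{pmatrix} \in \mathbb{R}^{(d_1+d_2)\times(d_1+d_2)}.
\]
These are i.i.d., Hermitian, centered, with $\|\tilde{\mathbf{S}}_i\|_2 = \|\mathbf{R}_i - \mathbf{R}\|_2 \le 2L$, and crucially $\lambda_{\max}(\sum_i \tilde{\mathbf{S}}_i) = \|\sum_i(\mathbf{R}_i - \mathbf{R})\|_2$. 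Moreover $\mathbb{E}[\tilde{\mathbf{S}}_i^2]$ is the block-diagonal matrix $\mathrm{diag}(\mathrm{Var}_1[\mathbf{R}_i], \mathrm{Var}_2[\mathbf{R}_i])$, which is dominated by $\mathbf{V} := \mathrm{diag}(\mathbf{M}_1, \mathbf{M}_2)$. For this $\mathbf{V}$, $\|\mathbf{V}\|_2 = M$ and $\mathrm{tr}(\mathbf{V})/\|\mathbf{V}\|_2 = D$, which is exactly the intrinsic dimension of the variance proxy.

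Next I would invoke the matrix Laplace transform method: for any $\theta>0$ and $t>0$,
\[
\mathbb{P}\!\left(\lambda_{\max}\!\big(\textstyle\sum_i \tilde{\mathbf{S}}_i\big) \ge t\right) \le e^{-\theta t}\,\mathbb{E}\!\left[\mathrm{tr}\exp\!\big(\theta\textstyle\sum_i \tilde{\mathbf{S}}_i\big)\right].
\]
Then I would apply Lieb's concavity theorem together with Jensen's inequality iteratively over the $n$ summands to decouple the expectation, yielding
\[
\mathbb{E}\!\left[\mathrm{tr}\exp\!\big(\theta\textstyle\sum_i \tilde{\mathbf{S}}_i\big)\right] \le \mathrm{tr}\exp\!\left(\textstyle\sum_i \log\mathbb{E}[e^{\theta \tilde{\mathbf{S}}_i}]\right).
\]
Using the classical Bernstein-type scalar control of the matrix cumulant generating function (valid because $\|\tilde{\mathbf{S}}_i\|\le 2L$ and $\mathbb{E}[\tilde{\mathbf{S}}_i]=0$), one obtains
\[
\log\mathbb{E}[e^{\theta \tilde{\mathbf{S}}_i}] \preceq \frac{\theta^2/2}{1-2L\theta/3}\,\mathbb{E}[\tilde{\mathbf{S}}_i^2], \qquad 0<\theta<\tfrac{3}{2L},
\]
so the sum is dominated by $g(\theta)\mathbf{V}$ with $g(\theta) = \tfrac{n\theta^2/2}{1-2L\theta/3}$.

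The heart of the argument, and the step that produces the $D$ prefactor rather than the ambient dimension, is the intrinsic-dimension trace-exponential inequality: for any PSD matrix $\mathbf{A}$, $\mathrm{tr}(e^{\mathbf{A}}-\mathbf{I}) \le \mathrm{intdim}(\mathbf{A})\cdot(e^{\|\mathbf{A}\|}-1)$, or equivalently the refined Laplace-transform bound
\[
\mathbb{P}(\lambda_{\max}(\mathbf{X}) \ge t) \le 2\,\mathrm{intdim}(\mathbf{V})\,\inf_{\theta>0} e^{-\theta t}\exp\!\big(g(\theta)\|\mathbf{V}\|\big)
\]
(Tropp, Lemma 7.5.1 / Theorem 7.3.1). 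Plugging in $\|\mathbf{V}\|_2=M$, $\mathrm{intdim}(\mathbf{V})=D$, and applying this to both $\sum_i \tilde{\mathbf{S}}_i$ and $-\sum_i \tilde{\mathbf{S}}_i$ (to cover operator norm via $\lambda_{\max}$ of the dilation) produces a prefactor $4D$.

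Finally I would optimize over $\theta$. Choosing $\theta = t/(M+2Lt/3)$ balances the two regimes and gives the exponent $-\tfrac{t^2/2}{M+2Lt/3}$. Rescaling by $t = n\epsilon$ (i.e.\ applying the tail bound to $\sum_i \tilde{\mathbf{S}}_i$ and dividing by $n$ to obtain $\bar{\mathbf{R}}_n - \mathbf{R}$) yields
\[
\mathbb{P}(\|\bar{\mathbf{R}}_n - \mathbf{R}\|_2 \ge \epsilon) \le 4D\exp\!\left(-\frac{n\epsilon^2/2}{M+2L\epsilon/3}\right),
\]
valid for $\epsilon \ge \sqrt{M/n}+2L/(3n)$, where the range of validity comes from ensuring the optimizing $\theta$ lies in $(0,3/(2L))$ and that the intrinsic-dimension refinement is nontrivial. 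The main obstacle is the third step: proving (or invoking) the intrinsic-dimension trace-exponential lemma is the nonstandard ingredient, since the usual matrix Bernstein proof only gives the coarse $(d_1+d_2)$ factor; everything else is bookkeeping around Lieb's theorem and scalar Bernstein MGF estimates. Since the paper cites this result from Tropp's monograph, the proof would essentially consist of recalling the dilation and MGF reduction and pointing to Tropp's Lemma 7.5.1 for the intrinsic-dimension step.
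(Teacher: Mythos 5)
The paper does not prove this lemma at all: it is imported verbatim as Corollary 7.3.3 of Tropp's monograph, so there is no in-paper argument to compare your sketch against. That said, your reconstruction is essentially Tropp's own proof and the ingredients are correctly identified: Hermitian dilation to reduce to the symmetric case, the matrix Laplace transform bound, Lieb's concavity theorem to subadditivize the matrix cumulant generating functions, the scalar Bernstein-type domination of \(\log\mathbb{E}[e^{\theta \tilde{\mathbf{S}}_i}]\), and the intrinsic-dimension trace-exponential lemma that replaces the ambient dimension \(d_1+d_2\) by \(D=\mathrm{Tr}(\mathbf{V})/\|\mathbf{V}\|\). You also correctly recognize that the last of these is the only nonstandard step. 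One small inaccuracy: the prefactor \(4D\) does not arise from applying the one-sided bound to both \(\sum_i\tilde{\mathbf{S}}_i\) and \(-\sum_i\tilde{\mathbf{S}}_i\). The spectrum of the Hermitian dilation is already symmetric (its eigenvalues are \(\pm\) the singular values of \(\mathbf{R}_i-\mathbf{R}\), padded with zeros), so \(\lambda_{\max}\) of the dilation equals the operator norm outright and no two-sided union bound is needed; the constant \(4\) is produced inside Tropp's master tail bound for the intrinsic-dimension refinement (his Lemma 7.5.1 and the ensuing estimate on \(e^{\theta t}/(e^{\theta t}-\theta t-1)\), which is also where the restriction \(\epsilon\ge\sqrt{M/n}+2L/3n\) enters). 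Since the paper treats this as a black-box citation, your sketch is more than adequate; just correct the attribution of the factor \(4\) if you ever write it out in full.
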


\section{PROOFS OF THE LEARNING RISKS OF THE ALGORITHM}
\label{algoproof}
In this part, we present the proof of Theorem \ref{thm41}. The proof of Theorem \ref{thm42} is parallel and therefore omitted.

\begin{proof}
Let the minimizer of (\ref{mzer}) in the plain sampling scheme be \(\hat{f}_0\). Theorem \ref{thm31} and Corollary \ref{cor2} show that for all \(\delta\in (0,1)\), with probability \(1-\delta\), if
\[
  s\geq \frac{5{\|\tilde{\sigma}\|_{\infty}^2}}{\lambda}\log{\frac{16d_{\widetilde{\mathbf{K}}}^{\lambda}}{\delta}},
\]
then the learning risk is bounded by
\begin{equation}
  \label{pse}
  \mathbb{E}[l_{\hat{f}_0}]\leq 4\lambda + O\left(\frac{1}{h\sqrt{n}}\right) + 2\epsilon + \mathbb{E}[l_{f^*}].
\end{equation}

Let the empirical Gram matrix in the plain sampling scheme be \(\widehat{\mathbf{K}}=\frac{1}{s}\mathbf{Z}_p(\tilde{a})\mathbf{Z}_p(\tilde{a})^\top\) where \(p(w)\) is the spectral density function and \(\tilde{a}\) is the optimal parameter in problem (\ref{mzer}). Let \(\tilde{\sigma}(x):=\sum_{i=1}^{N}\tilde{a}_iB_i(x)\) and define \(\hat{k}(x, y) = \frac{1}{s}\sum_{m=1}^{s}\tilde{\sigma}(w_m^\top x)\tilde{\sigma}(w_m^\top y) = \int\tilde{\sigma}(w^\top x)\tilde{\sigma}(w^\top y)d\hat{P}(w)\), where \(\hat{P}\) is the empirical measure on \(\{w_m\}_{m=1}^{s}\). Denote the reproducing kernel Hilbert space associated with kernel \(\hat{k}\) by \(\hat{\mathcal{H}}\). We now treat \(\hat{k}\) as the actual kernel to learn through leverage weighted sampling scheme. Let \(f_{\hat{\mathcal{H}}}\) be the function in the reproducing kernel Hilbert space \(\hat{\mathcal{H}}\) achieving the minimal risk, i.e., \(\mathbb{E}[l_{f_{\hat{\mathcal{H}}}}] = \inf_{f\in\hat{\mathcal{H}}}\mathbb{E}[l_f]\). The corresponding leverage score function then is
\[
    l_{\lambda}(w) := p(w)\tilde{\sigma}(Xw)^\top( \widehat{\mathbf{K}}+n\lambda \mathbf{I})^{-1}\tilde{\sigma}(Xw).
\]
Therefore, we have
\[
    l_{\lambda}(w_i) := p(w)[\tilde{\sigma}(XW)^\top( \widehat{\mathbf{K}}+n\lambda \mathbf{I})^{-1}\tilde{\sigma}(XW)]_{ii},
\]
where \([A]_{ii}\) denotes the \(i\)-th diagonal element of matrix \(A\). To sample according to the leverage score function, let \(p_i=[\tilde{\sigma}(XW)^\top( \widehat{\mathbf{K}}+n\lambda \mathbf{I})^{-1}\tilde{\sigma}(XW)]_{ii}\) and normalize them to \(q(w_i)=p_i/\sum_{i=1}^{s}p_i\). Then the algorithm resamples \(S\) features according to \(q(w)\) and solves the optimization problem (\ref{mzer}) with the weighted RFLAF. Let the minimizer in the weighted setting be \(\hat{f}\). By applying Theorem \ref{thm31} and Corollary \ref{cor1}, we have that for all \(\delta\in (0,1)\), with probability \(1-\delta\), if
\[
  S\geq 5d_{\widehat{\mathbf{K}}}^{\lambda^*}\log{\frac{16d_{\widehat{\mathbf{K}}}^{\lambda^*}}{\delta}},
\]
then the learning risk is bounded by
\begin{equation}
  \label{lwse}
  \mathbb{E}[l_{\hat{f}}]\leq 4\lambda^* + O\left(\frac{1}{h\sqrt{n}}\right) + 2\epsilon + \mathbb{E}[l_{f_{\hat{\mathcal{H}}}}].
\end{equation}
Because \(f_{\hat{\mathcal{H}}}\) is the minimizer achieving the minimal risk over \(\hat{\mathcal{H}}\), we have that \(\mathbb{E}[l_{f_{\hat{\mathcal{H}}}}]\leq \mathbb{E}[l_{\hat{f}_0}]\). By combining Eqs. (\ref{pse}) and (\ref{lwse}), we obtain the desired result.

\end{proof}

\section{SUPPLEMENTARY DETAILS OF THE EXPERIMENTS}
\label{expdetail}

\paragraph{Benchmark datasets} The datasets used to validate the algorithm can be found in the following sources.

\begin{table}[htbp]
	\caption{Sources of the datasets.}
	\label{urltable}
	\vskip 0.15in
	\begin{center}
	\begin{small}
	\begin{tabular}{l|p{13cm}}
	\toprule
	Datasets & Sources \\
	\midrule
    CIFAR-10 & \texttt{torchvision.datasets} in Python \\
	adult    &  \url{https://archive.ics.uci.edu/ml/machine-learning-databases/adult/adult.data}\\
	protein    & \url{https://archive.ics.uci.edu/ml/machine-learning-databases/00265/CASP.csv} \\
	workloads      &  \url{https://archive.ics.uci.edu/ml/machine-learning-databases/00493/datasets.zip}\\
	\bottomrule
	\end{tabular}
	\end{small}
	\end{center}
	\vskip -0.1in
\end{table}

\paragraph{Other details} The optimization problem of (\ref{mzer}) and (\ref{matrixsensing}) can be solved using common gradient descent methods. In our experiments, we simply use stochastic gradient descent (SGD) with learning rate 1e-3. The models can be implemented within the framework of PyTorch. And we run experiments on GeForce RTX 3090 GPU for faster computation through CUDA. Each experiment is repeated with 8 different seeds, namely 8 different sets of initial random features, and the results of mean values are illustrated in the pictures with $80\%$ confidence intervals. 

\paragraph{Figures of accuracies for the classification tasks} We show Figure \ref{fig3} and \ref{fig4} with accuracy as the evaluation metric corresponding to Figure \ref{fig1} and \ref{fig2} respectively for the classification tasks.

\begin{figure*}[htbp]
\centering
\vspace{.3in}
\includegraphics[width=0.6\linewidth]{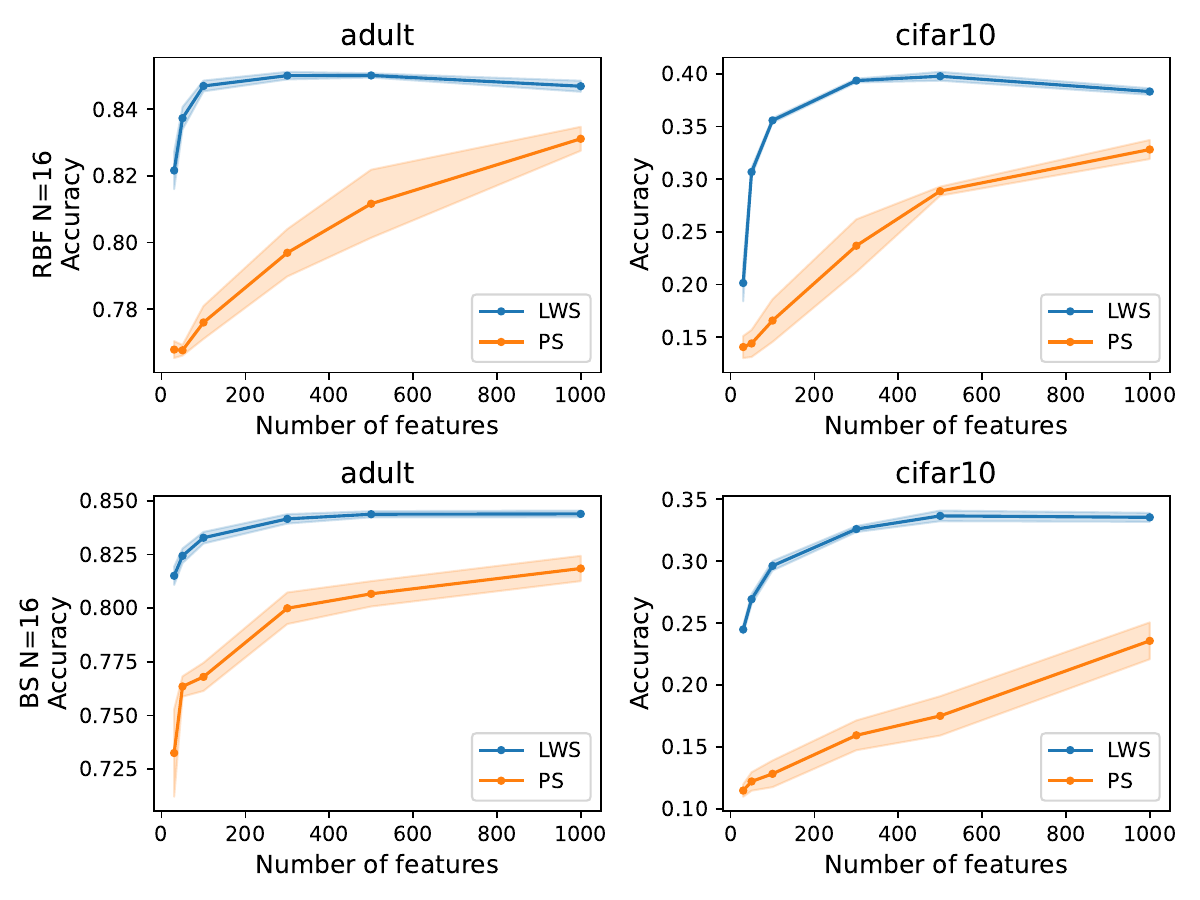}
\vspace{.3in}
\caption{\centering{Comparison of Leverage Weighted and Plain Sampling Schemes in Accuracies\\ for RFLAF of Different Basis Functions (RBF in Line 1, B-spline in Line 2)}}
\label{fig3}
\end{figure*}

\begin{figure*}[tbp]
\vspace{.3in}
\centering
\includegraphics[width=0.6\linewidth]{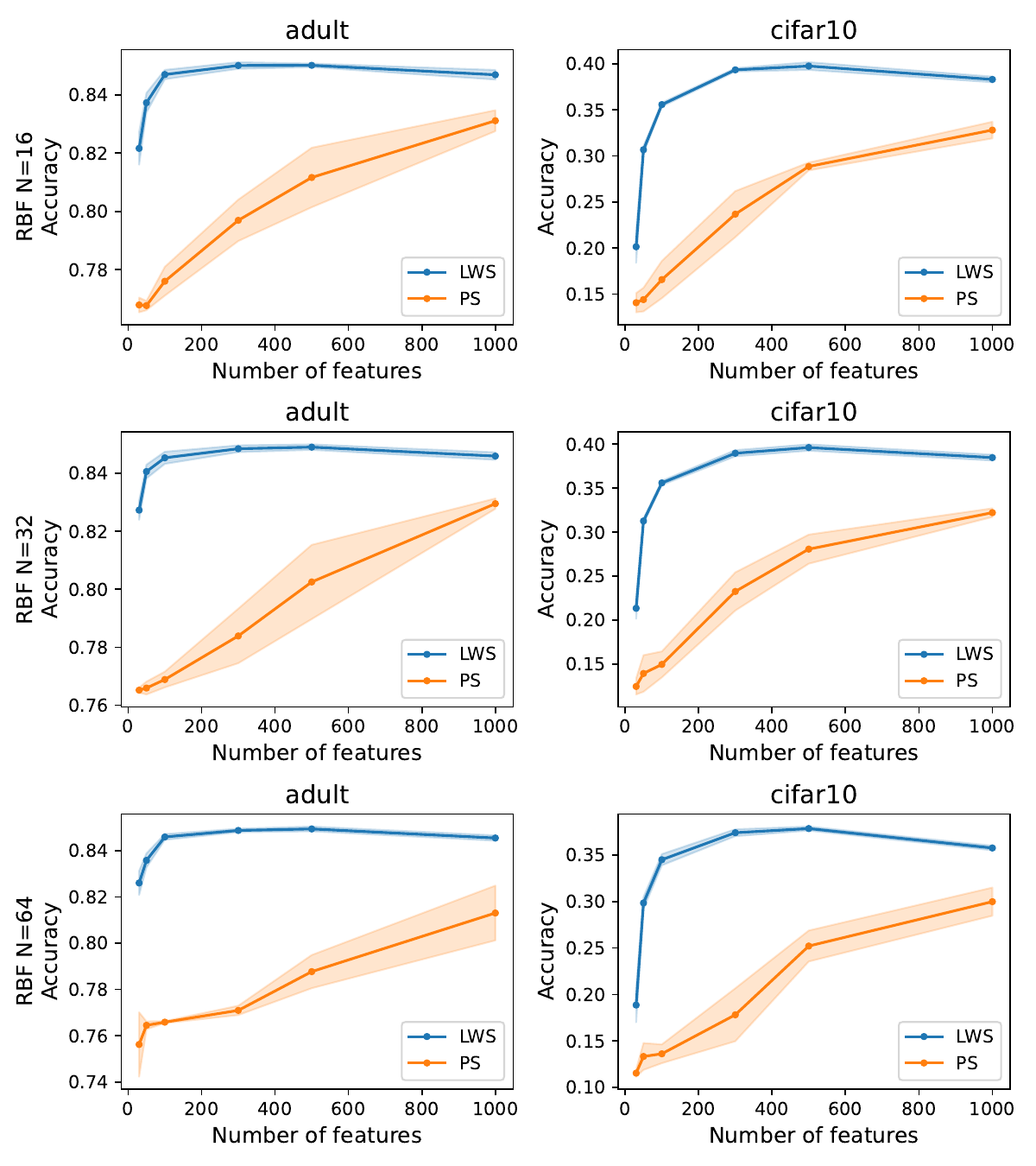}
\vspace{.3in}
\caption{\centering{Comparison of Leverage Weighted and Plain Sampling Schemes in Accuracies\\ for RFLAF of Different Grid Numbers ($N=16,32,64$ from top to bottom)}}
\label{fig4}
\end{figure*}

\end{document}